\newtheorem{theorem}{Theorem}
\newtheorem{corollary}{Corollary}
\newtheorem{lemma}{Lemma}
\newtheorem{proposition}{Proposition}
\DeclareMathOperator{\argmin}{arg\,min}
\DeclareMathOperator{\argmax}{arg\,max}
\title{Analysis of Approximate Linear Programming Solution to Markov Decision Problem with Log Barrier Function}
\author{Donghwan Lee, Hyukjun Yang, \& Bum Geun Park \\
Department of Electrical Engineering\\
Korea Advanced Institute of Science and Technology\\
Daejeon, 34141, South Korea \\
\texttt{\{donghwan,jundol32,j4t123\}@kaist.ac.kr} \\
}
\begin{document}

\maketitle

\begin{abstract}
There are two primary approaches to solving Markov decision problems (MDPs): dynamic programming based on the Bellman equation and linear programming (LP). Dynamic programming methods are the most widely used and form the foundation of both classical and modern reinforcement learning (RL). By contrast, LP-based methods have been less commonly employed, although they have recently gained attention in contexts such as offline RL. The relative underuse of the LP-based methods stems from the fact that it leads to an inequality-constrained optimization problem, which is generally more challenging to solve effectively compared with Bellman-equation-based methods. The purpose of this paper is to establish a theoretical foundation for solving LP-based MDPs in a more effective and practical manner. Our key idea is to leverage the log-barrier function, widely used in inequality-constrained optimization, to transform the LP formulation of the MDP into an unconstrained optimization problem. This reformulation enables approximate solutions to be obtained easily via gradient descent. While the method may appear simple, to the best of our knowledge, a thorough theoretical interpretation of this approach has not yet been developed. This paper aims to bridge this gap.
\end{abstract}

\section{Introduction}

There are two primary approaches to solving Markov decision problems (MDPs): dynamic programming methods~\citep{bertsekas1996neuro,puterman2014markov} based on the Bellman equation and linear programming (LP) methods~\citep{puterman2014markov,de2003linear,ghate2013linear,ying2020note}. Dynamic programming is by far the most widely used approach and constitutes the foundation of both classical and modern reinforcement learning (RL)~\citep{sutton1998reinforcement}. In contrast, LP-based methods have been employed less frequently~\citep{wang2016online,chen2016stochastic,lee2019stochastic,lee2018stochastic,chen2018scalable,serrano2020faster,neu2023efficient,nachum2020reinforcement,bas2021logistic,lu2021convex,lu2022convex} in RL, though they have recently gained traction in contexts such as offline reinforcement learning~\citep{ozdaglar2023revisiting,zhan2022offline,gabbianelli2024offline,kamoutsi2021efficient,sikchi2024dual}. The relative underuse of LP formulations can be attributed to the fact that they result in inequality-constrained optimization problems, which are generally more difficult to solve effectively compared to Bellman-equation-based methods.

Recently, the LP formulation of MDPs has recently received renewed interest, especially in the offline-RL literature~\citep{ozdaglar2023revisiting,zhan2022offline,gabbianelli2024offline,kamoutsi2021efficient,sikchi2024dual}, because it offers several advantages over Bellman-equation–based approaches. Unfortunately, LP-based RL methods typically rely on primal–dual schemes~\citep{wang2016online,chen2016stochastic,lee2019stochastic,lee2018stochastic,chen2018scalable,serrano2020faster,neu2023efficient,nachum2020reinforcement,bas2021logistic,lu2021convex,lu2022convex}, which are known to often exhibit relatively slow convergence and higher computational costs in practice compared to standard RL approaches. For these reasons, we believe there is a clear need to develop effective alternatives for solving the LP form.

The purpose of this paper is to establish a theoretical foundation for addressing LP-based MDPs in a more effective and practical manner. The key idea is to employ the log-barrier function~\citep{Boyd2004}, widely used in inequality-constrained optimization, to reformulate the LP representation of the MDP into an unconstrained optimization problem. This reformulation allows approximate solutions to be obtained efficiently using gradient descent~\citep{nesterov2018lectures,bertsekas1999nonlinear}. While this approach may appear simple, to the best of our knowledge, a comprehensive theoretical interpretation has not yet been developed. This paper aims to bridge this gap.

More specifically, we investigate the single-objective function $f_\eta$ induced by the log-barrier formulation with the barrier parameter $\eta>0$, whose minimizer yields an approximate solution, $\tilde Q_\eta$, to the original MDP. This approximate solution, $\tilde Q_\eta$, corresponds to an approximate optimal Q-function. We first conduct an error analysis, deriving not only an upper bound but also a lower bound on the error norm, $\|\tilde Q_\eta- Q^* \|_\infty$, between the approximate solution, $\tilde Q_\eta$, and the true optimal Q-function $Q^*$. These bounds depend linearly on the log-barrier parameter $\eta$, which implies that both the upper and lower bounds decrease linearly as $\eta$ becomes smaller. Beyond the error norm, we also establish error bounds for the MDP objective function $J^\pi$ itself. In particular, our framework yields both a primal approximate solution $\tilde Q_\eta$ and a dual approximate solution $\tilde \lambda_\eta$, from which the corresponding primal policy and dual policy are derived. For each case, we derive bounds on the deviation of their objective values from the optimal objective value, and these bounds likewise diminish linearly with $\eta$.

In addition, we establish several properties of the objective function $f_\eta$, including its convexity, properties of its domain, and properties of its sublevel set. As mentioned earlier, the approximate LP solution can be obtained via gradient descent, and we provide an analysis of the convergence behavior of this gradient descent method. Lastly, we explore the applicability and extension of the proposed theoretical foundation to deep RL. Specifically, we introduce a novel loss function, derived from the log-barrier formulation, which serves as an alternative to the conventional mean-squared-error Bellman loss in deep Q-network (DQN)~\citep{mnih2015human} and deep deterministic policy gradient (DDPG)~\citep{lillicrap2015continuous}. This yields a new deep RL algorithm within the DQN and DDPG frameworks. The effectiveness of the proposed method is demonstrated through comparative evaluations with standard DQN and DDPG across multiple OpenAI Gym benchmark tasks. The experimental results demonstrate that the proposed method performs on par with conventional DQN across the evaluated environments, and achieves markedly superior performance in specific tasks. In addition, experimental results show that incorporating the proposed method into DDPG yields markedly improved learning performance compared to the conventional DDPG algorithm in a wide range of tasks.
Finally, the main contributions are briefly summarized as follows: (1) Log-barrier LP for MDPs \& error bounds. Introduce a novel log-barrier formulation of the MDP LP and derive rigorous error bounds for the approximate solution, explicitly quantifying how the approximation error scales with the barrier weight $\eta$. (2) Analytic properties \& convergence. Prove structural properties of the objective (convexity, local strong convexity, local Lipschitzness, convex feasible domain) and show exponential convergence of deterministic gradient descent in the tabular setting. (3) Preliminary deep-RL evaluation. Propose a deep-RL variant (log-barrier loss) and provide empirical results showing stable training and, in several environments, superior performance to standard deep-RL baselines.

\section{Related works}
Research on MDPs has traditionally been dominated by dynamic programming (DP) methods based on the Bellman equation~\citep{bertsekas1996neuro,puterman2014markov,sutton1998reinforcement}, which underpin classical RL algorithms and modern deep RL methods such as DQN~\citep{mnih2015human}, but these approaches can become less flexible in large-scale or constrained settings. As an alternative, linear programming (LP) formulations of MDPs~\citep{puterman2014markov} have been studied extensively:~\citet{de2003linear} introduced approximate linear programming (ALP), which reduces the number of decision variables by using linear function approximations;~\citet{malek2014linear} further exploited the dual LP with stochastic convex optimization methods in the average-cost setting, showing performance guarantees relative to a restricted policy class;~\citet{lakshminarayanan2017linearly} proposed the linearly relaxed ALP (LRALP), which alleviates computational load by projecting constraints into a lower-dimensional subspace while controlling approximation error. More recent developments have advanced convex formulations such as convex Q-learning~\citep{lu2021convex,lu2022convex}, logistic Q-learning~\citep{bas2021logistic}, and primal-dual algorithms~\citep{wang2016online,lee2018stochastic,serrano2020faster,neu2023efficient} with growing attention in offline RL where environment interaction is limited~\citep{nachum2020reinforcement,zhan2022offline,ozdaglar2023revisiting,gabbianelli2024offline}. In parallel, optimization and RL communities have explored barrier-based techniques: the log-barrier method is a classical tool in convex optimization~\citep{Boyd2004,nesterov2018lectures,bertsekas1999nonlinear}, and has recently been adapted for safe RL, e.g.,~\citet{zhang2024constrained,zhang2024constrained2} introduced a constrained soft actor-critic variant using a smoothed log-barrier for stable constraint handling in continuous control tasks. Despite these advances, existing LP-based approaches have not leveraged barrier functions to resolve inequality-constrained formulations, and existing barrier-based RL methods have not been applied to the LP representation of MDPs. Our work closes this gap by introducing a log-barrier reformulation of the LP approach to MDPs, yielding an unconstrained objective amenable to gradient-based optimization while retaining the structural advantages of LP formulations.

\section{Preliminaries}

\subsection{Markov decision problem}
We consider the infinite-horizon discounted Markov decision problem~\citep{puterman2014markov} and Markov decision process, where the agent sequentially takes actions to maximize cumulative discounted rewards. In a Markov decision process with the state-space ${\cal S}:=\{ 1,2,\ldots ,|{\cal S}|\}$ and action-space ${\cal A}:= \{1,2,\ldots,|{\cal A}|\}$, where $|{\cal S}|$ and $|{\cal A}|$ denote cardinalities of each set, the decision maker selects an action $a \in {\cal A}$ at the current state $s\in {\cal S}$, then the state
transits to the next state $s'\in {\cal S}$ with probability $P(s'|s,a)$, and the transition incurs a
reward $r(s,a,s') \in {\mathbb R}$, where $P(s'|s,a)$ is the state transition probability from the current state
$s\in {\cal S}$ to the next state $s' \in {\cal S}$ under action $a \in {\cal A}$, and $r(s,a,s')$ is the reward function. For convenience, we consider a deterministic reward function and simply write $r(s_k,a_k ,s_{k + 1}) =:r_{k+1},k \in \{ 0,1,\ldots \}$. A deterministic policy, $\pi :{\cal S} \to {\cal A}$, maps a state $s \in {\cal S}$ to an action $\pi(s)\in {\cal A}$. The objective of the Markov decision problem is to find an optimal policy, $\pi^*$, such that the cumulative discounted rewards over infinite time horizons is maximized, i.e., $\pi^*:= \argmax_{\pi\in \Theta} {\mathbb E}\left[\left.\sum_{k=0}^\infty {\gamma^k r_{k+1}}\right|\pi\right]$, where $\gamma \in [0,1)$ is the discount factor, $\Theta$ is the set of all deterministic policies, $(s_0,a_0,s_1,a_1,\ldots)$ is a state-action trajectory generated by the Markov chain under policy $\pi$, and ${\mathbb E}[\cdot|\pi]$ is an expectation conditioned on the policy $\pi$. Moreover, Q-function under policy $\pi$ is defined as $Q^{\pi}(s,a)={\mathbb E}\left[ \left. \sum_{k=0}^\infty {\gamma^k r_{k+1}} \right|s_0=s,a_0=a,\pi \right], (s,a)\in {\cal S} \times {\cal A}$, and the optimal Q-function is defined as $Q^*(s,a)=Q^{\pi^*}(s,a)$ for all $(s,a)\in {\cal S} \times {\cal A}$. Once $Q^*$ is known, then an optimal policy can be retrieved by the greedy policy $\pi^*(s)=\argmax_{a\in {\cal A}}Q^*(s,a)$. Throughout, we assume that the Markov decision process is ergodic so that the stationary state distribution exists. In this paper, we define an upper bound of the reward function as $|r(s,a,s')| \le {r_{\max }}, (s,a,s') \in {\cal S} \times {\cal A} \times {\cal S}$.

\subsection{LP formulation of MDP based on Q-function}
In this paper, for the sake of clarity and brevity, the majority of technical proofs are presented in Appendix. It is well known that a Markov decision problem (MDP) can be formulated as a linear program (LP)~\citep{luenberger1984linear,puterman2014markov}. While the LP formulation is typically expressed in terms of the value function~\citep{puterman2014markov}, one can also consider an LP formulation based on the Q-function. To this end, let us consider the following LP:
\begin{align}
&\mathop{\min }\limits_{Q \in {\mathbb R}^{|{\cal S}||{\cal A}|}} \mathop{\sum}\limits_{(s,a) \in {\cal S} \times {\cal A}} {\rho (s,a)Q(s,a)}\label{eq:primal-LP1}\\
&{\rm subject\,\, to}\nonumber\\
&R(s,a) + \gamma \mathop{\sum}\limits_{s' \in {\cal S}} {P(s'|s,a)Q(s',a')}  \le Q(s,a),\quad (s,a,a') \in {\cal S} \times {\cal A} \times {\cal A},\nonumber
\end{align}
where $R(s,a)$ is the expected reward conditionend on $(s,a)\in {\cal S}\times {\cal A}$, $\rho$ denotes any probability distribution over ${\cal S} \times {\cal A}$ with strictly positive support. For convenience, in this paper, we define the following Bellman operators
\begin{align*}
(TQ)(s,a):=& R(s,a) + \gamma \mathop{\sum}\limits_{s' \in {\cal S}} {P(s'|s,a){{\max }_{a' \in {\cal A}}}Q(s',a')}\\
(FQ)(s,a,a'):=& R(s,a) + \gamma \mathop{\sum}\limits_{s' \in {\cal S}} {P(s'|s,a)Q(s',a')}.
\end{align*}
Note that we can always find a strictly feasible solution of the above LP.
For instance, if we choose $Q(s,a) = \frac{{{r_{\max }} + \varepsilon }}{{1 - \gamma }} > 0$, $(s,a) \in {\cal S} \times {\cal A}$ with any $\varepsilon>0$, then $R(s,a) + \gamma \sum_{s' \in {\cal S}} {P(s'|s,a)Q(s',a')}  - Q(s,a)= R(s,a) - {r_{\max }} - \varepsilon  < 0$. The LP formulation~\cref{eq:primal-LP1} constructed on the basis of the Q-function~\citep{lee2019stochastic,lee2018stochastic} has not been extensively studied compared to the LP formulation based on the value function.
Although the LP formulation involving the Q-function was considered in~\citet{lee2019stochastic,lee2018stochastic}, it differs significantly from the LP discussed above. In particular, the LP in~\citet{lee2019stochastic,lee2018stochastic} involves not only the Q-function but also an additional value function, thereby employing a somewhat more indirect approach compared to the above formulation.
Accordingly, we begin by briefly introducing several theoretical properties and interpretations of this formulation, prior to presenting our main results. As a preliminary but fundamental result, it is straightforward to show that the solution of the above LP is unique and corresponds to the optimal Q-function, $Q^*$ (\cref{sec:app:LP-opt1}). In addition, the dual~\citep[Chapter~5]{Boyd2004} of the above LP can be derived in the following form (\cref{sec:app:dual-LP1}).
\begin{lemma}\label{lemma:dual-LP}
The dual problem of the LP~\cref{eq:primal-LP1} is given by
\begin{align}
&{\max _{\lambda  \ge 0}}\mathop{\sum}\limits_{(s,a,a') \in {\cal S} \times {\cal A} \times {\cal A}} {\lambda (s,a,a')R(s,a)}\label{eq:dual-LP1}\\
&{\rm subject\,\, to}\nonumber\\
&\mathop{\sum}\limits_{i \in {\cal A}} {\lambda (s,a,i)}  - \gamma \mathop{\sum}\limits_{(i,j) \in {\cal S} \times {\cal A}} {P(s|i,j)\lambda (i,j,a)}  = \rho (s,a),\quad \forall (s,a) \in {\cal S} \times {\cal A}.\label{eq:dual-constraint1}
\end{align}
\end{lemma}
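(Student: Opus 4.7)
The plan is to derive the dual by the standard Lagrangian route for an LP with inequality constraints, treating $Q$ as a free variable. Since the primal is a minimization over unconstrained $Q \in \mathbb{R}^{|\mathcal{S}||\mathcal{A}|}$ with constraints of the form $(FQ)(s,a,a') - Q(s,a) \le 0$, we introduce one nonnegative multiplier $\lambda(s,a,a') \ge 0$ for each triple $(s,a,a') \in \mathcal{S} \times \mathcal{A} \times \mathcal{A}$, and form
\begin{equation*}
L(Q,\lambda) = \sum_{(s,a)} \rho(s,a) Q(s,a) + \sum_{(s,a,a')} \lambda(s,a,a') \Bigl[ R(s,a) + \gamma \sum_{s'} P(s'|s,a) Q(s',a') - Q(s,a) \Bigr].
\end{equation*}
The dual function is $g(\lambda) = \inf_{Q} L(Q,\lambda)$ and the dual problem is $\max_{\lambda \ge 0} g(\lambda)$.

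Next, I would collect the coefficients of each $Q(s,a)$ in $L(Q,\lambda)$ to exploit unconstrainedness. The variable $Q(s,a)$ appears in three places: the primal objective (coefficient $\rho(s,a)$), the $-Q(s,a)$ term (contributing $-\sum_{a' \in \mathcal{A}} \lambda(s,a,a')$ when we sum the multipliers over the free index $a'$), and inside $\gamma P(s'|i,j) Q(s',a')$, where $Q(s,a)$ arises whenever $s'=s$ and $a'=a$ (contributing $\gamma \sum_{(i,j)} P(s|i,j)\lambda(i,j,a)$). Summing these three pieces, the coefficient of $Q(s,a)$ is
\begin{equation*}
\rho(s,a) - \sum_{i \in \mathcal{A}} \lambda(s,a,i) + \gamma \sum_{(i,j) \in \mathcal{S}\times\mathcal{A}} P(s|i,j) \lambda(i,j,a).
\end{equation*}
Because $Q(s,a) \in \mathbb{R}$ is free, $\inf_{Q} L(Q,\lambda) = -\infty$ unless each such coefficient vanishes. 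Setting the coefficient to zero for every $(s,a)$ produces exactly the equality constraint~\cref{eq:dual-constraint1}.

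Once the linear-in-$Q$ part vanishes, the only surviving term in $L(Q,\lambda)$ is $\sum_{(s,a,a')} \lambda(s,a,a') R(s,a)$, which becomes $g(\lambda)$ on the feasible set of dual variables and $-\infty$ otherwise. Combined with $\lambda \ge 0$, this yields precisely~\cref{eq:dual-LP1}. I do not foresee a serious obstacle beyond careful index bookkeeping in the step that shows $Q(s,a)$ inherits the contribution $\gamma \sum_{(i,j)} P(s|i,j) \lambda(i,j,a)$ from the constraints indexed by other triples; the key is to relabel dummy indices consistently so that the sum over $(i,j)$ in the dual constraint matches the transition term arising from the original Bellman-like inequality. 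Strong duality (and hence the existence of a dual optimum) follows from the LP being feasible with a strictly feasible primal point, as already observed in the preamble with $Q(s,a) = (r_{\max}+\varepsilon)/(1-\gamma)$, but is not needed merely to derive the dual form.
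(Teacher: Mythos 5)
Your proposal is correct and follows essentially the same route as the paper: form the Lagrangian with nonnegative multipliers $\lambda(s,a,a')$, collect the coefficient of each free variable $Q(s,a)$, and observe that the inner minimization is finite only when every coefficient vanishes, which yields exactly the equality constraint~\cref{eq:dual-constraint1} and leaves the reward term as the dual objective. The index bookkeeping you flag as the only delicate step is handled correctly, so there is nothing to add.
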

The original LP in~\cref{eq:primal-LP1} is referred to as the primal LP (or primal problem), while the above LP~\cref{eq:dual-LP1} is called the dual LP (or dual problem). The variable $Q$ in the primal LP is referred to as the primal variable, while the variable $\lambda$ in the dual LP is called the dual variable.
We can examine several important properties and interpretations of the dual LP. For instance, the optimal dual variable $\lambda^*$ corresponds to a probability distribution, which represents the stationary state–action–next-action distribution under the optimal policy $\pi^*$ constructed from the dual variable as follows:
\begin{align*}
\pi^* ( \cdot |s): = \left[ {\begin{array}{*{20}{c}}
{\frac{\lambda^* (s,1)}{{\sum_{a \in {\cal A}} {\lambda^* (s,a)} }}}&{\frac{\lambda^* (s,2)}{{\sum_{a \in {\cal A}} {\lambda^* (s,a)} }}}& \cdots &{\frac{\lambda^* (s,|{\cal A}|)}{{\sum_{a \in {\cal A}} {\lambda^* (s,a)} }}}
\end{array}} \right],
\end{align*}
where ${\lambda ^*}(s,a): = \sum_{a' \in {\cal A}} {{\lambda ^*}(s,a,a')}$. We note that when the optimal policy is deterministic, then the above policy becomes a one-hot vector indicating the optimal action~\citep{chen2016stochastic}. Similarly, if $Q^*$ is the primal optimal solution (the solution of~\cref{eq:primal-LP1}), then $\beta^*(s) := \argmax _{a \in {\cal A}}Q^*(s,a)$ likewise induces an optimal policy. Additional details can be found in Appendix (\cref{sec:app:dual-feasi1,sec:app:dual-feasi2}).

\section{Log-barrier function approach}

In the previous section, we provided a brief discussion of the LP formulation in~\cref{eq:primal-LP1} based on the Q-function. We now turn to the main results of this paper. First of all, note that~\cref{eq:primal-LP1} involves inequality constraints. A common approach to handling such constraints is to introduce the Lagrangian together with Lagrange multipliers, or dual variables~\citep{bertsekas1999nonlinear,Boyd2004}. One then seeks the primal and dual variables through first-order primal–dual iterations~\citep{kojima1989primal}. However, this method typically suffers from slow convergence and does not, in general, guarantee convergence in many practical settings~\citep{applegate2021practical}.
Another practical and widely used approach to handling inequality constraints is the use of barrier functions, most notably the log-barrier function~\citep{Boyd2004}. This method imposes a large (or even infinite) penalty on variables that violate the inequality constraints, thereby forcing the iterates to remain within the feasible region while enabling the optimization problem to be solved.
In this paper, we apply the log-barrier function to the LP formulation in~\cref{eq:primal-LP1}, and we undertake an in-depth study and interpretation of this barrier-based approach to solving MDPs. This line of research represents an approach that has not yet been addressed in the existing literature.

The log-barrier function is a classical tool in constrained optimization used to handle inequality constraints. For a constraint of the form $g(x) \le 0$, the log-barrier introduces a penalty term $\eta \varphi(x) := -\eta \ln(-g(x))$, where $\eta >0$ is a barrier parameter. This function approaches infinity as $g(x)$ gets close to zero, thereby preventing the iterates from leaving the feasible region. As $\eta$ decreases, the solution of the barrier-augmented problem converges to the solution of the original constrained optimization problem. Moreover, one can prove that the log-barrier function $\varphi(x) := -\ln(-x)$ is strictly convex in its domain $\{x\in {\mathbb R}: x<0\}$. Using the log-barrier function, the inequality constraints can be integrated into a single objective function as follows:
\begin{align}
{f_\eta }(Q): = \mathop{\sum}\limits_{(s,a) \in {\cal S} \times {\cal A}} {Q(s,a)\rho (s,a)}  + \eta \mathop{\sum}\limits_{(s,a,a') \in {\cal S} \times {\cal A} \times {\cal A}} {w(s,a,a')\varphi \left( {(FQ)(s,a,a') - Q(s,a)} \right)},\label{eq:f}
\end{align}
where $\eta >0$ is the barrier parameter (weight) and $w(s,a,a')>0,(s,a,a')\in {\cal S} \times {\cal A}\times {\cal A}$ are weight parameters of the inequality constraints, which are introduced in order to consider random sampling approaches in stochastic implementations later. For instance, in the deep-RL variant later, $w$ should be interpreted as the empirical distribution of state–action pairs induced by the replay buffer or by mini-batch sampling.
For instance, we can set $w(s,a,a')=1$ for all $(s,a,a')\in {\cal S}\times {\cal A}\times {\cal A}$. The objective function $f_\eta$ has the following domain:
\begin{align*}
{\cal D}: = \left\{ Q \in {\mathbb R}^{|{\cal S}||{\cal A}|}:(FQ)(s,a,a') - Q(s,a) < 0,(s,a,s') \in {\cal S} \times {\cal A} \times {\cal A} \right\},
\end{align*}
which can be also seen as the strictly feasible set. Moreover, it can be shown that $\cal D$ is convex, open, bounded below, and unbounded above. Moreover, the objective function $f_\eta$ is strictly convex in $\cal D$. To proceed, let us define the level set of the objective function for $c>0$
\begin{align*}
{\cal L}_c: = \{ Q \in {\cal D}:{f_\eta }(Q) \le c\}.
\end{align*}
We can also establish that ${\cal L}_c$ is convex, closed, bounded, and $f_\eta$ is strongly convex and has Lipschitz continuous gradient in ${\cal L}_c$ with any $c>0$. The detailed theoretical analysis is given in~\cref{sec:app:property-domain1,sec:app:property-level1,sec:app:property-level2}.

The introduction of the log-barrier function enables us to reformulate the MDP as an unconstrained optimization problem. Consequently, a natural approach to address this problem is to employ gradient descent. For this purpose, we first establish the closed-form expression of the gradient presented in the following lemma, which can be proved via direct calculations.
\begin{lemma}
\label{lem:gradient}
The gradient of $f_\eta(Q)$ for $Q\in {\cal D}$ is given by
\begin{align*}
({\nabla _Q}{f_\eta }(Q))(s,a) = \rho (s,a) + \gamma \mathop{\sum}\limits_{(s',a') \in {\cal S} \times {\cal A}} {P(s|s',a'){\lambda _\eta }(s',a',a)}  - \mathop{\sum}\limits_{a' \in {\cal A}} {{\lambda _\eta }(s,a,a')}
\end{align*}
for $(s,a) \in {\cal S} \times {\cal A}$, where ${\lambda _\eta }(s,a,a'): =  \frac{\eta w(s,a,a') }{{ Q(s,a) - (FQ)(s,a,a')}}$.
\end{lemma}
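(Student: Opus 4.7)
The statement is a direct chain-rule computation, so my plan is to carry it out carefully while keeping track of how $Q(s,a)$ enters each term of $f_\eta$. First I would split $f_\eta$ into the linear piece $L(Q)=\sum_{(s,a)}\rho(s,a)Q(s,a)$ and the barrier piece $B(Q)=\eta\sum_{(s,a,a')}w(s,a,a')\varphi(g_{s,a,a'}(Q))$, where I set $g_{s,a,a'}(Q):=(FQ)(s,a,a')-Q(s,a)$ and $\varphi(x):=-\ln(-x)$ on $\{x<0\}$. The linear piece trivially contributes $\rho(s,a)$ to the partial derivative with respect to $Q(s,a)$. For the barrier piece I would record the one-dimensional derivative $\varphi'(x)=-1/x=1/(-x)$, which is positive on the domain.

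Next I would compute $\partial g_{s',a',a''}/\partial Q(s,a)$. A single entry $Q(s,a)$ enters $g_{s',a',a''}(Q)=R(s',a')+\gamma\sum_{s''}P(s''|s',a')Q(s'',a'')-Q(s',a')$ in exactly two ways: directly through the $-Q(s',a')$ term, contributing $-\mathbf{1}_{\{(s',a')=(s,a)\}}$; and indirectly through the $F$-sum, where the coefficient of $Q(s,a)$ is $\gamma P(s|s',a')\mathbf{1}_{\{a''=a\}}$. Applying the chain rule and using $\varphi'(g_{s',a',a''}(Q))=1/(Q(s',a')-(FQ)(s',a',a''))$, the barrier contribution becomes
\begin{align*}
\eta\sum_{(s',a',a'')}\frac{w(s',a',a'')}{Q(s',a')-(FQ)(s',a',a'')}\,\bigl[\gamma P(s|s',a')\mathbf{1}_{\{a''=a\}}-\mathbf{1}_{\{(s',a')=(s,a)\}}\bigr].
\end{align*}

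Finally I would substitute the definition $\lambda_\eta(s,a,a')=\eta w(s,a,a')/(Q(s,a)-(FQ)(s,a,a'))$ and collect terms. The first indicator collapses the triple sum to $\gamma\sum_{(s',a')}P(s|s',a')\lambda_\eta(s',a',a)$, while the second collapses it to $\sum_{a'}\lambda_\eta(s,a,a')$. Adding $\rho(s,a)$ from $\nabla L$ yields exactly the claimed expression. The well-definedness of the division by $Q(s,a)-(FQ)(s,a,a')$ is automatic because we are differentiating on the open domain $\mathcal D$, where every such denominator is strictly positive, so no separate regularity argument is required.

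The only step requiring any care is the bookkeeping for which indices of the triple sum $Q(s,a)$ actually touches; once that indicator calculation is written down explicitly, nothing nontrivial remains. I do not foresee a genuine obstacle — the main risk is simply an index mismatch between the ``next-action'' slot of $F$ and the second argument of $\lambda_\eta$, which the computation above confirms lines up correctly.
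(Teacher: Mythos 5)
Your proposal is correct and is exactly the "direct calculation" the paper alludes to: the same chain-rule computation appears (in vectorized form, via the vectors $v_{s,a,a'} = e_{s,a} - \gamma\sum_{s'}P(s'|s,a)e_{s',a'}$) in the paper's proof of the smoothness properties of $f_\eta$. Your indicator bookkeeping, the sign of $\varphi'(x) = -1/x$, and the observation that the denominators are strictly positive on the open domain $\mathcal{D}$ all check out, including the overlap case $(s',a')=(s,a)$, $a''=a$ where both indicators fire.
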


Using the closed-form gradient obtained above, we can gain insight into the solution of the optimization problem.
To this end, let us assume that $\tilde Q_\eta$ is a minimizer of $f_\eta(Q)$
\begin{align*}
\tilde Q_\eta: = \argmin_{Q \in {\cal D}}f_\eta(Q).
\end{align*}
The corresponding first-order optimality condition, ${\left. {{\nabla _Q}f(Q)} \right|_{Q = \tilde Q_\eta}} = 0$, is given by
\begin{align*}
\left( {{{\left. {{\nabla _Q}f(Q)} \right|}_{Q = {{\tilde Q}_\eta }}}} \right)(s,a) =& \rho (s,a) + \gamma \mathop{\sum}\limits_{(s',a') \in {\cal S} \times {\cal A}} {{{\tilde \lambda }_\eta }(s',a',a)P(s|s',a')}  - \mathop{\sum}\limits_{a' \in S} {{{\tilde \lambda }_\eta }(s,a,a')}\\
  =& 0,\quad (s,a) \in {\cal S} \times {\cal A}.
\end{align*}
where ${\tilde \lambda _\eta }(s,a,a'): = \frac{\eta w(s,a,a') }{{\tilde Q(s,a) - (F{{\tilde Q}_\eta })(s,a,a')}}$.

As mentioned earlier, since $f_\eta$ is strictly convex over the domain, this first-order condition constitutes the necessary and sufficient condition for the optimal solution.
We can observe that the above equation is exactly identical to the equality constraints in the dual problem with ${\tilde \lambda _\eta }$ as the dual variables. In other words, ${\tilde \lambda _\eta }$ approximates the true optimal dual variable $\lambda^*$. Therefore, from the above solution $\tilde Q _\eta$, we can consider two types of policies. Interpreting the solution as an approximate solution to the LP formulation of the MDP, we may derive a greedy policy based on $\tilde Q _\eta$, as well as a policy constructed from the approximate dual variable $\tilde \lambda _\eta$ which also depends on $\tilde Q _\eta$.
Hereafter, we refer to the policy induced by the primal optimal solution $\tilde Q _\eta$ as the primal $\eta$-policy, and the policy induced by the dual optimal solution $\tilde \lambda_\eta$ as the dual $\eta$-policy.
In particular, the primal $\eta$-policy, which is deterministic, can be written as
\begin{align*}
{\tilde \beta _\eta }(s) = \argmax _{a \in {\cal A}}{\tilde Q_\eta }(s,a),
\end{align*}
and the dual $\eta$-policy, which is stochastic, can be written as
\begin{align*}
{{\tilde \pi }_\eta }( \cdot |s): = \left[ {\begin{array}{*{20}{c}}
{\frac{{{{\tilde \lambda }_\eta }(s,1)}}{{\mathop{\sum}\limits_{a \in {\cal A}} {{{\tilde \lambda }_\eta }(s,a)} }}}&{\frac{{{{\tilde \lambda }_\eta }(s,2)}}{{\mathop{\sum}\limits_{a \in {\cal A}} {{{\tilde \lambda }_\eta }(s,a)} }}}& \cdots &{\frac{{{{\tilde \lambda }_\eta }(s,|{\cal A}|)}}{{\sum\limits_{a \in {\cal A}} {{{\tilde \lambda }_\eta }(s,a)} }}}
\end{array}} \right],
\end{align*}
where ${{\tilde \lambda }_\eta }(s,a): = \sum_{a' \in {\cal A}} {{{\tilde \lambda }_\eta }(s,a,a')}$. By minimizing the above objective function, we can obtain an approximate solution of the primal LP. Since the objective function is strictly convex, the approximate solution can be efficiently found with a gradient descent algorithm.
We can prove that, under certain mild conditions, this gradient descent with a constant step-size converges exponentially to~${\tilde Q}_\eta$ (\cref{sec:app:gradient1,sec:app:gradient2}).

Next, note that the minimizer of the log-barrier-based objective function provides only an approximate solution to the LP formulation of the MDP, rather than an exact one. Nevertheless, by decreasing the barrier parameter $\eta$, the solution is permitted to approach the boundary of the inequality constraints, i.e., the equality constraints, and thus progressively converges to the exact LP solution. In the limit as $\eta \to 0$, the solution converges to the true solution of the MDP. Building on this insight, we can express the error between $\tilde Q_\eta$ and $Q^*$ as a function of $\eta$.
The following theorem establishes such an error bound between $\tilde Q_\eta$ and $Q^*$, and, in addition, presents a bound on the Bellman error corresponding to
$\tilde Q_\eta$ (\cref{sec:app:proof:bound1}).
\begin{theorem}\label{thm:bounds2}
We have
\begin{enumerate}
\item $\eta \mathop{\min }\limits_{(s,a,a') \in {\cal S} \times {\cal A} \times {\cal A}} w(s,a,a') < {\left\| {{{\tilde Q}_\eta } - {Q^*}} \right\|_\infty } \le \frac{{\eta \sum\limits_{(s,a,a') \in {\cal S} \times {\cal A} \times {\cal A}} {w(s,a,a')} }}{{{{\min }_{(s,a) \in {\cal S} \times {\cal A}}}\rho (s,a)}}$

\item $\eta (1 - \gamma )\mathop{\min }\limits_{(s,a,a') \in {\cal S} \times {\cal A} \times {\cal A}} w(s,a,a') < {\left\| {{{\tilde Q}_\eta } - T{{\tilde Q}_\eta }} \right\|_\infty } \le \frac{{(1 + \gamma )\eta \sum\limits_{(s,a,a') \in {\cal S} \times {\cal A} \times {\cal A}} {w(s,a,a')} }}{{\min_{(s,a) \in {\cal S} \times {\cal A}}}\rho (s,a)}$
\end{enumerate}
\end{theorem}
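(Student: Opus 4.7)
The plan is to prove part~1 via linear-programming duality and then deduce part~2 from it through the $\gamma$-contraction of the Bellman operator $T$. Two preliminary facts drive the argument and should be verified first. First, $\tilde Q_\eta \ge Q^*$ componentwise: since $\tilde Q_\eta \in {\cal D}$ we have $\tilde Q_\eta \ge T\tilde Q_\eta$, and iterating the monotone $\gamma$-contractive operator $T$ gives $\tilde Q_\eta \ge T^k\tilde Q_\eta \to Q^*$. Second, the vector $\tilde\lambda_\eta$ defined in \cref{lem:gradient} is strictly positive (by $\tilde Q_\eta \in {\cal D}$) and dual feasible, because the first-order optimality condition $\nabla f_\eta(\tilde Q_\eta) = 0$ is exactly the dual equality constraint~\cref{eq:dual-constraint1} evaluated at $\tilde\lambda_\eta$.

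For the upper bound of part~1, I would multiply the dual equality constraint at $\tilde\lambda_\eta$ by $\tilde Q_\eta(s,a)$ and sum over $(s,a)$; combined with the defining identity $\tilde\lambda_\eta(s,a,a')[\tilde Q_\eta(s,a) - (F\tilde Q_\eta)(s,a,a')] = \eta w(s,a,a')$, this produces the duality-gap relation
\[
\sum_{(s,a)}\rho(s,a)\tilde Q_\eta(s,a) \;-\; \sum_{(s,a,a')}\tilde\lambda_\eta(s,a,a')R(s,a) \;=\; \eta\sum_{(s,a,a')}w(s,a,a').
\]
Weak LP duality applied to the dual-feasible $\tilde\lambda_\eta$ then yields $\sum\tilde\lambda_\eta R \le \sum\rho Q^*$, so $\sum\rho(\tilde Q_\eta - Q^*) \le \eta\sum w$. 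Using $\tilde Q_\eta - Q^* \ge 0$, the left side is at least $\min_{(s,a)}\rho(s,a)\cdot\|\tilde Q_\eta - Q^*\|_\infty$, delivering the upper bound.

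For the lower bound of part~1, I work pointwise. Because $\rho$ has strictly positive support, complementary slackness at any optimal dual $\lambda^*$ forces $\sum_i\lambda^*(s,a,i) \ge \rho(s,a) > 0$ for each $(s,a)$, so there exists $a'^* = a'^*(s,a)$ with $(FQ^*)(s,a,a'^*) = Q^*(s,a)$. At this triple,
\[
[\tilde Q_\eta - Q^*](s,a) \;=\; \frac{\eta w(s,a,a'^*)}{\tilde\lambda_\eta(s,a,a'^*)} + \gamma\sum_{s'}P(s'|s,a)[\tilde Q_\eta - Q^*](s',a'^*) \;\ge\; \frac{\eta w(s,a,a'^*)}{\tilde\lambda_\eta(s,a,a'^*)}.
\]
Maximizing over $(s,a)$ and combining with an upper bound on $\tilde\lambda_\eta$ derived from the normalization $\sum_{(s,a,a')}\tilde\lambda_\eta = 1/(1-\gamma)$ (itself obtained by summing the dual equality over all $(s,a)$) then yields the strict lower bound.

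Part~2 is a corollary of part~1 and the $\gamma$-contractivity of $T$. Since $Q^* = TQ^*$, the triangle inequality gives $\|\tilde Q_\eta - T\tilde Q_\eta\|_\infty \le \|\tilde Q_\eta - Q^*\|_\infty + \|TQ^* - T\tilde Q_\eta\|_\infty \le (1+\gamma)\|\tilde Q_\eta - Q^*\|_\infty$, while the reverse triangle inequality combined with the $\gamma$-contraction yields $(1-\gamma)\|\tilde Q_\eta - Q^*\|_\infty \le \|\tilde Q_\eta - T\tilde Q_\eta\|_\infty$; substituting the bounds from part~1 completes the proof. I expect the main obstacle to be the strict lower bound of part~1: the pointwise decomposition readily delivers $[\tilde Q_\eta - Q^*](s,a) \ge \eta w/\tilde\lambda_\eta$ at any triple active at $Q^*$, but converting this into the explicit estimate $>\eta\min w$ requires identifying a triple at which $\tilde\lambda_\eta$ is sufficiently small, and extracting such a bound from the aggregate normalization of $\tilde\lambda_\eta$ in the presence of up to $|{\cal S}||{\cal A}|^2$ triples is the delicate step.
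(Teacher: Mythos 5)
Your upper bound in part~1 and both directions of part~2 are essentially the paper's own argument: the duality-gap identity $\sum\rho\tilde Q_\eta-\sum\tilde\lambda_\eta R=\eta\sum w$ combined with weak duality and $\min\rho\cdot\|\cdot\|_\infty\le\sum\rho|\cdot|$ gives the upper bound, and the triangle/reverse-triangle inequalities with the $\gamma$-contraction of $T$ transfer everything to the Bellman residual. Those pieces are fine.

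The genuine gap is exactly where you suspected it: the strict lower bound of part~1. Your plan is to isolate an active triple $(s,a,a'^*)$ via complementary slackness, write $[\tilde Q_\eta-Q^*](s,a)=\eta w(s,a,a'^*)/\tilde\lambda_\eta(s,a,a'^*)+\gamma\sum_{s'}P(s'|s,a)[\tilde Q_\eta-Q^*](s',a'^*)$, \emph{drop} the second term, and then bound $\tilde\lambda_\eta$ at that specific triple from the normalization $\sum\tilde\lambda_\eta=1/(1-\gamma)$. This cannot close: the only componentwise information the normalization gives you at a \emph{prescribed} triple is $(1-\gamma)\tilde\lambda_\eta(s,a,a'^*)<1$, i.e.\ $\tilde\lambda_\eta(s,a,a'^*)<1/(1-\gamma)$, which yields only $[\tilde Q_\eta-Q^*](s,a)>\eta(1-\gamma)w(s,a,a'^*)$ --- a factor $(1-\gamma)$ short of the claimed $\eta\min w$. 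Hoping to find a triple where $\tilde\lambda_\eta\le 1$ does not help either, because the active triple is forced on you by $(s,a)$ and the mass of $\tilde\lambda_\eta$ can concentrate anywhere.

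The missing idea is to \emph{keep} the transition term rather than discard it. The paper uses the componentwise bound $(1-\gamma)\tilde\lambda_\eta<1$ at \emph{every} triple (so no complementary-slackness selection is needed) to get $\tilde Q_\eta(s,a)-(F\tilde Q_\eta)(s,a,a')>\eta(1-\gamma)w(s,a,a')$ for all $(s,a,a')$, hence $\tilde Q_\eta-T\tilde Q_\eta>\eta(1-\gamma)\min w$ pointwise; it then writes $\tilde Q_\eta-T\tilde Q_\eta=(\tilde Q_\eta-Q^*)-(T\tilde Q_\eta-TQ^*)$ and bounds the second difference above so that, with $m:=\min_{(s,a)}[\tilde Q_\eta-Q^*](s,a)$, one obtains $\eta(1-\gamma)\min w<(1-\gamma)m$, i.e.\ $m>\eta\min w$. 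The $1/(1-\gamma)$ amplification from this recursion exactly cancels the $(1-\gamma)$ you lose in bounding $\tilde\lambda_\eta$ componentwise; without it your argument proves a statement weaker than the theorem. (This also yields the stronger pointwise bound $\tilde Q_\eta(s,a)-Q^*(s,a)>\eta\min w$, and the lower bound of part~2 then follows directly from the pointwise feasibility margin rather than through part~1.)
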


From the above result, we can establish both upper and lower bounds on the $l_\infty$-norm of the optimality error and the Bellman error. Both bounds depend on $\eta$ and are shown to be linear functions of $\eta$. Hence, as $\eta \to 0$, the upper and lower bounds of the error norms decrease linearly, i.e., ${\tilde Q}_\eta \to Q^*$. Moreover, these bounds also reveal the interplay between the bounds and other hyperparameters. In the above result, we established bounds on the norm of the error. In addition, we can also derive upper and lower bounds on the MDP objective function itself. The following theorem presents such bounds for the objective functions corresponding to the primal $\eta$-policy and the dual $\eta$-policy, expressed relative to the optimal objective value $J^{\pi^*}$ (\cref{sec:app:proof:bound2}).
\begin{theorem}\label{thm:bounds3}
We have
\begin{enumerate}
\item $J^{\pi ^*} - \eta \sum\limits_{(s,a,a') \in {\cal S} \times {\cal A} \times {\cal A}} {w(s,a,a')}  \le {J^{{\tilde \pi }_\eta }} \le J^{\pi ^*}$

\item ${J^{{\pi ^*}}} - \frac{{\eta (1 + \gamma )\sum\limits_{(s,a,a') \in {\cal S} \times {\cal A} \times {\cal A}} {w(s,a,a')} }}{{(1 - \gamma ){\min_{(s,a) \in {\cal S} \times {\cal A}}}\rho (s,a)}} \le {J^{{{\tilde \beta }_\eta }}} \le J^{\pi ^*}$

\item $-\frac{{\eta (1 + \gamma )\sum\limits_{(s,a,a') \in {\cal S} \times {\cal A} \times {\cal A}} {w(s,a,a')} }}{{(1 - \gamma ){{\min }_{(s,a) \in {\cal S} \times {\cal A}}}\rho (s,a)}} \le {J^{{\tilde \beta }_\eta }} - {J^{{\tilde \pi }_\eta}} \le \eta \sum\limits_{(s,a,a') \in {\cal S} \times {\cal A} \times {\cal A}} {w(s,a,a')}$
\end{enumerate}
\end{theorem}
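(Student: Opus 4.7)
The three bounds share a backbone I would establish first. The definition of $\tilde\lambda_\eta$ in the display after \cref{lem:gradient} gives the barrier complementary-slackness relation $\tilde\lambda_\eta(s,a,a')\bigl[\tilde Q_\eta(s,a) - (F\tilde Q_\eta)(s,a,a')\bigr] = \eta\,w(s,a,a')$. Summing over $(s,a,a')$, expanding $F$, and collapsing the transition term against the dual flow constraint of \cref{lemma:dual-LP} yields the \emph{barrier duality-gap identity}
\[
\Phi(\tilde Q_\eta) - \Psi(\tilde\lambda_\eta) \;=\; \eta\!\!\sum_{(s,a,a')}\!\!w(s,a,a'),
\]
where $\Phi(Q):=\sum_{(s,a)}\rho(s,a)Q(s,a)$ and $\Psi(\lambda):=\sum_{(s,a,a')}\lambda(s,a,a')R(s,a)$. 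Strong LP duality $\Phi(Q^*) = \Psi(\lambda^*) = J^{\pi^*}$, primal feasibility of $\tilde Q_\eta$, and dual feasibility of $\tilde\lambda_\eta$ (recognized from the first-order optimality condition already displayed) then sandwich both $\Phi(\tilde Q_\eta)$ and $\Psi(\tilde\lambda_\eta)$ within an $\eta\sum w$ window of $J^{\pi^*}$: concretely, $J^{\pi^*} - \eta\sum w \le \Psi(\tilde\lambda_\eta) \le J^{\pi^*} \le \Phi(\tilde Q_\eta) \le J^{\pi^*} + \eta\sum w$.

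For part~1, the upper bound $J^{\tilde\pi_\eta}\le J^{\pi^*}$ is the optimality of $\pi^*$. For the lower bound I would establish the dual-value identity $\Psi(\tilde\lambda_\eta) = J^{\tilde\pi_\eta}$ by substituting the Bellman equation $R(s,a) = Q^{\tilde\pi_\eta}(s,a) - \gamma\sum_{s'} P(s'|s,a) V^{\tilde\pi_\eta}(s')$ into $\Psi(\tilde\lambda_\eta) = \sum_{(s,a)}\mu(s,a) R(s,a)$, with $\mu(s,a):=\sum_{a'}\tilde\lambda_\eta(s,a,a') = \mu_S(s)\tilde\pi_\eta(a|s)$, and telescoping against the summed dual flow constraint $\mu_S(s) - \gamma\sum_{i,j}P(s|i,j)\mu(i,j) = \rho_S(s)$; after cancellation only $\sum_s \rho_S(s) V^{\tilde\pi_\eta}(s) = J^{\tilde\pi_\eta}$ remains. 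Plugging into the backbone inequality yields $J^{\tilde\pi_\eta}\ge J^{\pi^*} - \eta\sum w$. I expect this identification to be the main obstacle, as it hinges on matching the three-index structure of $\tilde\lambda_\eta(s,a,a')$ to the conditional structure of $\tilde\pi_\eta(a|s)$ through the flow constraint.

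For part~2, \cref{thm:bounds2} provides the Bellman-error bound $\|\tilde Q_\eta - T\tilde Q_\eta\|_\infty \le (1+\gamma)\eta\sum w /\min_{(s,a)}\rho(s,a)$, and the standard fixed-point estimate
\[
\|\tilde Q_\eta - Q^{\tilde\beta_\eta}\|_\infty = \bigl\|(I - \gamma P^{\tilde\beta_\eta})^{-1}(\tilde Q_\eta - T\tilde Q_\eta)\bigr\|_\infty \le \frac{\|\tilde Q_\eta - T\tilde Q_\eta\|_\infty}{1-\gamma}
\]
(using $T^{\tilde\beta_\eta}\tilde Q_\eta = T\tilde Q_\eta$ from greediness) converts it into a bound on $\|\tilde Q_\eta - Q^{\tilde\beta_\eta}\|_\infty$. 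Then
\[
J^{\pi^*} - J^{\tilde\beta_\eta} \;=\; \sum_{(s,a)}\rho(s,a)\bigl[Q^*(s,a) - Q^{\tilde\beta_\eta}(s,a)\bigr] \;\le\; \sum_{(s,a)}\rho(s,a)\bigl[\tilde Q_\eta(s,a) - Q^{\tilde\beta_\eta}(s,a)\bigr] \;\le\; \|\tilde Q_\eta - Q^{\tilde\beta_\eta}\|_\infty,
\]
where the first inequality uses primal feasibility $\sum\rho Q^* \le \sum\rho\tilde Q_\eta$ and the last uses that $\rho$ is a probability distribution; chaining gives the claimed bound, while the upper bound $J^{\tilde\beta_\eta}\le J^{\pi^*}$ is again optimality. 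Part~3 then follows immediately by triangle inequality $J^{\tilde\beta_\eta} - J^{\tilde\pi_\eta} = (J^{\tilde\beta_\eta} - J^{\pi^*}) + (J^{\pi^*} - J^{\tilde\pi_\eta})$, combining the bounds and sign information from parts~1 and~2.
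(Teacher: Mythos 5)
Your backbone (the barrier duality-gap identity $\Phi(\tilde Q_\eta)-\Psi(\tilde\lambda_\eta)=\eta\sum w$, sandwiched between strong duality, primal feasibility of $\tilde Q_\eta$, and dual feasibility of $\tilde\lambda_\eta$), together with the identification $\Psi(\tilde\lambda_\eta)=J^{\tilde\pi_\eta}$, is exactly the paper's proof of part~1: the paper packages the identification as item~4 of \cref{prop:dual-property1} and derives the gap identity by evaluating the Lagrangian $L(\tilde Q_\eta,\tilde\lambda_\eta)$. Part~3 is handled identically to the paper. For part~2 you genuinely depart from the paper: the paper plugs the greedy action into the two-sided Bellman-residual bound, takes expectations along the trajectory generated by $\tilde\beta_\eta$, multiplies by $\gamma^k$ and telescopes, whereas you convert the Bellman-error bound of \cref{thm:bounds2} into $\|\tilde Q_\eta-Q^{\tilde\beta_\eta}\|_\infty\le\|\tilde Q_\eta-T\tilde Q_\eta\|_\infty/(1-\gamma)$ via the resolvent identity and $T^{\tilde\beta_\eta}\tilde Q_\eta=T\tilde Q_\eta$. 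That conversion is correct and arguably cleaner than the telescoping argument.

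However, the step where you pass to the objective, $J^{\pi^*}-J^{\tilde\beta_\eta}=\sum_{(s,a)}\rho(s,a)\bigl[Q^*(s,a)-Q^{\tilde\beta_\eta}(s,a)\bigr]$, is not an identity under the paper's definition $J^\pi=\sum_s\rho(s)V^\pi(s)$: the left side evaluates each Q-function at the actions chosen by its own policy ($\pi^*$, resp.\ $\tilde\beta_\eta$), while the right side averages over the conditional $\rho(a|s)$, and the discrepancy can go either way for the $Q^{\tilde\beta_\eta}$ term (note $\tilde\beta_\eta$ is greedy with respect to $\tilde Q_\eta$, not with respect to $Q^{\tilde\beta_\eta}$; the paper only equates $\sum\rho(s,a)Q^*(s,a)$ with $J^{\pi^*}$ under the extra hypothesis of \cref{prop:dual-property12} that $\rho(\cdot|s)$ is an optimal policy). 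The repair is short and keeps your route intact: since $Q^*\le\tilde Q_\eta$ pointwise (\cref{lemma:bounds1}), one has $V^{\pi^*}(s)=\max_aQ^*(s,a)\le\max_a\tilde Q_\eta(s,a)=\tilde Q_\eta(s,\tilde\beta_\eta(s))$ and $V^{\tilde\beta_\eta}(s)=Q^{\tilde\beta_\eta}(s,\tilde\beta_\eta(s))$, hence $J^{\pi^*}-J^{\tilde\beta_\eta}\le\sum_s\rho(s)\bigl[\tilde Q_\eta(s,\tilde\beta_\eta(s))-Q^{\tilde\beta_\eta}(s,\tilde\beta_\eta(s))\bigr]\le\|\tilde Q_\eta-Q^{\tilde\beta_\eta}\|_\infty$, after which your chain closes. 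With that one correction the proposal is a valid proof, and for part~2 a genuinely different one.
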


The above theorem shows that the upper and lower bounds of the objective functions corresponding to the primal $\eta$-policy and the dual $\eta$-policy also depend linearly on $\eta$. Hence, as $\eta\to 0$, the objective functions associated with both the primal and dual $\eta$-policies converge to the optimal objective value $J^*$.

\section{Policy evaluation}
Although the primary focus of this paper is on computing the optimal Q-function, the proposed framework is equally applicable to policy evaluation. In this section, we therefore provide a brief account of its application to the policy evaluation setting.
To this end, let us assume a given policy $\pi$, and consider the problem of finding its corresponding Q-function $Q^\pi$. This problem can be solved through the following LP:
\begin{align*}
&\mathop{\min }\limits_{Q \in {\mathbb R}^{|{\cal S}||{\cal A}|}} \sum\limits_{(s,a) \in {\cal S} \times {\cal A}} {\rho (s,a)Q(s,a)}\\
&{\rm subject\,\, to}\quad ({T^\pi }Q)(s,a)  \le Q(s,a),\quad (s,a) \in {\cal S} \times {\cal A},
\end{align*}
where $R(s,a)$ is the expected reward conditioned on $(s,a)\in {\cal S}\times {\cal A}$, $\rho$ denotes any probability distribution over ${\cal S} \times {\cal A}$ with strictly positive support, and
\begin{align*}
({T^\pi }Q)(s,a): = R(s,a) + \gamma \sum\limits_{(s',a') \in {\cal S} \times {\cal A}} {P(s'|s,a)\pi (a'|s')Q(s',a')} ,\quad (s,a) \in {\cal S} \times {\cal A}.
\end{align*}
Then, analogous to the case of finding the optimal Q-function, we can derive the following results through a similar theoretical analysis. In particular, we can prove that the unique optimal solution of the above LP is $Q^\pi$. Moreover, let us consider the objective function with log-barrier function
\[f_\eta^\pi (Q): = \sum\limits_{(s,a) \in {\cal S} \times {\cal A}} {Q(s,a)\rho (s,a)}  + \eta \sum\limits_{(s,a) \in {\cal S} \times {\cal A}} {w(s,a)\varphi \left( {({T^\pi }Q)(s,a) - Q(s,a)} \right)} \]
where $\eta >0$ is a weight parameter and $w(s,a)>0,(s,a)\in {\cal S} \times {\cal A}$ are weight parameters of the inequality constraints.
Then, we can prove that the corresponding optimal solution $\tilde Q_\eta^\pi: = \argmin_{Q \in {\cal D}}f_\eta^\pi(Q)$ approximates $Q^\pi$. Many of the analytical results established in the previous section can be applied in a similar manner. For brevity, all related details are provided in~\cref{sec:app:policy-evaluation}.

\section{Deep RL variants}
\label{sec:DeepRL}
Thus far, we have provided a theoretical analysis of the approximate solution to the LP formulation of MDPs using the log-barrier function. Although the primary focus of this paper is on theoretical analysis with tabular setting, in this section we further explore the potential extension of the proposed framework to deep RL.
In particular, we introduce a novel DQN algorithm inspired by the idea of standard DQN~\citep{mnih2015human}. Note that when a deep neural network is used, the model becomes a nonlinear function of the parameters $\theta$, and the precise theoretical results derived for the tabular setting no longer apply directly. Nevertheless, the tabular analysis in the previous sections provides useful intuition and insights on deep RL extensions. Similar to the conventional DQN framework, we employ an experience replay buffer $D$ and mini-batch sampling $B$. Furthermore, in the definition of $f_\eta$, the probability density is replaced with samples from the mini-batch, leading to the following loss function:
\begin{align*}
L(\theta ): = \frac{1}{{|B|}}\sum\limits_{(s,a,r,s') \in B,a' \in {\cal A}} {\left[ {{Q_\theta }(s,a) + \eta \varphi (r + \gamma {Q_\theta }(s',a') - {Q_\theta }(s,a))} \right]} ,
\end{align*}
where $B$ is a mini-batch uniformly sampled from the experience replay buffer $D$, $|B|$ is the size of the mini-batch, $Q_\theta$ is a deep neural network approximation of Q-function, $\theta\in {\mathbb R}^m$ is the parameter to be determined, and $(s,a,r,s')$ is the transition sample of the state-action-reward-next state. The loss function can be seen as a stochastic approximation of $f_\eta$, where $\rho$ and $w$ can be set to probability distributions corresponding to the replay buffer. However, this stochastic approximation is generally biased because it approximates a function in which the state transition probabilities appear outside the log-barrier function. In fact, by applying Jensen’s inequality, we can show that the above loss function is essentially an unbiased stochastic approximation of an upper bounding surrogate funciton of $f_\eta$ (\cref{sec:app:upper-surrogate}). However, note that in the deterministic case, the upper surrogate function coincides with the true objective function with zero Jenson gap. In this setting, $L(\theta )$ becomes an unbiased stochastic approximation of $f_\eta$. For example, a dynamical system expressed as $s_{k+1} = f(s_k,a_k)$ is deterministic, and hence the upper bound coincides with $f_\eta$. Therefore, the loss function $L$ can be regarded as an unbiased stochastic approximation of $f_\eta$ as follows: ${f_\eta }(Q_\theta ) = \sum_{(s,a) \in {\cal S} \times {\cal A}} {{Q_\theta }(s,a)\rho (s,a)} + \eta \sum_{(s,a,a') \in {\cal S} \times {\cal A} \times {\cal A}} {w(s,a,a')\varphi \left( {r(s,a,f(s,a)) + \gamma {Q_\theta }(f(s,a),a') - {Q_\theta }(s,a)} \right)} \cong L(\theta )$. Another difference from the conventional DQN is that the algorithm proposed in this paper does not employ target variables, and hence the update of target variables is also omitted. Apart from this distinction, the remaining components are similar to those of standard DQN. The overall pseudocode of the algorithm and implementation details for stabilization of the algorithm are summarized in~\cref{appendix:DQN algo}.

Next, we briefly discuss the extension of policy evaluation, previously introduced, to the deep RL setting. In particular, we consider its potential application to DDPG~\citep{lillicrap2015continuous}. Here, the policy is deterministic, and the following loss function can be formulated:
\begin{align*}
L_{\text{critic}}(\theta ;\pi_\chi): = \frac{1}{{|B|}}\sum\limits_{(s,a,r,s') \in B} {\left[ {{Q_\theta }(s,a) + \eta \varphi (r + \gamma Q_\theta(s',\pi_\chi(s')) - {Q_\theta }(s,a))} \right]} ,
\end{align*}
where $\pi_\chi$ denotes the deterministic policy being learned, and $\chi$ is its parameter vector. A discussion similar to that of the preceding loss function applies here, and the corresponding modified DDPG algorithm along with its implementation details is provided in~\cref{appendix:DDPG algo}. Finally, we note that we can heuristically apply several alternative choices (e.g., SoftPlus) instead of the log-barrier function. While these alternatives often yield reasonable performance, the proposed method typically performed better in our experiments.

\section{Experiments}
\label{sec:experiments}

To validate the performance of the proposed method, we conduct experiments in both discrete and continuous control environments. We implement our deep RL variant as described in Section~\ref{sec:DeepRL}, naming our algorithms \textbf{Log-barrier DQN} and \textbf{Log-barrier DDPG}, respectively. For discrete control tasks, we compare our algorithm with the standard DQN~\citep{mnih2015human}, and for continuous control tasks, we benchmark it against the standard DDPG~\citep{lillicrap2015continuous}. Additional algorithmic details are provided in~\cref{appendix:DQN algo} and~\cref{appendix:DDPG algo}, with detailed hyperparameters listed in~\cref{appendix:hyperparameters}.

\paragraph{Log-Barrier DQN} For our DQN experiments, we chose five environments form the Gymnasium library (Acrobot-v1, CartPole-v1, LunarLander-v3, MountainCar-v0, and Pendulum-v1). For MountainCar-v0, we replace the original sparse reward with a dense shaping reward, and for Pendulum-v1 we discretize the continuous action space to enable DQN training. As shown in figure~\cref{fig:dqn-exp}, the results reveal a remarkable point that the Log-barrier DQN demonstrates rapid adaptation and significant stability in the CartPole environment, where the agent's survival is threatened by a critical angle criterion. We hypothesize that this is due to the sharp decision boundary between states where the agent survives and states where the episode is terminated. Standard DQN, which relies on Bellman updates with an mean square error (MSE) loss, can suffer from error propagation across this boundary. In contrast, the proposed approach, which uses the LP form, can globally mitigate this hazard. Instead of estimating value from neighboring states, our method directly minimizes its objective while satisfying the LP constraints, which are transformed into an unconstrained problem using a barrier function.

\begin{figure}[h!]
\centering
\includegraphics[width=14cm,height=3.5cm]{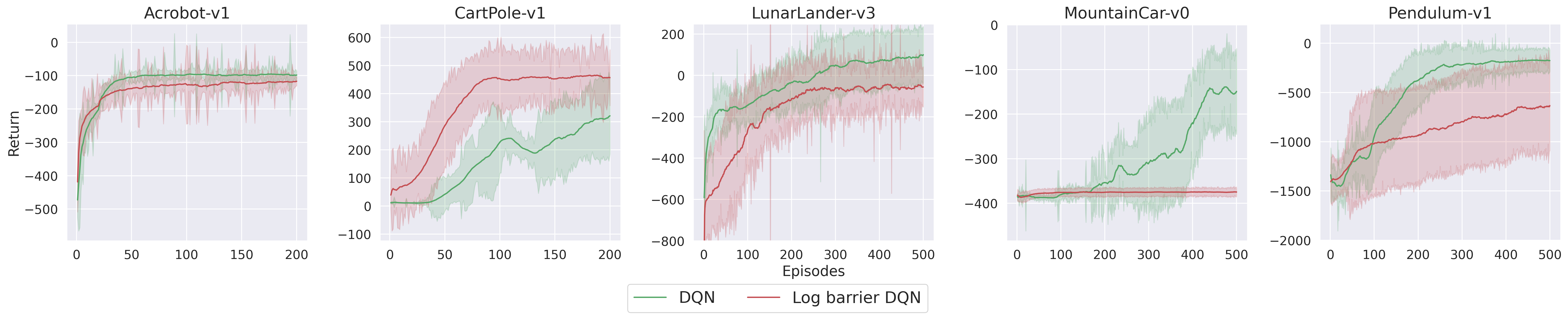}
\caption{Learning curves comparing the Log-barrier DQN and standard DQN on the Gymnasium control environments. Each curve represents the average return over 10 random seeds, with the shaded area indicating one standard deviation from the mean.}
\label{fig:dqn-exp}
\end{figure}

\paragraph{Log-Barrier DDPG} In the continuous control experiments with DDPG, the proposed method demonstrated superior performance on four complex MuJoCo environments (Ant, Walker2d, HalfCheetah, Humanoid), while showing no significant advantage on the simpler Hopper task, as shown in~\cref{fig:ddpg-exp}. We attribute this success to a fundamental property of critic update mechanism of the proposed algorithm. We conjecture that the core of the advantage arises from the fact that the LP form is inherently a minimization objective, which naturally counteracts the Q-value overestimation bias prevalent in actor-critic methods.
Standard DDPG critics learn by minimizing the MSE to a target value, a process that simply follows the target without regard for its potential bias. If the target is inflated, the critic learns an inflated value, leading to a feedback loop of compounding overestimation. In contrast, the proposed approach minimizes the value function itself, subject to the constraints of Bellman consistency. This systematic search for the tightest and lowest possible Q-values that still satisfy the dynamics acts as a powerful, implicit regularizer against optimistic value estimates. This results in a more conservative and stable critic, which provides a more reliable gradient to the actor, leading to superior final performance. Consequently, this allows the proposed method to overcome the limitations of standard DDPG, successfully solving environments such as Ant and Humanoid that were previously thought to be beyond its capabilities.
\begin{figure}[h!]
\centering
\includegraphics[width=14cm,height=3.5cm]{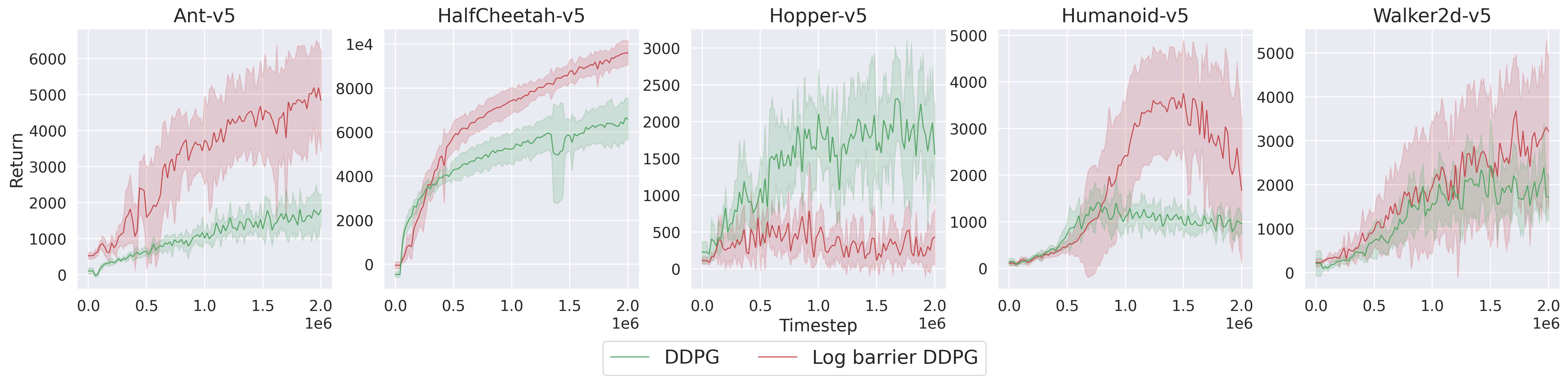}
\caption{Learning curves comparing the Log-barrier DDPG and standard DDPG on the Mujoco continuous control environments. Each curve represents the average return over 8 random seeds, with the shaded area indicating one standard deviation from the mean.}
\label{fig:ddpg-exp}
\end{figure}

\section{Conclusion}
We developed a theoretical framework for solving MDPs via their LP formulation using a log-barrier term. Reformulating the LP into a single objective $f_\eta$, we show that gradient descent efficiently produces approximate solutions $\tilde Q_\eta$ and prove error bounds (including $\|\tilde Q_\eta - Q^* \|_\infty$) that scale linearly with the barrier parameter $\eta$. We also characterize primal and dual approximate solutions, their induced policies, and prove structural properties of $f_\eta$ (e.g., convexity, local strong convexity/Lipschitzness) together with exponential convergence of gradient descent in the tabular setting. Practically, we derive a novel log-barrier loss for deep RL and evaluate it in DQN and DDPG: the method matches standard DQN in most cases and outperforms conventional DDPG in several tasks. While experiments are limited in scope and the approach requires careful hyperparameter tuning, the empirical results support the promise of the log-barrier formulation; fuller large-scale validation and robustness improvements remain important directions for future work.

\section*{Acknowledgments}
The work was supported in by the Institute of Information Communications Technology Planning Evaluation (IITP) funded by the Korea government under Grant 2022-0-00469.

\bibliographystyle{apalike}
\bibliography{reference}

\clearpage
\appendix



 {   \centering
    {\Huge \textbf{Appendix}\par}}

{\color{red}

}

\section{Appendix:~\cref{lemma:optimality}}\label{sec:app:LP-opt1}
In the next lemma, we establish that the optimal solution to the LP formulation of the MDP~\cref{eq:primal-LP1}, as presented in this paper, is precisely the optimal Q-function, $Q^*$.
\begin{lemma}\label{lemma:optimality}
The optimal solution of the LP~\cref{eq:primal-LP1} is unique and given by $Q^*$
\end{lemma}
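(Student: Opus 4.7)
The plan is to establish three facts in sequence: (i) $Q^*$ is feasible for \cref{eq:primal-LP1}, (ii) every feasible $Q$ dominates $Q^*$ pointwise, and (iii) the strictly positive support of $\rho$ upgrades optimality in the weighted sum to pointwise uniqueness.

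First, I would verify feasibility of $Q^*$. By the Bellman optimality equation, $Q^*(s,a) = R(s,a) + \gamma \sum_{s'} P(s'|s,a) \max_{a''} Q^*(s',a'')$. Since $\max_{a''} Q^*(s',a'') \ge Q^*(s',a')$ for any fixed $a' \in {\cal A}$, replacing the max by an arbitrary $a'$ only weakens the right-hand side, giving $(FQ^*)(s,a,a') \le Q^*(s,a)$ for every $(s,a,a') \in {\cal S} \times {\cal A} \times {\cal A}$. Hence $Q^*$ is feasible.

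Second, I would show that any feasible $Q$ satisfies $Q \ge Q^*$ componentwise. Taking the maximum over $a'$ on the left of the constraint $(FQ)(s,a,a') \le Q(s,a)$ gives $(TQ)(s,a) \le Q(s,a)$ for every $(s,a)$, i.e., $TQ \le Q$. Using the classical monotonicity of the Bellman optimality operator ($Q_1 \le Q_2$ implies $TQ_1 \le TQ_2$ pointwise) and iterating, $T^n Q \le Q$ for every $n \ge 1$. Because $T$ is a $\gamma$-contraction in the sup-norm with unique fixed point $Q^*$, $T^n Q \to Q^*$ as $n \to \infty$, and passing to the limit yields $Q^* \le Q$.

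Combining these, for any feasible $Q$,
\[
\sum_{(s,a) \in {\cal S} \times {\cal A}} \rho(s,a) Q(s,a) \ge \sum_{(s,a) \in {\cal S} \times {\cal A}} \rho(s,a) Q^*(s,a),
\]
so $Q^*$ attains the minimum. For uniqueness, suppose a feasible $Q$ achieves the same objective as $Q^*$. Then $\sum_{(s,a)} \rho(s,a)[Q(s,a) - Q^*(s,a)] = 0$, and each summand is nonnegative by step (ii); the assumption that $\rho$ has strictly positive support on ${\cal S} \times {\cal A}$ forces $Q(s,a) = Q^*(s,a)$ for every $(s,a)$.

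The only nontrivial step is the second one, where one must invoke monotonicity of $T$ together with its $\gamma$-contraction property to pass from the super-solution inequality $TQ \le Q$ to the pointwise dominance $Q \ge Q^*$. This is a standard property of the Bellman optimality operator, so no essential difficulty arises; the remaining steps are immediate consequences of the structure of the LP and the positivity of $\rho$.
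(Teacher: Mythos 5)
Your proof is correct and follows essentially the same route as the paper: feasibility of $Q^*$ via the Bellman optimality equation, the super-solution inequality $TQ \le Q$ combined with monotonicity and contraction of $T$ to get $Q^* \le Q$ pointwise, and optimality of $\hat Q$ to close the argument. Your final uniqueness step, which explicitly invokes the strictly positive support of $\rho$ to upgrade equality of objective values to pointwise equality, is in fact spelled out more carefully than in the paper's own proof, which passes from optimality directly to the pointwise inequality $\hat Q \le Q^*$ without comment.
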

\begin{proof}
Let us consider the following LP defined also in~\cref{eq:primal-LP1}:
\begin{align*}
&\mathop{\min }\limits_{Q \in {\mathbb R}^{|{\cal S}||{\cal A}|}} \sum\limits_{(s,a) \in {\cal S} \times {\cal A}} {\rho (s,a)Q(s,a)}\\
&{\rm subject\,\, to}\\
&R(s,a) + \gamma \sum\limits_{s' \in {\cal S}} {P(s'|s,a)Q(s',a')}  \le Q(s,a),\quad (s,a,a') \in {\cal S} \times {\cal A} \times {\cal A},
\end{align*}
where $\rho$ denotes any probability distribution over ${\cal S} \times {\cal A}$ with strictly positive support.
Let us assume that the optimal solution of the above LP is $\hat Q$. We want to prove that $\hat Q = Q^*$. First of all, since $\hat Q$ is feasible, it satisfies
\begin{align*}
(F\hat Q)(s,a,a') \le \hat Q(s,a),\quad (s,a,a') \in {\cal S} \times {\cal A} \times {\cal A}.
\end{align*}
This implies
\begin{align*}
(T\hat Q)(s,a) \le \hat Q(s,a),\quad (s,a) \in {\cal S} \times {\cal A},
\end{align*}
and vice versa. Equivalently, it can be written as $T\hat Q \le \hat Q$.
Because $T$ is a monotone operator~\citep{bertsekas1996neuro}, we have
\begin{align*}
\mathop{\lim }\limits_{k \to \infty } T^k \hat Q \le \hat Q \Rightarrow Q^* \le \hat Q.
\end{align*}
Moreover, since $T{Q^*} = {Q^*}$, ${Q^*}$ is a feasible solution, and hence, $\hat Q \le {Q^*}$.
Therefore, ${Q^*} \le \hat Q \le {Q^*}$, which implies that $\hat Q = Q^*$.

\end{proof}

\section{Appendix: proof of~\cref{lemma:dual-LP}}\label{sec:app:dual-LP1}

Let us consider the Lagrangian function~\citep{Boyd2004}
\begin{align*}
L(Q,\lambda ) =& \sum\limits_{(s,a) \in {\cal S} \times {\cal A}} {\rho (s,a)Q(s,a)} \\
& + \sum\limits_{(s,a,a') \in {\cal S}\times {\cal A} \times {\cal A}} {\lambda (s,a,a')(R(s,a) + \gamma \sum\limits_{s'\in {\cal S}} P(s'|s,a)Q(s',a') - Q(s,a) )},
\end{align*}
where $\lambda$ is the Lagrangian multiplier vector (or dual variable). Next, the function can be written by
\begin{align*}
L(Q,\lambda ) =& \sum\limits_{(s,a,a') \in {\cal S} \times {\cal A} \times {\cal A}} {\lambda (s,a,a')R(s,a)} \\
& + \sum\limits_{(s,a) \in {\cal S} \times {\cal A}} {Q(s,a)\left\{ {\sum\limits_{i \in {\cal A}} {\lambda (s,a,i)}  - \gamma \sum\limits_{(i,j) \in {\cal S} \times {\cal A}} {P(s|i,j)\lambda (i,j,a)}  - \rho (s,a)} \right\}}
\end{align*}

One can observe that the optimal value of the dual problem ${\max _{\lambda  \ge 0}}{\min _{Q \in {\mathbb R}^{|{\cal S}||{\cal A}|}}}L(Q,\lambda )$ is finite only when
\begin{align*}
\sum\limits_{i \in {\cal A}} {\lambda (s,a,i)}  - \gamma \sum\limits_{(i,j) \in {\cal S} \times {\cal A}} {P(s|i,j)\lambda (i,j,a)}  - \rho (s,a) = 0,\quad \forall (s,a) \in {\cal S} \times {\cal A}.
\end{align*}
Therefore, one gets the following dual problem:
\begin{align*}
&{\max _{\lambda  \ge 0}}\sum\limits_{(s,a,a') \in {\cal S} \times {\cal A} \times {\cal A}} {\lambda (s,a,a')R(s,a)}\\
&{\rm subject\,\, to}\\
&\sum\limits_{i \in {\cal A}} {\lambda (s,a,i)}  - \gamma \sum\limits_{(i,j) \in {\cal S} \times {\cal A}} {P(s|i,j)\lambda (i,j,a)}  = \rho (s,a),\quad \forall (s,a) \in {\cal S} \times {\cal A}.
\end{align*}
This completes the proof.

\section{Appendix:~\cref{prop:dual-property1}}\label{sec:app:dual-feasi1}

We examine several important properties and interpretations of the dual LP.
\begin{theorem}\label{prop:dual-property1}
Suppose $\lambda (s,a,a'),\forall (s,a,a') \in {\cal S} \times {\cal A} \times {\cal A}$ is any dual feasible solution (any solutions satisfying the dual constraints in~\cref{eq:dual-constraint1}).
Then, we obtain the following results:
\begin{enumerate}
\item $(1-\gamma)\lambda$ is a probability distribution, i.e.,
\[(1-\gamma)\lambda (s,a,a') \ge 0,\quad \forall (s,a,a') \in {\cal S} \times {\cal A} \times {\cal A},\quad \sum\limits_{(s,a,a') \in {\cal S} \times {\cal A} \times {\cal A}} {(1-\gamma)\lambda (s,a,a')}  = 1\]

\item Moreover, let us define the marginal distributions
\begin{align}
\lambda (s,a): = \sum\limits_{a' \in {\cal A}} {\lambda (s,a,a')} ,\quad \lambda (s): = \sum\limits_{a \in {\cal A}} {\lambda (s,a)} ,\quad \rho (s): = \sum\limits_{a' \in {\cal A}} {\rho (s,a)}.\label{eq:margials1}
\end{align}
Then, we have
\begin{align}
\lambda (s') - \gamma \sum\limits_{s \in {\cal S}} {{P^\pi }(s'|s)\lambda (s)}  = \rho (s'),\quad \forall s' \in {\cal S},\label{eq:lambda-equation1}
\end{align}
where
\begin{align}
\pi ( \cdot |s): = \left[ {\begin{array}{*{20}{c}}
{\frac{{\lambda (s,1)}}{{\sum_{a \in {\cal A}} {\lambda (s,a)} }}}&{\frac{{\lambda (s,2)}}{{\sum_{a \in {\cal A}} {\lambda (s,a)} }}}& \cdots &{\frac{\lambda (s,|{\cal A}|)}{{\sum_{a \in {\cal A}} {\lambda (s,a)} }}}
\end{array}} \right]\label{eq:dual-policy1}
\end{align}
and
\[{P^\pi }(s'|s): = \sum_{a \in {\cal A}} {P(s'|s,a)\pi (a|s)} \]
is the probability of the transition from $s$ to $s'$ under the policy $\pi$.

\item If the marginal $\rho (s): = \sum_{a \in {\cal A}} {\rho (s,a)} ,s \in {\cal S}$ represents the initial state distribution, then $(1-\gamma)\lambda ( \cdot )$ is the discounted state occupation probability distribution defined as
\[(1 - \gamma )\lambda (s) = {\mu ^\pi }(s) = (1 - \gamma )\sum\limits_{k = 0}^\infty  {\gamma ^k {\mathbb P}[s_k = s|\pi ,\rho ]} \]

\item If the marginal $\rho (s): = \sum_{a \in {\cal A}} {\rho (s,a)} ,s \in {\cal S}$ represents the initial state distribution, then the dual objective function satisfies
\[\sum\limits_{(s,a,a') \in {\cal S} \times {\cal A} \times {\cal A}} {\lambda (s,a,a')R(s,a)}  = \sum\limits_{s \in {\cal S}} {{V^\pi }(s)\rho (s)}  = {J^\pi }\]
\end{enumerate}
\end{theorem}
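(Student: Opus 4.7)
The plan is to proceed through the four claims sequentially, since each builds on the previous. For part 1, I would establish non-negativity directly from dual feasibility $\lambda \geq 0$, and prove the normalization by summing the dual equality constraint in~\cref{eq:dual-constraint1} over all $(s,a) \in {\cal S}\times{\cal A}$. The key calculation is that $\sum_{s \in {\cal S}} P(s|i,j) = 1$, so the double sum $\sum_{s,a,i,j} P(s|i,j)\lambda(i,j,a)$ collapses (after relabeling) to $\sum_{(s,a,a')}\lambda(s,a,a')$. Since $\sum_{(s,a)}\rho(s,a) = 1$, rearranging yields $(1-\gamma)\sum_{(s,a,a')}\lambda(s,a,a') = 1$.

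For part 2, I would sum the dual constraint over $a \in {\cal A}$ only (rather than over $(s,a)$). Using the marginal notation in~\cref{eq:margials1}, the first term becomes $\sum_{a,i}\lambda(s,a,i) = \lambda(s)$, and the second term is $\gamma\sum_{a,i,j}P(s|i,j)\lambda(i,j,a) = \gamma\sum_{i,j}P(s|i,j)\lambda(i,j)$. Factoring $\lambda(i,j) = \lambda(i)\pi(j|i)$ from the definition~\cref{eq:dual-policy1} gives $\gamma\sum_i\lambda(i)\sum_j P(s|i,j)\pi(j|i) = \gamma\sum_i P^\pi(s|i)\lambda(i)$, matching~\cref{eq:lambda-equation1} after relabeling.

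For part 3, I would view~\cref{eq:lambda-equation1} as a fixed-point relation $\lambda = \rho + \gamma (P^\pi)^\top \lambda$ in vector form, where $(P^\pi)^\top$ is the adjoint transition operator. Since $\gamma \in [0,1)$ and $P^\pi$ is a stochastic matrix, the operator $I - \gamma (P^\pi)^\top$ is invertible via the Neumann series, yielding $\lambda = \sum_{k=0}^\infty \gamma^k ((P^\pi)^\top)^k \rho$. Since $((P^\pi)^\top)^k\rho$ evaluated at $s$ equals $\mathbb{P}[s_k = s \mid \pi,\rho]$ (the $k$-step marginal starting from $\rho$), multiplying by $(1-\gamma)$ gives the discounted state occupation measure $\mu^\pi(s)$. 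The main obstacle here is being careful about the iteration's convergence and matching the indexing of the Markov chain probabilities; once phrased in operator form, this is clean.

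For part 4, I would split the dual objective as $\sum_{(s,a,a')}\lambda(s,a,a')R(s,a) = \sum_{(s,a)}R(s,a)\lambda(s,a)$ (since $R(s,a)$ does not depend on $a'$), and further as $\sum_{s}\lambda(s)\sum_a R(s,a)\pi(a|s)$ using the factorization $\lambda(s,a) = \lambda(s)\pi(a|s)$. Denoting $R^\pi(s) := \sum_a R(s,a)\pi(a|s)$, part 3 gives $\lambda(s) = \frac{1}{1-\gamma}\mu^\pi(s)$, so the expression becomes $\frac{1}{1-\gamma}\sum_s \mu^\pi(s)R^\pi(s)$. Expanding $\mu^\pi$ using its definition and swapping the order of summation yields $\sum_s \rho(s)\sum_{k=0}^\infty \gamma^k \mathbb{E}[R^\pi(s_k) \mid s_0 = s,\pi]$, which by the tower property and definition of $V^\pi$ equals $\sum_s\rho(s)V^\pi(s) = J^\pi$. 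The only subtlety is correctly associating $R^\pi(s_k)$ with the expected reward $\mathbb{E}[r_{k+1}\mid s_k, \pi]$, which follows from the assumption that $R(s,a)$ is the expected reward conditioned on $(s,a)$.
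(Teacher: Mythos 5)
Your proposal is correct and follows essentially the same route as the paper's proof: summing the dual constraint over $(s,a)$ for the normalization, summing over $a$ and factoring $\lambda(s,a)=\lambda(s)\pi(a|s)$ for the flow equation, inverting $I-\gamma (P^\pi)^\top$ via the Neumann series for the occupation-measure identification, and marginalizing the dual objective for the last claim. The only difference is cosmetic: you spell out the final step $\sum_s R^\pi(s)\lambda(s)=\sum_s V^\pi(s)\rho(s)$ in more detail than the paper does.
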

\cref{prop:dual-property1} provides an interpretation of any feasible dual solution. In particular, analogous to the LP formulation~\citep{puterman2014markov} based on the value function, the dual variable in the Q-function-based LP formulation corresponds to a probability distribution. It represents the stationary state–action–next-action distribution under the policy $\pi$ in~\cref{eq:dual-policy1} constructed from the dual variable.
Moreover, we can prove that the marginal distribution $\lambda(s)$ in~\cref{eq:margials1} corresponds to the discounted state-occupation probability distribution. Building on this observation, it can be shown that the dual objective function coincides with the objective value attained by the policy $\pi$ in~\cref{eq:dual-policy1}.
The preceding discussions are valid for any dual feasible variable $\lambda$, and, of course, the same properties extend to the dual optimal solution $\lambda^*$. In particular, if $\lambda^*$ is the dual optimal solution, then the associated policy~\cref{eq:dual-policy1} with $\lambda$ replaced by $\lambda^*$ coincides with an optimal policy. Similarly, if $Q^*$ is the primal optimal solution (the solution of~\cref{eq:primal-LP1}), then $\beta^*(s) := \argmax _{a \in {\cal A}}Q^*(s,a)$ likewise induces an optimal policy.

\begin{proof}
For the first statement, summing the dual constraints in~\cref{eq:dual-constraint1} over $(s,a)\in {\cal S}\times {\cal A}$, we get
\begin{align*}
\sum\limits_{(s,a,i) \in {\cal S} \times {\cal A} \times {\cal A}} {\lambda (s,a,i)}  - \gamma \sum\limits_{(s,a) \in {\cal S} \times {\cal A},(i,j) \in {\cal S} \times {\cal A}} {P(s|i,j)\lambda (i,j,a)}  = \sum\limits_{(s,a) \in {\cal S} \times {\cal A}} {\rho (s,a)},
\end{align*}
which leads to
\begin{align*}
(1 - \gamma )\sum\limits_{a \in {\cal A},(i,j) \in {\cal S} \times {\cal A}} {\lambda (i,j,a)}  = 1.
\end{align*}
It proves the first statement.
For the second statement, summing the dual constraints in~\cref{eq:dual-constraint1} over $a'\in {\cal A}$ leads to
\begin{align*}
\lambda (s,a) - \gamma \sum\limits_{(i,j) \in {\cal S} \times {\cal A}} {P(s|i,j)\lambda (i,j,a)}  = \rho (s,a).
\end{align*}
Moreover, summing the above equation over $a\in {\cal A}$ leads to
\begin{align*}
\sum\limits_{a \in {\cal A}} {\lambda (s',a) - \gamma \sum\limits_{(s,a) \in {\cal S} \times {\cal A}} {P(s'|s,a)\lambda (s,a)} }  = \rho (s'),
\end{align*}
which can be further written as
\begin{align*}
&\sum\limits_{a \in A} {\lambda (s',a)}  - \gamma \sum\limits_{(s,a) \in {\cal S} \times {\cal A}} {P(s'|s,a)\frac{{\lambda (s,a)}}{{\sum\limits_{i \in {\cal A}} {\lambda (s,i)} }}\sum\limits_{j \in {\cal A}} {\lambda (s,j)} }\\
=& \sum\limits_{a \in {\cal A}} {\lambda (s',a)}  - \gamma \sum\limits_{(s,a) \in {\cal S} \times {\cal A}} {P(s'|s,a)\pi (a|s)\sum\limits_{i \in {\cal A}} {\lambda (s,i)} } \\
=& \sum\limits_{a \in {\cal A}} {\lambda (s',a)}  - \gamma \sum\limits_{s \in {\cal S}} {P^\pi(s'|s)\sum\limits_{a \in {\cal A}} {\lambda (s,a)} }\\
=& \lambda (s') - \gamma \sum\limits_{s \in {\cal S}} {{P^\pi }(s'|s)\lambda (s)} \\
=& \rho (s'),
\end{align*}
and this proves the second statement.

To prove the third statement, we can first consider the vectorized form of~\cref{eq:lambda-equation1}
\begin{align*}
\lambda  - \gamma {({P^\pi })^T}\lambda  = \rho,
\end{align*}
where
\begin{align*}
\lambda  = \left[ {\begin{array}{*{20}{c}}
{\lambda (1)}\\
 \vdots \\
{\lambda (|{\cal S}|)}
\end{array}} \right] \in {\mathbb R}^{|{\cal S}|},\rho  = \left[ {\begin{array}{*{20}{c}}
{\rho (1)}\\
 \vdots \\
{\rho (|S|)}
\end{array}} \right] \in {\mathbb R}^{|{\cal S}|},{P^\pi } = \left[ {\begin{array}{*{20}{c}}
{{P^\pi }(1|1)}& \cdots &{{P^\pi }(|{\cal S}||1)}\\
 \vdots & \ddots & \vdots \\
{{P^\pi }(1||S|)}& \cdots &{{P^\pi }(|{\cal S}|||{\cal S}|)}
\end{array}} \right] \in {\mathbb R}^{|{\cal S}| \times |{\cal S}|},
\end{align*}
which leads to
\begin{align*}
{\lambda ^T} = {\rho ^T} + \gamma {\rho ^T}{P_\pi } + {\gamma ^2}{\rho ^T}P_\pi ^2 +  \cdots,
\end{align*}
which can be written by
\[\lambda (s) = \sum\limits_{k = 0}^\infty  { \gamma ^k {\mathbb P}[s_k = s|\pi ,\rho ]}  = \frac{1}{1 - \gamma }{\mu ^\pi }(s),\quad s \in {\cal S}.\]
This completes the proof of the third statement.

The last statement can be proved through the following identities:
\begin{align*}
\sum\limits_{(s,a,a') \in {\cal S} \times {\cal A} \times {\cal A}} {\lambda (s,a,a')R(s,a)}  =& \sum\limits_{(s,a) \in {\cal S} \times {\cal A}} {\lambda (s,a)R(s,a)}\\
=& \sum\limits_{(s,a) \in {\cal S} \times {\cal A}} {\pi (a|s)R(s,a)} \lambda (s)\\
=& \sum\limits_{s \in {\cal S}} {R^\pi (s)} \lambda (s)\\
=& \sum\limits_{s \in {\cal S}} {{V^\pi }(s)\rho (s)}\\
=& {J^\pi },
\end{align*}
where $R^\pi$ denotes the expected reward under the policy $\pi$, and $V^\pi$ denotes the value function under $\pi$.
\end{proof}

\section{Appendix:~\cref{prop:dual-property12}}\label{sec:app:dual-feasi2}

Based on~\cref{prop:dual-property1}, we can readily obtain the following corollary.
\begin{corollary}\label{prop:dual-property12}
Suppose that $Q^*$ is the primal optimal solution, and $\lambda^*$ is the dual optimal solution.

Then, we obtain the following results:
\begin{enumerate}
\item $\beta^*$ defined in the following is an optimal policy:
\begin{align*}
\beta^*(s) = \argmax _{a \in {\cal A}}Q^*(s,a).
\end{align*}

\item Suppose that the marginal $\rho (s) = \sum\limits_{a \in {\cal A}} {\rho (s,a)}$ represents the initial state distribution and $\varphi^* (a|s): = \frac{{\rho (s,a)}}{{\sum\limits_{a \in {\cal A}} {\rho (s,a)} }}$ represents an optimal policy, i.e., $\beta^*$. Then, $\pi^*$ defined in the following is an optimal policy:
\begin{align*}
\pi^* ( \cdot |s): = \left[ {\begin{array}{*{20}{c}}
{\frac{\lambda^* (s,1)}{\sum_{a \in {\cal A}} {\lambda^* (s,a)} }}&{\frac{\lambda^* (s,2)}{\sum_{a \in {\cal A}} {\lambda^* (s,a)} }}& \cdots &{\frac{\lambda^* (s,|{\cal A}|)}{\sum_{a \in {\cal A}} \lambda^* (s,a) }}
\end{array}} \right]
\end{align*}

\end{enumerate}
\end{corollary}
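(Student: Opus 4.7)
The plan is to prove the two parts separately, drawing on~\cref{lemma:optimality}, \cref{prop:dual-property1}, and LP strong duality.

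For part 1, I would observe that~\cref{lemma:optimality} already identifies the primal optimal solution with the optimal Q-function $Q^*$. Once this is established, the standard Bellman optimality principle yields that the greedy policy with respect to $Q^*$ is an optimal policy, so $\beta^*(s) = \argmax_{a \in {\cal A}} Q^*(s,a)$ is optimal. This step is essentially immediate.

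For part 2, the idea is to compare the dual optimal value with a closed form for the primal optimal value that results from the stated assumptions on $\rho$. First I would invoke strong LP duality for~\cref{eq:primal-LP1}: the strictly feasible primal point exhibited just after~\cref{eq:primal-LP1} secures Slater's condition (the LP is finite-dimensional with linear constraints), so the primal optimal value $\sum_{(s,a)} \rho(s,a) Q^*(s,a)$ equals the dual optimal value $\sum_{(s,a,a')} \lambda^*(s,a,a') R(s,a)$. Next, using the stated factorization $\rho(s,a) = \rho(s)\varphi^*(a|s)$ with $\varphi^*$ an optimal policy, I would rewrite the primal optimum as $\sum_s \rho(s) \sum_a \varphi^*(a|s) Q^*(s,a)$; since $\varphi^*$ is optimal, $Q^{\varphi^*} = Q^*$ and hence $\sum_a \varphi^*(a|s) Q^*(s,a) = V^{\varphi^*}(s) = V^*(s)$, so the primal optimum equals $\sum_s \rho(s) V^*(s)$, i.e., the optimal cumulative discounted return from the initial distribution $\rho$. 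Then I would apply the fourth part of~\cref{prop:dual-property1} to the dual-feasible $\lambda^*$, obtaining $\sum_{(s,a,a')} \lambda^*(s,a,a') R(s,a) = J^{\pi^*}$, where $\pi^*$ is precisely the policy defined in the statement of the corollary (the row of normalized marginals of $\lambda^*$). Chaining these identities yields $J^{\pi^*} = \sum_s \rho(s) V^*(s)$; combined with the pointwise bound $V^{\pi^*}(s) \le V^*(s)$ and the strict positivity of the marginal $\rho(s)$ (inherited from $\rho(s,a)$ having strictly positive support), this forces $V^{\pi^*} = V^*$ pointwise, so $\pi^*$ is an optimal policy.

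The main obstacle I anticipate is the careful invocation of LP strong duality together with tracking the notational overlap between the generic optimal policy $\pi^*$ from the preliminaries and the dual-induced policy $\pi^*$ defined in the corollary. Once Slater's condition is leveraged to guarantee that both primal and dual optima are attained and equal, the remaining bookkeeping to translate the dual objective into an MDP return via~\cref{prop:dual-property1}, and to pass from equality of scalar returns to pointwise equality of value functions via strict positivity of $\rho$, is straightforward.
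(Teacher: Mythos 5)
Your proposal is correct and follows essentially the same route as the paper: part 1 via \cref{lemma:optimality}, and part 2 by chaining strong LP duality, the factorization $\rho(s,a)=\rho(s)\varphi^*(a|s)$ with $\varphi^*$ optimal, and the fourth statement of \cref{prop:dual-property1} to conclude $J^{\pi^*}$ equals the optimal return. The only difference is that you make two steps more explicit than the paper does --- justifying strong duality via the strictly feasible point, and upgrading equality of scalar returns to pointwise equality $V^{\pi^*}=V^*$ using $V^{\pi^*}\le V^*$ and the strict positivity of $\rho$ --- both of which are sound refinements rather than a different argument.
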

\begin{proof}
The first statement can be directly obtained from~\cref{lemma:optimality} (the primal optimal solution is the optimal Q-function $Q^*$).
For the second statement, we first note that the optimal primal solution $Q^*$ does not depend on $\rho$, while the optimal dual solution $\lambda^*$ depends on $\rho$. Suppose that the marginal
\begin{align*}
\rho (s) = \sum\limits_{a \in A} {\rho (s,a)}
\end{align*}
represents the initial state distribution and
\begin{align*}
\varphi^* (a|s): = \frac{{\rho (s,a)}}{{\sum\limits_{a \in {\cal A}} {\rho (s,a)} }}
\end{align*}
represents an optimal policy. using~\cref{prop:dual-property1} and strong duality, one can derive
\begin{align*}
J^{\varphi ^*}: =& \sum\limits_{(s,a) \in {\cal S} \times {\cal A}} {\rho (s){\varphi ^*}(a|s){Q^*}(s,a)}\\
=& \sum\limits_{(s,a) \in {\cal S} \times {\cal A}} {\rho (s,a) Q^*(s,a)}\\
=& \sum\limits_{(s,a,a') \in {\cal S} \times {\cal A} \times {\cal A}} {\lambda^* (s,a,a')R(s,a)}\\
=& \sum\limits_{(s,a) \in {\cal S} \times {\cal A}} {\lambda^* (s,a)R(s,a)}\\
=&\sum\limits_{(s,a) \in {\cal S} \times {\cal A}} {\pi^*(a|s)R(s,a)\lambda^*(s)}\\
=& \sum\limits_{s \in {\cal S}} {R^{\pi^*} (s)\lambda^* (s)}\\
=& \sum\limits_{s \in {\cal S}} {V^{\pi^*}(s)\rho (s)} \\
=& J^{\pi^*}
\end{align*}
This implies that $\pi^*$ is an optimal policy as well.





\end{proof}

\section{Appendix:~\cref{lemma:bounded-D}}\label{sec:app:property-domain1}

In this paper, using the log-barrier function, the inequality constraints of the LP form of MDPs is integrated into a single objective function as follows:
\begin{align*}
{f_\eta }(Q): = \sum\limits_{(s,a) \in {\cal S} \times {\cal A}} {Q(s,a)\rho (s,a)}  + \eta \sum\limits_{(s,a,a') \in {\cal S} \times {\cal A} \times {\cal A}} {w(s,a,a')\varphi \left( {(FQ)(s,a,a') - Q(s,a)} \right)},
\end{align*}
where $\eta >0$ is the barrier parameter (weight) and $w(s,a,a')>0,(s,a,a')\in {\cal S} \times {\cal A}\times {\cal A}$ are weight parameters of the inequality constraints. Several useful properties of the domain $\cal D$ of $f_\eta$
\begin{align*}
{\cal D}: = \left\{ Q \in {\mathbb R}^{|{\cal S}||{\cal A}|}:(FQ)(s,a,a') - Q(s,a) < 0,(s,a,s') \in {\cal S} \times {\cal A} \times {\cal A} \right\},
\end{align*}
are summarized in the following lemma.
\begin{lemma}\label{lemma:bounded-D}
The following statements hold true:
\begin{enumerate}
\item $\cal D$ is convex and open

\item $\cal D$ is bounded below and unbounded above.
\end{enumerate}
\end{lemma}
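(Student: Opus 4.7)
The plan is to observe that each of the defining inequalities for $\cal D$ is linear in $Q$, so that $\cal D$ is a finite intersection of open half-spaces, and then to treat the two quantitative statements (bounded below, unbounded above) separately by leveraging the same monotonicity-of-$T$ argument used for \cref{lemma:optimality}.

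For the first part, I would rewrite the constraint $(FQ)(s,a,a')-Q(s,a)<0$ as a strict linear inequality $\langle v_{s,a,a'},Q\rangle < -R(s,a)$ for a suitable coefficient vector $v_{s,a,a'}\in\mathbb{R}^{|{\cal S}||{\cal A}|}$ (the $Q(s,a)$ entry gets coefficient $-1$ and each $Q(s',a')$ entry gets coefficient $\gamma P(s'|s,a)$). Each such inequality cuts out an open half-space of $\mathbb{R}^{|{\cal S}||{\cal A}|}$, which is convex and open. Since $\cal D$ is the intersection over the finite index set ${\cal S}\times{\cal A}\times{\cal A}$ of these half-spaces, $\cal D$ is convex and open. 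This step is essentially a one-line observation and I do not expect any obstacle.

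For boundedness from below, the plan is to show $Q\ge Q^*$ componentwise whenever $Q\in{\cal D}$, which gives the uniform lower bound $Q^*$. Fix $Q\in{\cal D}$ and $(s,a)\in{\cal S}\times{\cal A}$. Taking the maximum over $a'\in{\cal A}$ of the strict inequality $(FQ)(s,a,a')<Q(s,a)$ (the max exists because ${\cal A}$ is finite) yields $(TQ)(s,a)\le Q(s,a)$, i.e.\ $TQ\le Q$. Since $T$ is a monotone operator, iterating gives $T^kQ\le Q$ for every $k$, and since $T^kQ\to Q^*$ as $k\to\infty$ (contraction of the Bellman optimality operator), one concludes $Q^*\le Q$. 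This is exactly the argument already used inside the proof of \cref{lemma:optimality}, so I would cite that reasoning rather than reproduce it.

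For unboundedness from above, I would use the explicit family of strictly feasible constant-valued $Q$'s already highlighted in the text. Specifically, for any $c>r_{\max}/(1-\gamma)$, set $Q_c(s,a):=c$ for every $(s,a)$. Then
\begin{align*}
(FQ_c)(s,a,a')-Q_c(s,a)=R(s,a)+\gamma c-c=R(s,a)-(1-\gamma)c\le r_{\max}-(1-\gamma)c<0,
\end{align*}
so $Q_c\in{\cal D}$. Letting $c\to\infty$ shows that $\cal D$ contains points of arbitrarily large norm in every coordinate, so $\cal D$ is unbounded above. The only mild subtlety, and arguably the main obstacle if one wants to be careful, is being precise about what ``bounded below'' and ``unbounded above'' mean for a subset of $\mathbb{R}^{|{\cal S}||{\cal A}|}$ (pointwise vs.\ in norm); I would adopt the pointwise interpretation, which is what the two preceding arguments naturally deliver and which is also the notion actually used elsewhere in the paper.
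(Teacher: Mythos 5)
Your proposal is correct and follows essentially the same route as the paper: openness and convexity from the finite intersection of strict linear inequalities (open half-spaces), the lower bound $Q^*\le Q$ via $TQ\le Q$ plus monotonicity and convergence of $T^k$, and unboundedness above via the family of large constant $Q$'s. If anything, your phrasing of the first part (intersection of open half-spaces) is more precise than the paper's "intersection of affine sets," and your closing remark about the pointwise meaning of bounded below/unbounded above matches the notion the paper implicitly uses.
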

\begin{proof}
For the first statement, let us note that $\cal D$ is the set of points which satisfy the linear inequalities
\begin{align*}
(FQ)(s,a,a') - Q(s,a) < 0,\quad \forall (s,a,a') \in {\cal S} \times {\cal A} \times {\cal A},
\end{align*}
which means that it is an intersections of affine sets, and hence, it is a convex set. Moreover, we can prove that it is open because the inequalities are strict.

For the second statement, one can observe that every $Q\in {\cal D}$ satisfies
\begin{align*}
(FQ)(s,a,a') - Q(s,a) < 0,\quad \forall (s,a,a') \in {\cal S} \times {\cal A} \times {\cal A},
\end{align*}
which implies
\begin{align*}
(TQ)(s,a) - Q(s,a) < 0,\quad \forall (s,a) \in {\cal S} \times {\cal A}
\end{align*}
and vice versa. Next, because the Bellman operator is monotone \citep[Lemma 1.1.1]{bertsekas2012dynamic2}, i.e., for $Q' \ge Q$, $TQ' \ge TQ$, we have ${T^k}Q \le Q$.
Taking the limit $ \lim_{k \to \infty } T^k Q = Q^* \le Q$ leads to the conclusion that $\cal D$ is bounded below. Moreover, let
\begin{align*}
Q(s,a) = \frac{{{r_{\max }} + \varepsilon }}{{1 - \gamma }} > 0,\quad (s,a) \in {\cal S} \times {\cal A}
\end{align*}
with any $\varepsilon>0$. Then, one has
\begin{align*}
&R(s,a) + \gamma \sum\limits_{s' \in S} {P(s'|s,a)Q(s',a')}  - Q(s,a)\\
=& R(s,a) + \gamma \sum\limits_{s' \in S} {P(s'|s,a)\frac{{{r_{\max }} + \varepsilon }}{{1 - \gamma }}}  - \frac{{{r_{\max }} + \varepsilon }}{{1 - \gamma }}\\
=& R(s,a) + (\gamma  - 1)\frac{{{r_{\max }} + \varepsilon }}{{1 - \gamma }}\\
=& R(s,a) - {r_{\max }} - \varepsilon  < 0
\end{align*}
Therefore, $Q$ is strictly feasible. Because $\varepsilon>0$ is arbitrary, $\cal D$ is unbounded above, and we can conclude the proof.
\end{proof}

The properties of the domain $\cal D$ established in the preceding lemma provide not only conceptual insight but also play a crucial role in the proofs of several subsequent results.

\section{Appendix:~\cref{lemma:bounded-levelset}}\label{sec:app:property-level1}

Several useful properties of the level set ${\cal L}_c$ defined as
\begin{align*}
{\cal L}_c: = \{ Q \in {\cal D}:{f_\eta }(Q) \le c\}
\end{align*}
are summarized in the following lemma.
\begin{lemma}\label{lemma:bounded-levelset}
For any given $c>0$, the following statements hold true:
\begin{enumerate}

\item ${\cal L}_c$ is convex and closed.

\item ${\cal L}_c$ is bounded.

\item there exists a constant $m_c>0$ such that
\[Q(s,a) - (FQ)(s,a,a') \ge m_c,\quad \forall (s,a,a') \in {\cal S} \times {\cal A} \times {\cal A},\quad \forall Q\in {\cal L}_c\]
\end{enumerate}
\end{lemma}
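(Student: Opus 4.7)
The plan is to address the three claims sequentially, with the bulk of the effort spent on boundedness. Convexity of $\mathcal{L}_c$ follows immediately because $f_\eta$ is convex on the convex set $\mathcal{D}$, so the sublevel set is the intersection of two convex sets. Closedness requires exploiting the barrier blow-up at $\partial\mathcal{D}$: for any sequence $\{Q_k\}\subset\mathcal{L}_c$ with $Q_k\to Q_\infty$, we need $Q_\infty\in\mathcal{L}_c$. If $Q_\infty\in\mathcal{D}$, continuity of $f_\eta$ gives $f_\eta(Q_\infty)\le c$. If $Q_\infty\in\partial\mathcal{D}$, then at least one constraint becomes active, so $(FQ_\infty)(s,a,a')-Q_\infty(s,a)=0$ for some triple; the corresponding $\varphi(\cdot)=-\ln(-\cdot)$ term along the sequence tends to $+\infty$, while the remaining terms stay bounded because the $Q_k$'s lie in a compact set on which all other constraint values remain in a bounded range. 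This contradicts $f_\eta(Q_k)\le c$.

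The boundedness claim is the main obstacle, since \cref{lemma:bounded-D} only furnishes a lower bound on $\mathcal{D}$. Suppose for contradiction that $\{Q_k\}\subset\mathcal{L}_c$ with $t_k:=\|Q_k\|\to\infty$, set $d_k:=Q_k/t_k$, and pass to a subsequence so that $d_k\to d^*$ with $\|d^*\|=1$. Dividing the inequality $(FQ_k)(s,a,a')-Q_k(s,a)<0$ by $t_k$ and letting $k\to\infty$ yields the recession-cone condition
\begin{align*}
d^*(s,a) \ge \gamma\sum_{s'\in\mathcal{S}}P(s'|s,a)d^*(s',a'),\quad\forall\,(s,a,a')\in\mathcal{S}\times\mathcal{A}\times\mathcal{A}.
\end{align*}
A minimum-component argument then forces $d^*\ge 0$ componentwise: if $m=\min_{s,a}d^*(s,a)$ is attained at $(s^*,a^*)$, substituting into the above for any fixed $a'$ gives $m\ge\gamma m$, hence $m\ge 0$ since $\gamma<1$. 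Since $\|d^*\|=1$ and $d^*\ge 0$, strict positivity of $\rho$ yields $\sum_{s,a}\rho(s,a)d^*(s,a)\ge\alpha$ for some $\alpha>0$. Hence the linear part of $f_\eta(Q_k)$ grows like $\alpha t_k$, while the magnitudes of the barrier arguments are $O(t_k)$, so each $\varphi$ term is bounded below by $-\ln t_k-\mathrm{const}$. Combining, $f_\eta(Q_k)\ge \alpha t_k - C\ln t_k - C'\to+\infty$, contradicting $Q_k\in\mathcal{L}_c$.

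For the final claim, once $\mathcal{L}_c$ is convex, closed, and bounded, it is a compact subset of $\mathcal{D}$ in $\mathbb{R}^{|\mathcal{S}||\mathcal{A}|}$. The map $Q\mapsto Q(s,a)-(FQ)(s,a,a')$ is affine, hence continuous, and strictly positive on $\mathcal{L}_c$ for each fixed $(s,a,a')$. Minimizing over the compact product $\mathcal{L}_c\times\mathcal{S}\times\mathcal{A}\times\mathcal{A}$ (the latter factor being finite) yields a positive minimum
\begin{align*}
m_c:=\min\bigl\{Q(s,a)-(FQ)(s,a,a'):Q\in\mathcal{L}_c,\,(s,a,a')\in\mathcal{S}\times\mathcal{A}\times\mathcal{A}\bigr\}>0,
\end{align*}
as required. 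The hard step throughout is the boundedness argument, whose success depends crucially on combining the recession-cone analysis with the strict positivity of $\rho$; the other two claims then reduce to standard convex-analytic and compactness arguments.
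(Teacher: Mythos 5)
Your proof is correct, and for the two substantive claims it takes a genuinely different route from the paper. For boundedness, the paper splits the argument: it inherits the lower bound $Q \ge Q^*$ on all of ${\cal D}$ from \cref{lemma:bounded-D} (via monotonicity of the Bellman operator), and then rules out unboundedness above by tracking the maximal component $Q_k(s_k,a_k)$, showing the linear term grows like $Q_k(s_k,a_k)\min\rho$ while the barrier terms lose only $O(\ln Q_k(s_k,a_k))$. Your recession-cone argument handles both directions at once: the constraints force the asymptotic direction $d^*$ to satisfy $d^*(s,a)\ge\gamma\sum_{s'}P(s'|s,a)d^*(s',a')$, the minimum-component step gives $d^*\ge 0$, and strict positivity of $\rho$ then makes the linear term dominate the logarithmic loss from the barrier. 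This avoids any appeal to $Q^*$ or Bellman monotonicity and is the standard convex-analytic argument that the recession cone of the sublevel set is trivial; the paper's version is more hands-on but leans on MDP-specific structure. For the third claim, the paper runs a second blow-up contradiction on the barrier term, whereas you invoke compactness of ${\cal L}_c$ and Weierstrass on the finitely many continuous, positive slack functions --- cleaner, but it requires ${\cal L}_c$ to be closed in the ambient space, not merely relatively closed in ${\cal D}$. Your closedness argument supplies exactly this: the paper only considers limits $\bar Q\in{\cal D}$ and is silent on limits in $\partial{\cal D}$, where the single-term barrier blow-up you describe (with the remaining terms bounded below on the bounded sequence) is needed. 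On that point your version is the more complete one.
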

\begin{proof}
First of all, it is known that the log-barrier function is convex~\citep[Chapter 11.2.1]{Boyd2004}, and the composition of a convex function and affine function is convex~\citep[Chapter 3.2.2]{Boyd2004}. Since a sublevel set of a convex function is convex~\citep[Chapter 3.1.6]{Boyd2004}, ${\cal L}_c$ is convex.

Moreover, let us consider any sequence $Q_k \in {\cal L}_c$ such that $\lim_{x \to \infty } {Q_k} = \bar Q \in {\cal D}$. We want to prove that $\bar Q \in {\cal L}_c$ so that ${\cal L}_c$ is closed. This is true because $f_\eta$ is continuous in its domain (because it is convex in the domain), i.e.,
\begin{align*}
\mathop{\lim }\limits_{x \to \infty } {f_\eta }({Q_k}) = {f_\eta }(\mathop{\lim }\limits_{x \to \infty } {Q_k}) = {f_\eta }(\bar Q) \le c.
\end{align*}
This completes the proof of the first statement.

For the second statement, we note that by~\cref{lemma:bounded-D}, every $Q\in {\cal D}$ is bounded below. Therefore, ${\cal L}_c \subset {\cal D}$ is bounded below as well. To prove the upper bound, let us assume that ${\cal L}_c$ is unbounded from above. In this case, we can find a sequence $(Q_k)_{k=0}^\infty$ with $Q_k \in {\cal L}_c$ such that
\[\mathop{\lim }\limits_{k \to \infty } {\max _{(s,a) \in {\cal S} \times {\cal A}}}{Q_k}(s,a) = \infty \]
To proceed, let us define
\begin{align*}
(s_k,a_k): = \argmax_{(s,a) \in {\cal S} \times {\cal A}} {Q_k}(s,a).
\end{align*}

Then, we have
\begin{align*}
0 <& {Q_k}(s,a) - (F{Q_k})(s,a,a')\\
\le& {Q_k}({s_k},{a_k}) + {\max _{(s,a) \in {\cal S} \times {\cal A}}}|R(s,a)| + \gamma \sum\limits_{s' \in {\cal S}} {P(s'|s,a)} {Q_k}({s_k},{a_k})\\
\le& (1 + \gamma ){Q_k}({s_k},{a_k}) + {r_{\max }},\quad \forall (s,a,a') \in {\cal S} \times {\cal A} \times {\cal A}.
\end{align*}
Because $-\ln$ is a nonincreasing function, it follows that
\begin{align*}
- \ln \left\{ {{Q_k}(s,a) - (F{Q_k})(s,a,a')} \right\} \ge  - \ln \left\{ {(1 + \gamma ){Q_k}({s_k},{a_k}) + {r_{\max }}} \right\},\quad \forall (s,a,a') \in {\cal S} \times {\cal A} \times {\cal A}.
\end{align*}
Using this inequality, the objective function is bounded as
\begin{align*}
f_\eta(Q_k) =& \sum\limits_{(s,a) \in {\cal S} \times {\cal A}} {Q_k(s,a)\rho (s,a)}  - \eta \sum\limits_{(s,a,a') \in {\cal S} \times {\cal A} \times {\cal A}} {w(s,a,a')\ln \left[ {(F{Q_k})(s,a,a') - Q_k(s,a)} \right]}\\
\ge& \sum\limits_{(s,a) \in {\cal S} \times {\cal A}} {Q_k(s,a)\rho (s,a)}  - \eta \sum\limits_{(s,a,a') \in {\cal S} \times {\cal A} \times {\cal A}} {w(s,a,a')\ln \left[ {(1 + \gamma ){Q_k}(s_k,a_k) + r_{\max}} \right]}\\
\ge& Q_k(s_k,a_k)\rho ({s_k},{a_k}) + \sum\limits_{(s,a) \in ({\cal S} \times {\cal A})\backslash (s_k,a_k)} {Q^*(s,a)\rho (s,a)} \\
&- \eta \sum\limits_{(s,a,a') \in {\cal S} \times {\cal A} \times {\cal A}} {w(s,a,a')\ln \left[ {(1 + \gamma ){Q_k}(s_k,a_k) + r_{\max }} \right]}\\
\ge& Q_k(s_k,a_k){\min _{(s,a) \in {\cal S} \times {\cal A}}}\rho (s,a)\\
&+ (|{\cal S} \times {\cal A}| - 1){\min _{(s,a) \in {\cal S} \times {\cal A}}}Q^*(s,a)\rho (s,a)\\
&- \eta \sum\limits_{(s,a,a') \in {\cal S} \times {\cal A} \times {\cal A}} {w(s,a,a')\ln \left[ {(1 + \gamma ){Q_k}(s_k,a_k) + r_{\max }} \right]}.
\end{align*}

By taking the limit, we have
\begin{align*}
\mathop{\lim }\limits_{k \to \infty } {f_\eta }(Q_k) \ge& \mathop{\lim }\limits_{k \to \infty } {Q_k}(s_k,a_k){\min _{(s,a) \in {\cal S} \times {\cal A}}}\rho (s,a)\\
& + (|{\cal S} \times {\cal A}| - 1){\min _{(s,a) \in {\cal S} \times {\cal A}}}Q^*(s,a)\rho (s,a)\\
& - \eta \sum\limits_{(s,a,a') \in {\cal S} \times {\cal A} \times {\cal A}} {w(s,a,a')\mathop{\lim }\limits_{k \to \infty } \ln \left[ {(1 + \gamma ){Q_k}(s_k,a_k) + r_{\max }} \right]} \\
=& \infty
\end{align*}
which is a contradiction. Therefore, ${\cal L}_c$ is bounded.

Next, we prove the last statement. Since ${\cal L}_c$ is bounded, for any $Q\in {\cal L}_c$, the first term in $f_\eta$ is bounded as follows:
\begin{align*}
\left| {\sum\limits_{(s,a) \in {\cal S} \times {\cal A}} {Q(s,a)\rho (s,a)} } \right| \le B_1,\quad \forall Q \in {\cal L}_c,
\end{align*}
where $B_1>0$ is some constant. By contradiction, let us assume that
\[{\inf _{Q \in {\cal L}_c}}{\min _{(s,a,a') \in {\cal S} \times {\cal A} \times {\cal A}}}\{ Q(s,a) - (FQ)(s,a,a')\}  = 0\]
Then, there exists a sequence $Q_k \in {\cal L}_c$ such that
\[\mathop{\lim }\limits_{k \to \infty } {\min _{(s,a,a') \in {\cal S} \times {\cal A} \times {\cal A}}}\{ Q_k(s,a) - (F{Q_k})(s,a,a')\} = 0. \]

Therefore, one gets
\begin{align*}
{f_\eta }(Q_k) =& \sum\limits_{(s,a) \in {\cal S} \times {\cal A}} {Q_k(s,a)\rho (s,a)}  - \eta \sum\limits_{(s,a,a') \in {\cal S} \times {\cal A} \times {\cal A}} {w(s,a,a')\ln \left\{ {Q_k(s,a) - (F{Q_k})(s,a,a')} \right\}}  \\
\ge&  - {B_1} - \eta \sum\limits_{(s,a,a') \in S \times A \times A} {w(s,a,a')\ln \left\{ {{Q_k}(s,a) - (F{Q_k})(s,a,a')} \right\}}.
\end{align*}
By taking the limit
\begin{align*}
c \ge& \mathop{\lim }\limits_{k \to \infty } {f_\eta }(Q_k)\\
 \ge&  - {B_1} - \eta \mathop{\max }\limits_{(s,a,a') \in {\cal S} \times {\cal A} \times {\cal A}} w(s,a,a')|{\cal S}||{\cal A}|^2 \mathop{\lim }\limits_{k \to \infty } \ln \left[ {{{\min }_{(s,a,a') \in {\cal S} \times {\cal A} \times {\cal A}}}\left\{ Q_k(s,a) - (F Q_k)(s,a,a') \right\}} \right]\\
  =& \infty.
\end{align*}
which is a contradiction.
Therefore, there should exist $m_c >0$ such that
\[{\inf _{Q \in {\cal L}_c}}{\min _{(s,a,a') \in {\cal S} \times {\cal A} \times {\cal A}}}\{ Q(s,a) - (FQ)(s,a,a')\}  \geq m_c.\]
This completes the overall proof.
\end{proof}

\section{Appendix:~\cref{lemma:properties1}}\label{sec:app:property-level2}

The following summarizes several properties of the objective function $f_\eta$ related to its convexity and smoothness.
\begin{lemma}\label{lemma:properties1}
For any given $c>0$, we have
\begin{enumerate}
\item ${\nabla _Q}{f_\eta }(Q)$ is bounded in ${\cal L}_c$

\item $f_\eta$ is $\mu$-strongly convex and $L$-smooth in ${\cal L}_c$

\item Moreover, $f_\eta$ is strictly convex in the domain $\cal D$.
\end{enumerate}
\end{lemma}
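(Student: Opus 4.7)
The plan is to derive all three claims from a single closed-form expression for the Hessian $\nabla_Q^2 f_\eta(Q)$ on ${\cal D}$. Because $\varphi(y)=-\ln(-y)$ satisfies $\varphi''(y)=1/y^2$ on its domain, and $g_{s,a,a'}(Q):=(FQ)(s,a,a')-Q(s,a)$ is affine in $Q$, the chain rule yields
\begin{align*}
\nabla_Q^2 f_\eta(Q) \;=\; \eta\sum_{(s,a,a')\in{\cal S}\times{\cal A}\times{\cal A}}\frac{w(s,a,a')}{g_{s,a,a'}(Q)^2}\,\nabla_Q g_{s,a,a'}\,(\nabla_Q g_{s,a,a'})^{T},
\end{align*}
where $\nabla_Q g_{s,a,a'}\in{\mathbb R}^{|{\cal S}||{\cal A}|}$ is a $Q$-independent vector. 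The three claims then follow from controlling the scalar coefficients $w/g^2$ on ${\cal L}_c$ (for Claims 1 and 2) and on ${\cal D}$ (for Claim 3).

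For Claim 1, I would invoke \cref{lem:gradient} to rewrite each component of $\nabla_Q f_\eta(Q)$ in terms of $\lambda_\eta(s,a,a')=\eta w(s,a,a')/[Q(s,a)-(FQ)(s,a,a')]$. By \cref{lemma:bounded-levelset}, the denominator is at least $m_c>0$ on ${\cal L}_c$, so each $\lambda_\eta(s,a,a')$ is uniformly bounded by $\eta\max_{(s,a,a')}w(s,a,a')/m_c$. Substituting this into the gradient expression and using that $\rho$ and each $P(\cdot|s,a)$ are probability distributions then produces an $\ell_\infty$-bound on $\nabla_Q f_\eta(Q)$ depending only on $\eta,\gamma,w,m_c,$ and $|{\cal A}|$.

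For Claim 2, boundedness of ${\cal L}_c$ from \cref{lemma:bounded-levelset} together with the affinity of $g_{s,a,a'}$ supplies a uniform upper bound $M_c<\infty$ on $|g_{s,a,a'}(Q)|$, while the same lemma yields $g_{s,a,a'}(Q)^2\ge m_c^2$. Defining the fixed positive semidefinite matrix $H:=\sum_{(s,a,a')}w(s,a,a')\,\nabla_Q g_{s,a,a'}(\nabla_Q g_{s,a,a'})^{T}$, the Hessian formula produces the uniform sandwich
\begin{align*}
\frac{\eta}{M_c^{2}}\,H \;\preceq\; \nabla_Q^2 f_\eta(Q) \;\preceq\; \frac{\eta}{m_c^{2}}\,H,\qquad \forall Q\in{\cal L}_c,
\end{align*}
so $\mu$-strong convexity and $L$-smoothness hold with $\mu=\eta\lambda_{\min}(H)/M_c^{2}$ and $L=\eta\lambda_{\max}(H)/m_c^{2}$---provided $H$ is positive definite, which is exactly the content of Claim 3.

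The main obstacle is therefore to show $H\succ 0$, equivalently that the linear map $L_0:Q\mapsto\bigl(\gamma\sum_{s'}P(s'|s,a)Q(s',a')-Q(s,a)\bigr)_{(s,a,a')}$ is injective. My plan is a short spectral argument that avoids any ergodicity assumption: suppose $L_0(Q)=0$, fix any reference action $a_0\in{\cal A}$, and evaluate the defining identity at triples $(s,a_0,a_0)$ to obtain $Q(\cdot,a_0)=\gamma P_{a_0}Q(\cdot,a_0)$, where $P_{a_0}$ denotes the $|{\cal S}|\times|{\cal S}|$ stochastic matrix under action $a_0$. Since $\gamma\in[0,1)$ and the spectral radius of $P_{a_0}$ is at most $1$, the matrix $I-\gamma P_{a_0}$ is invertible and hence $Q(\cdot,a_0)\equiv 0$. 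Substituting back into $L_0(Q)(s,a,a_0)=0$ for arbitrary $(s,a)$ gives $Q(s,a)=\gamma\sum_{s'}P(s'|s,a)\cdot 0=0$, so $Q\equiv 0$. With injectivity in hand, $H\succ 0$; strict convexity of $f_\eta$ on ${\cal D}$ then follows because the strictly convex map $(g_{s,a,a'})\mapsto\sum_{(s,a,a')}w(s,a,a')\varphi(g_{s,a,a'})$ precomposed with the injective affine map $Q\mapsto(g_{s,a,a'}(Q))$ remains strictly convex, and adding the linear term $\sum\rho Q$ preserves strict convexity.
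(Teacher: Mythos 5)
Your proof is correct and follows essentially the same route as the paper: the same rank-one decomposition of the Hessian, the same use of \cref{lemma:bounded-levelset} to sandwich the denominators between $m_c$ and $M_c$, and the same key step of establishing positive definiteness of the Gram matrix via nonsingularity of $I-\gamma P_a$. The only cosmetic differences are that you propagate injectivity from a single reference action $a_0$ rather than applying the nonsingularity to every action, and that you conclude strict convexity on ${\cal D}$ by precomposing a strictly convex separable function with an injective affine map, whereas the paper invokes the second-order condition with a $Q$-dependent lower bound on the Hessian; both arguments rest on the same injectivity fact and are equivalent.
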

\begin{proof}[Proof of the first statement]
By~\cref{lemma:bounded-levelset}, there exists a constant $m_c>0$ such that
\[Q(s,a) - (FQ)(s,a,a') \ge {m_c},\quad \forall (s,a,a') \in {\cal S} \times {\cal A} \times {\cal A}\]
for any $Q \in {\cal L}_c$.
Therefore, we can derive the following bounds
\begin{align*}
|({\nabla _Q}f(Q))(s,a)| \le& 1 + \gamma \sum\limits_{(s',a') \in {\cal S} \times {\cal A}} {P(s|s',a')|{\lambda _\eta }(s',a',a)|}  + \sum\limits_{a' \in {\cal A}} {|{\lambda _\eta }(s,a,a')|} \\
\le& 1 + \gamma \sum\limits_{(s',a') \in {\cal S} \times {\cal A}} {P(s|s',a')|{\lambda _\eta }(s',a',a)|}  + \sum\limits_{a' \in {\cal A}} {|{\lambda _\eta }(s,a,a')|}\\
\le& 1 + \gamma \sum\limits_{(s',a') \in {\cal S} \times {\cal A}} {P(s|s',a')\frac{{\eta w(s',a',a)}}{{Q(s',a') - (FQ)(s',a',a)}}}\\
&  + \sum\limits_{a' \in {\cal A}} {\frac{{\eta w(s,a,a')}}{{Q(s,a) - (FQ)(s,a,a')}}} \\
\le& 1 + \gamma \mathop{\max }\limits_{(s,a,a') \in {\cal S} \times {\cal A} \times {\cal A}} w(s,a,a')\frac{{\eta |{\cal S}||{\cal A}|}}{m_c} + \mathop{\max }\limits_{(s,a,a') \in {\cal S} \times {\cal A} \times {\cal A}} w(s,a,a')\frac{{\eta |{\cal A}|}}{m_c}
\end{align*}
which completes the proof of the first statement.
\end{proof}

\begin{proof}[Proof of the second statement]
For the second statement, since the log-barrier function is strongly convex in a convex compact set, $f_\eta$ is also strongly convex in the set because it is a composition of a linear function and strongly convex function. To be more precise, for all $(s,a,a')\in {\cal S}\times {\cal A}\times {\cal A}$, let us define
\[{v_{s,a,a'}}: = {\nabla _Q}(Q(s,a) - (FQ)(s,a,a')) = {e_{s,a}} - \gamma \sum\limits_{s' \in {\cal S}} {P(s'|s,a){e_{s',a'}}} \]
where ${e_{s,a}} = e_s \otimes e_a \in {\mathbb R}^{|{\cal S}||{\cal A}|}$, and $e_s \in {\mathbb R}^{|{\cal S}|}$ and $e_a \in {\mathbb R}^{|{\cal A}|}$ are $s$th basis vector (all components
are $0$ except for the $s$th component which is $1$) and $a$th basis vector, respectively. Then, the gradient can be written as
\[{\nabla _Q}{f_\eta }(Q) = \rho  - \eta \sum\limits_{(s,a,a') \in {\cal S} \times {\cal A} \times {\cal A}} {\frac{w(s,a,a')}{{Q(s,a) - (FQ)(s,a,a')}}{v_{s,a,a'}}}  \in {\mathbb R}^{|{\cal S}||{\cal A}|}\]
and the Hessian is
\[\nabla _Q^2{f_\eta }(Q) = \eta \sum\limits_{(s,a,a') \in {\cal S} \times {\cal A} \times {\cal A}} {\frac{{w(s,a,a')}}{{{{\{ Q(s,a) - (FQ)(s,a,a')\} }^2}}}{v_{s,a,a'}}v_{s,a,a'}^T}  \in {\mathbb R}^{|{\cal S}||{\cal A}| \times |{\cal S}||{\cal A}|}\]
By~\cref{lemma:bounded-levelset}, we have
\[{m_c} \le Q(s,a) - (FQ)(s,a,a') \le {M_c},\quad \forall Q \in {\cal L}_c,(s,a,a') \in {\cal S} \times {\cal A} \times {\cal A}\]
for some $m_c, M_c>0$, where the bounds come from the fact that ${\cal L}_c$ is bounded in~\cref{lemma:bounded-levelset}. Then, one has
\begin{align*}
{x^T}\nabla _Q^2{f_\eta }(Q)x =& \eta \sum\limits_{(s,a,a') \in {\cal S} \times {\cal A} \times {\cal A}} {\frac{{w(s,a,a'){{(v_{s,a,a'}^Tx)}^2}}}{{{{\{ Q(s,a) - (FQ)(s,a,a')\} }^2}}}} \\
\ge& \frac{{\eta \mathop{\min }\limits_{(s,a,a') \in {\cal S} \times {\cal A} \times {\cal A}} w(s,a,a')}}{{M_c^2}}\sum\limits_{(s,a,a') \in {\cal S} \times {\cal A} \times {\cal A}} {{{(v_{s,a,a'}^Tx)}^2}} \\
=& \frac{{\eta \mathop{\min }\limits_{(s,a,a') \in {\cal S} \times {\cal A} \times {\cal A}} w(s,a,a')}}{{M_c^2}}{x^T}Sx,
\end{align*}
where
\[S: = \sum\limits_{(s,a,a') \in {\cal S} \times {\cal A} \times {\cal A}} {{v_{s,a,a'}}v_{s,a,a'}^T}  \succeq 0.\]
Next, we prove that the above $S$ is strictly positive definite.
By contradiction, let us suppose that $S$ is not strictly positive definite.
Then, there exists a nonzero $x$ such that
\begin{align*}
x^T \sum\limits_{(s,a,a') \in {\cal S} \times {\cal A} \times {\cal A}} {{v_{s,a,a'}}v_{s,a,a'}^T} x = \sum\limits_{(s,a,a') \in {\cal S} \times {\cal A} \times {\cal A}} {{{(v_{s,a,a'}^Tx)}^2}}  = 0.
\end{align*}
This implies
\[v_{s,a,a'}^Tx = 0,\quad \forall (s,a,a') \in {\cal S} \times {\cal A} \times {\cal A},\]
which can be equivalently written as
\[e_{s,a}^Tx - \gamma \sum\limits_{s' \in {\cal S}} {P(s'|s,a)e_{s',a'}^Tx}  = x(s,a) - \gamma \sum\limits_{s' \in {\cal S}} {P(s'|s,a)x(s',a')}  = 0,\quad \forall (s,a,a') \in {\cal S} \times {\cal A} \times {\cal A}.\]

Now, let us fix $a=a'\in {\cal A}$ and
\[{(I - \gamma {P_a})^T}{x_a} = 0\]
where
\[{x_a} = \left[ {\begin{array}{*{20}{c}}
{x(1,a)}\\
 \vdots \\
{x(|S|,a)}
\end{array}} \right] \in {\mathbb R}^{|{\cal S}|},\quad {P_a} = \left[ {\begin{array}{*{20}{c}}
{P(1|1,a)}& \cdots &{P(|{\cal S}||1,a)}\\
 \vdots & \ddots & \vdots \\
{P(1||S|,a)}& \cdots &{P(|{\cal S}|||{\cal S}|,a)}
\end{array}} \right] \in {\mathbb R}^{|{\cal S}| \times |{\cal S}|}\]

Since $I - \gamma {P_a}$ is nonsingular for any $a\in {\cal A}$, $x_a =0$ for all $a\in {\cal A}$. This is a contradiction. Therefore, $S \succ 0$ and
\begin{align*}
\nabla _Q^2{f_\eta }(Q) \succeq \frac{\eta \min\limits_{(s,a,a') \in {\cal S} \times {\cal A} \times {\cal A}} w(s,a,a')\lambda _{\min }(S)}{M_c^2}I,
\end{align*}
where $\lambda _{\min }(\cdot)$ denotes the minimum eigenvalue.
This means that in ${\cal L}_c$, $f_\eta$ is $\mu$-strongly convex~\citep[Theorem~2.1.11]{nesterov2018lectures} with
\begin{align*}
\mu  = \frac{\eta \min\limits_{(s,a,a') \in {\cal S} \times {\cal A} \times {\cal A}} w(s,a,a'){\lambda _{\min }}(S)}{M_c^2}.
\end{align*}

Similarly, we can easily show that in ${\cal L}_c$
\begin{align*}
{x^T}\nabla _Q^2{f_\eta }(Q)x =& \eta \sum\limits_{(s,a,a') \in {\cal S} \times {\cal A} \times {\cal A}} {\frac{{w(s,a,a'){(v_{s,a,a'}^Tx)^2}}}{{\{ Q(s,a) - (FQ)(s,a,a')\}^2}}} \\
\le& \frac{\eta \max \limits_{(s,a,a') \in {\cal S} \times {\cal A} \times {\cal A}} w(s,a,a')}{{m_c^2}}\sum\limits_{(s,a,a')\in {\cal S}\times {\cal A}\times {\cal A}} {(v_{s,a,a'}^Tx)^2}\\
=& \eta \frac{1}{m_c^2} x^T Sx\\
\le& \frac{\eta  \max \limits_{(s,a,a') \in {\cal S} \times {\cal A} \times {\cal A}} w(s,a,a'){\lambda _{\max }}(S)}{m_c^2}\left\| x \right\|_2^2,
\end{align*}
where $\lambda _{\max}(\cdot)$ denotes the maximum eigenvalue.
This implies
\begin{align*}
{\left\| {{\nabla _Q}{f_\eta }(Q) - {\nabla _Q}{f_\eta }(Q')} \right\|_2} \le L{\left\| {Q - Q'} \right\|_2}
\end{align*}
with
\begin{align*}
L = \frac{{\eta \mathop{\max }\limits_{(s,a,a') \in {\cal S} \times {\cal A} \times {\cal A}} w(s,a,a'){\lambda _{\max }}(S)}}{m_c^2}.
\end{align*}
by the sufficient condition in~\citet[Lemma~1.2.2]{nesterov2018lectures}.
Therefore, $f_\eta$ is $L$-smooth in ${\cal L}_c$.
\end{proof}

\begin{proof}[Proof of the third statement]
In the domain $Q\in {\cal D}$ which is unbounded above, by~\cref{lemma:bounded-levelset}, we have
\[{m_c} \le Q(s,a) - (FQ)(s,a,a'),\quad \forall Q \in {\cal D},(s,a,a') \in {\cal S} \times {\cal A} \times {\cal A}\]
Therefore, the Hessian is bounded as
\begin{align*}
{x^T}\nabla _Q^2{f_\eta }(Q)x \ge \frac{{\eta \mathop{\min }\limits_{(s,a,a') \in {\cal S} \times {\cal A} \times {\cal A}} w(s,a,a')}}{{M{{(Q)}^2}}}{x^T}Sx,
\end{align*}
where
\begin{align*}
{\max _{(s,a,a') \in {\cal S} \times {\cal A} \times {\cal A}}}\{ Q(s,a) - (FQ)(s,a,a')\}  = M(Q) > 0,\quad \forall Q \in {\cal D}.
\end{align*}
Since $S$ is strictly positive definite, the above result implies that the Hessian is strictly positive definite
\begin{align*}
\nabla _Q^2{f_\eta }(Q) \succ 0,\quad \forall Q \in {\cal D}.
\end{align*}
Therefore, by the second-order condition in~\citet[Chapter~3.1.4]{Boyd2004}, $f_\eta$ is strictly convex in $\cal D$.
\end{proof}

\section{Appendix:~\cref{thm:convergence-GD}}\label{sec:app:gradient1}
When gradient descent is applied to the objective function $f_\eta$, we obtain the following results on its convergence rate. An interesting observation is that, although $f_\eta$ is strictly convex within the domain $\cal D$ but not strongly convex, the convergence result coincides with that of a globally strongly convex function defined on the domain. The reason is that, while $f_\eta$ is only strictly convex over $\cal D$, it is both strongly convex and $L$-smooth within its sublevel sets ${\cal L}_c$. Exploiting this property, we can prove that the convergence rate for the gradient descent iterates of $f_\eta$ matches that of a strongly convex function on the domain. This constitutes one of the key results of this paper.

\begin{theorem}\label{thm:convergence-GD}
Let us consider the gradient descent iterates:
\begin{align*}
Q_{k + 1} = Q_k - \alpha {\left. {{\nabla _Q}f_\eta (Q)} \right|_{Q = Q_k}},
\end{align*}
for $k=0,1,\ldots$ with a strictly feasible initial point $Q_0 \in {\cal D}$.
With the step-size $0<\alpha \leq \frac{2}{L_c+\mu_c}$, the iterates satisfies
\[\left\| {{Q_k} - {{\tilde Q}_\eta }} \right\|_2^2 \le {\left( {1 - \frac{{2\alpha {\mu _c}{L_c}}}{{\mu _c + L_c}}} \right)^k}\left\| {{Q_0} - {{\tilde Q}_\eta }} \right\|_2^2\]
where $\mu_c>0$ is the coefficient of the strong convexity and $L_c >0$ is the Lipschitz constant of the gradient of $f_\eta$ within the level set ${\cal L}_c$ where $c > f_\eta (Q_0)$,
\end{theorem}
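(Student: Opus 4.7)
The plan is to reduce the analysis to the classical convergence rate for gradient descent on strongly convex and smooth functions, by trapping the iterates in a bounded sublevel set of $f_\eta$. Although $f_\eta$ is only strictly convex on the full domain $\mathcal{D}$, \cref{lemma:properties1} guarantees that on every sublevel set $\mathcal{L}_c$ it is $\mu_c$-strongly convex and $L_c$-smooth. The crux is therefore to show that the gradient descent trajectory never leaves such a set, after which the textbook contraction rate applies verbatim.

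\textbf{Setup.} First I would fix any $c > f_\eta(Q_0)$. By \cref{lemma:bounded-levelset}, $\mathcal{L}_c$ is convex, closed, bounded, and contained in $\mathcal{D}$, and it contains both $Q_0$ and the minimizer $\tilde Q_\eta$. By \cref{lemma:properties1}, $f_\eta$ is $\mu_c$-strongly convex and $L_c$-smooth on $\mathcal{L}_c$, with $\mu_c>0$. The strict inequality $f_\eta(Q_0) < c$ provides the slack needed for a descent bootstrapping argument.

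\textbf{Invariance of $\mathcal{L}_c$.} Next I would prove by induction that $Q_k \in \mathcal{L}_c$ for every $k$. Parametrize the update by $\phi(t) := Q_k - t\alpha \nabla_Q f_\eta(Q_k)$ for $t \in [0,1]$, and set
\[
t^* := \sup\{\,t \in [0,1] : \phi(s) \in \mathcal{L}_c \text{ for every } s \in [0,t]\,\}.
\]
Since $\mathcal{L}_c$ is closed and $\phi$ continuous, the supremum is attained, and the entire segment $\phi([0,t^*])$ lies in $\mathcal{L}_c$, where the $L_c$-smoothness estimate is in force. Applying the descent lemma along this segment gives
\[
f_\eta(\phi(t^*)) \leq f_\eta(Q_k) - t^*\alpha\!\left(1 - \tfrac{L_c t^* \alpha}{2}\right)\|\nabla_Q f_\eta(Q_k)\|_2^2 \leq f_\eta(Q_k) < c,
\]
where the step-size bound $\alpha \leq 2/(L_c+\mu_c) < 2/L_c$ (strict because $\mu_c>0$) ensures the quadratic term is dominated. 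Hence $f_\eta(\phi(t^*)) < c$, so by continuity of $f_\eta$ an entire neighborhood of $\phi(t^*)$ stays inside $\mathcal{L}_c$; the definition of $t^*$ then forces $t^*=1$, i.e.\ $Q_{k+1}=\phi(1) \in \mathcal{L}_c$.

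\textbf{Contraction and main obstacle.} Once invariance is established, both $Q_k$ and $\tilde Q_\eta$ lie in $\mathcal{L}_c$. Using $\nabla_Q f_\eta(\tilde Q_\eta) = 0$ together with the co-coercivity inequality for $\mu_c$-strongly convex, $L_c$-smooth functions (which holds on the convex set $\mathcal{L}_c$, cf.~\citealp[Theorem~2.1.15]{nesterov2018lectures}), the standard one-step analysis yields
\[
\|Q_{k+1} - \tilde Q_\eta\|_2^2 \leq \left(1 - \frac{2\alpha \mu_c L_c}{\mu_c + L_c}\right)\|Q_k - \tilde Q_\eta\|_2^2,
\]
and iterating $k$ times delivers the claim. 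The main technical difficulty is the invariance step: because $L_c$-smoothness is only available inside $\mathcal{L}_c$ and not globally on $\mathcal{D}$, one cannot naively invoke the descent lemma on the full step from $Q_k$ to $Q_{k+1}$, and the iterate could in principle overshoot toward the boundary of $\mathcal{D}$ where the log-barrier blows up. The continuation argument in $t^*$ bootstraps out of this by applying the descent lemma only on the portion of the segment already known to be in $\mathcal{L}_c$, and then using strict decrease together with closedness to rule out an early exit.
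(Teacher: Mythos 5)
Your proposal is correct and follows essentially the same route as the paper: establish invariance of the sublevel set $\mathcal{L}_c$ via a continuation argument (your parametrization $\phi(t)=Q_k-t\alpha\nabla_Q f_\eta(Q_k)$ with the stopping time $t^*$ is the same device as the paper's gradual increase of the step-size $\alpha$ until it reaches $2/(L_c+\mu_c)$ or the iterate reaches $\partial\mathcal{L}_c$), then invoke the standard contraction rate for $\mu_c$-strongly convex, $L_c$-smooth functions from \citet[Theorem~2.1.15]{nesterov2018lectures}. If anything, your write-up of the boundary-exclusion step (strict decrease plus openness of $\{f_\eta<c\}$ forcing $t^*=1$) is cleaner than the paper's contradiction argument, but the underlying idea is identical.
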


To prove the convergence, we first need the following basic lemma.
\begin{lemma}[Theorem 2.1.5 in~\citet{nesterov2018lectures} ]\label{lemma:descent-lemma}
If $f:{\mathbb R}^n \to {\mathbb R}$ is $L$-smooth, then
\begin{align*}
f(y) \le f(x) + \left\langle {\nabla f(x),y - x} \right\rangle  + \frac{L}{2}\left\| y - x \right\|_2^2,\quad \forall x,y \in {\mathbb R}^n.
\end{align*}
\end{lemma}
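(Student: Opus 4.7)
The plan is to use the fundamental theorem of calculus along the line segment joining $x$ and $y$, which is the standard route to the descent lemma. Define $g(t) := f(x + t(y-x))$ for $t \in [0,1]$, so that $g'(t) = \langle \nabla f(x+t(y-x)), y-x \rangle$ by the chain rule, and $f(y) - f(x) = g(1) - g(0) = \int_0^1 g'(t)\,dt$.

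Next I would rewrite this integral by adding and subtracting $\langle \nabla f(x), y-x\rangle$, yielding
\[
f(y) = f(x) + \langle \nabla f(x), y-x\rangle + \int_0^1 \langle \nabla f(x+t(y-x)) - \nabla f(x),\, y-x\rangle\, dt.
\]
The goal is then to bound the remainder integral by $\tfrac{L}{2}\|y-x\|_2^2$. Applying the Cauchy--Schwarz inequality to the inner product inside the integral bounds the integrand by $\|\nabla f(x+t(y-x)) - \nabla f(x)\|_2 \cdot \|y-x\|_2$.

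Now I would invoke the $L$-smoothness hypothesis (Lipschitz continuity of $\nabla f$ with constant $L$) to get
\[
\|\nabla f(x+t(y-x)) - \nabla f(x)\|_2 \le L\,\|t(y-x)\|_2 = tL\,\|y-x\|_2.
\]
Substituting this bound into the remainder integral and computing $\int_0^1 t\,dt = \tfrac{1}{2}$ yields exactly $\tfrac{L}{2}\|y-x\|_2^2$, which establishes the claimed inequality.

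No step here is a serious obstacle; this is a textbook calculation. The only mild subtlety worth noting is justifying the chain-rule differentiation of $g$, which requires $f$ to be differentiable everywhere on $\mathbb{R}^n$, and the interchange of the integral with the Cauchy--Schwarz bound, both of which are immediate because $L$-smoothness implies $\nabla f$ is continuous and hence the integrand is Riemann integrable on $[0,1]$.
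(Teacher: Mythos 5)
Your proof is correct and is the standard argument. The paper does not prove this lemma at all---it simply cites it as Theorem~2.1.5 of \citet{nesterov2018lectures}---and your derivation (fundamental theorem of calculus along the segment, adding and subtracting $\langle \nabla f(x), y-x\rangle$, Cauchy--Schwarz, then the Lipschitz bound and $\int_0^1 t\,dt = \tfrac12$) is precisely the textbook proof found in that reference, so there is nothing to reconcile.
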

By inserting $y = x - \alpha \nabla f(x)$ in~\cref{lemma:descent-lemma}, we have
\begin{align*}
f(x - \alpha \nabla f(x)) \le& f(x) - \alpha \left\langle {\nabla f(x),\nabla f(x)} \right\rangle  + \frac{L}{2}\left\| {\alpha \nabla f(x)} \right\|_2^2\\
=& f(x) - \alpha \left( {1 - \frac{\alpha L}{2}} \right)\left\| {\nabla f(x)} \right\|_2^2,\quad \forall x \in {\mathbb R}^n.
\end{align*}
Based on this observation, we now prove~\cref{thm:convergence-GD} as follows.
\begin{proof}[Proof of~\cref{thm:convergence-GD}]
First of all, let us choose any $Q \in {\cal D}$ such that ${\nabla _Q}{f_\eta }(Q) \ne 0$. Then, $Q \in {\cal L}_c$ for some $c>0$, which is any number such that $c > f_\eta(Q)$. Therefore, by~\cref{lemma:properties1}, $f_\eta$ is $L_c$-smooth and $\mu_c$-strongly convex in ${\cal L}_c$ for some real numbers $L_c, \mu_c >0$. We can choose a ball $B_r(Q)$ centered at $Q$ with radius $r>0$ such that $B_r(Q) \subset {\cal L}_c$.
Let us assume that
\[0 < \alpha  \le \frac{2}{{{L_c} + {\mu _c}}}\]
and define the next iterate
\begin{align*}
 Q^+_\alpha := Q - \alpha \nabla f_\eta(Q).
\end{align*}
Because $B_r(Q) \subset {\cal L}_c$ we can choose a sufficiently small $\alpha>0$ such that $ Q^+_\alpha \in {\cal L}_c$. Because $Q,Q_\alpha^+ \in {\cal L}_c$, we can apply~\cref{lemma:descent-lemma} so that
\begin{align*}
{f_\eta }(Q_\alpha^+) =&
{f_\eta }(Q - \alpha {\nabla _Q}{f_\eta }(Q))\\ \le& {f_\eta }(Q) - \alpha \left\langle {{\nabla _Q}{f_\eta }(Q),{\nabla _Q}{f_\eta }(Q)} \right\rangle  + \frac{L_c}{2}\left\| {\alpha {\nabla _Q}{f_\eta }(Q)} \right\|_2^2\\
=& {f_\eta }(Q) - \alpha \left( {1 - \frac{{\alpha {L_c}}}{2}} \right)\left\| {{\nabla _Q}{f_\eta }(Q)} \right\|_2^2
\end{align*}
which implies $Q_\alpha ^ +  \in {\cal L}_c$. Now, we can increase $\alpha$ so that $\alpha  = \frac{2}{L_c + \mu _c}$ or $Q_\alpha ^ +  \in \partial {\cal L}_c$, where $\partial {\cal L}_c$ is the boundary of ${\cal L}_c$ noting that ${\cal L}_c$ is a closed set (\cref{lemma:bounded-levelset}). Next, we want to prove that $\alpha$ reaches $\alpha  = \frac{2}{L_c + \mu _c}$ until $Q_\alpha ^ +  \in \partial {\cal L}_c$. By contradiction, let us assume that $Q_\alpha ^ +  \in \partial {\cal L}_c$ until $\alpha$ reaches $\alpha  = \frac{2}{{\cal L}_c + \mu _c}$.
For any $\varepsilon>0$, one can show that
\[Q_{\alpha  + \varepsilon }^ +  \notin {\cal L}_c\]
and
\[f_\eta(Q_{\alpha  + \varepsilon }^ + ) > c\]
Howver, we know that
\begin{align*}
f_\eta(Q_\alpha ^ + ) \le& f_\eta(Q) - \alpha \left( {1 - \frac{\alpha L_c}{2}} \right)\left\| {{\nabla _Q}{f_\eta }(Q)} \right\|_2^2\\
\le& c - \alpha \left( {1 - \frac{{\alpha {L_c}}}{2}} \right)\left\| {{\nabla _Q}{f_\eta }(Q)} \right\|_2^2\\
\le& {f_\eta }(Q_{\alpha  + \varepsilon }^ + ) - \alpha \left( {1 - \frac{{\alpha {L_c}}}{2}} \right)\left\| {{\nabla _Q}{f_\eta }(Q)} \right\|_2^2
\end{align*}
Taking $\varepsilon \to 0$ leads to
\[f_\eta(Q_\alpha ^ + ) < f_\eta(Q_\alpha ^ + ) + \alpha \left( {1 - \frac{{\alpha {L_c}}}{2}} \right)\left\| {{\nabla _Q}{f_\eta }(Q)} \right\|_2^2 \le \mathop{\lim}\limits_{\varepsilon  \to 0} {f_\eta }(Q_{\alpha  + \varepsilon }^ + )\]
which contradicts with the fact that $f_\eta$ is continuous in $\cal D$.
In summary, for any $0 < \alpha  \le \frac{2}{{{L_c} + {\mu _c}}}$ and $Q\in {\cal L}_c$, we have
\[f_\eta(Q_\alpha ^ + ) \le f_\eta(Q) - \alpha \left( {1 - \frac{\alpha L_c}{2}} \right)\left\| {{\nabla _Q}{f_\eta }(Q)} \right\|_2^2\]
and $Q_\alpha ^ +  \in {\cal L}_c$.
Therefore, if $Q_0 \in {\cal D}$, then
\[Q_k \in {\cal L}_c,\quad k = 0,1,2, \ldots \]
for some $c>0$. Next, standard results in convex optimization (Theorem 2.1.15 in~\citet{nesterov2018lectures}) can be applied to conclude the proof.
\end{proof}

\begin{theorem}\label{thm:convergence-GD2}
Let us consider the gradient descent iterates:
\begin{align*}
Q_{k + 1} = Q_k - \alpha \left. {\nabla _Q}f_\eta (Q) \right|_{Q = Q_k},
\end{align*}
for $k=0,1,\ldots$ with a strictly feasible initial point $Q_0 \in {\cal D}$ and
\[\left\| Q_0 \right\|_\infty  \le \frac{\eta \mathop{\sum}\limits_{(s,a,a') \in {\cal S} \times {\cal A} \times {\cal A}} w(s,a,a') }{\min_{(s,a) \in {\cal S} \times {\cal A}}\rho (s,a)} + \frac{r_{\max} }{1 - \gamma }.\]
Then, with the step-size $0<\alpha \leq \frac{2}{L_c+\mu_c}$, the iterates satisfies
\[\left\| Q_k - \tilde Q_\eta  \right\|_\infty \le {\left( \sqrt {1 - \frac{2\alpha {\mu _c}{L_c}}{\mu _c + L_c}} \right)^k}2\sqrt {|{\cal S} \times {\cal A}|} \left( \frac{{\eta \mathop{\sum}\limits_{(s,a,a') \in {\cal S} \times {\cal A} \times {\cal A}} w(s,a,a') }}{\min_{(s,a) \in {\cal S} \times {\cal A}}\rho (s,a)} + \frac{r_{\max }}{1 - \gamma } \right),\]
where $\mu_c>0$ is the coefficient of the strong convexity and $L_c >0$ is the Lipschitz constant of the gradient of $f_\eta$ within the level set ${\cal L}_c$ where $c > f_\eta (Q_0)$.

Moreover, to achieve ${\left\| Q_k - {{\tilde Q}_\eta } \right\|_\infty } \le \varepsilon$, we need at least the following number of iterations:
\[O\left( \ln \left( \frac{{\sqrt {|{\cal S} \times {\cal A}|} }}{\varepsilon }\left\{ \frac{\eta \mathop{\sum}\limits_{(s,a,a') \in {\cal S} \times {\cal A} \times {\cal A}} w(s,a,a')}{\min_{(s,a) \in {\cal S} \times {\cal A}}\rho (s,a)} + \frac{r_{\max}}{1 - \gamma} \right\} \right) \right).\]
\end{theorem}
\begin{proof}
By~\cref{thm:convergence-GD} and~\cref{lemma:bound-on-Q-eta}, we can obtain
\begin{align*}
{\left\| {Q_k - {{\tilde Q}_\eta }} \right\|_2} \le& {\left( {\sqrt {1 - \frac{{2\alpha {\mu _c}{L_c}}}{{{\mu _c} + {L_c}}}} } \right)^k}{\left\| {Q_0 - {{\tilde Q}_\eta }} \right\|_2}\\
\le& {\left( {\sqrt {1 - \frac{{2\alpha \mu_c L_c}}{{\mu _c + L_c}}} } \right)^k}\sqrt {|{\cal S} \times {\cal A}|} {\left\| {Q_0 - {{\tilde Q}_\eta }} \right\|_\infty }\\
\le& {\left( {\sqrt {1 - \frac{{2\alpha \mu _c L_c}}{{\mu _c + L_c}}} } \right)^k}2\sqrt {|{\cal S} \times {\cal A}|} \left( {\frac{{\eta \sum\limits_{(s,a,a') \in {\cal S} \times {\cal A} \times {\cal A}} w(s,a,a') }}{{{{\min }_{(s,a) \in {\cal S} \times {\cal A}}}\rho (s,a)}} + \frac{r_{\max }}{1 - \gamma}} \right),
\end{align*}
where the second inequality comes from the inequality $\|\cdot\|_2\le \sqrt{|{\cal S}\times {\cal A}|}\|\cdot\|_\infty$, the last inequality is due to~\cref{lemma:bound-on-Q-eta} and the bounded assumption on $Q_0$. Rearranging terms and taking log function on both sides lead to
\begin{align*}
k\ln \left( {\sqrt {1 - \frac{{2\alpha \mu_c L_c}}{\mu_c + L_c}} } \right) \le \ln \left( {\varepsilon \frac{1}{{2\sqrt {|{\cal S} \times {\cal A}|} }}\frac{1}{{\frac{{\eta \sum\limits_{(s,a,a') \in {\cal S} \times {\cal A} \times {\cal A}} {w(s,a,a')} }}{{{{\min }_{(s,a) \in {\cal S} \times {\cal A}}}\rho (s,a)}} + \frac{r_{\max }}{1 - \gamma }}}} \right).  \end{align*}
Rearranging and simplifying terms again lead to
\begin{align*}
k \ge \frac{{\ln \left( {\frac{{2\sqrt {|{\cal S} \times {\cal A}|} }}{\varepsilon }\left\{ {\frac{{\eta \sum\limits_{(s,a,a') \in {\cal S} \times {\cal A} \times {\cal A}} {w(s,a,a')} }}{{{{\min }_{(s,a) \in {\cal S} \times {\cal A}}}\rho (s,a)}} + \frac{{{r_{\max }}}}{{1 - \gamma }}} \right\}} \right)}}{{ - \ln \left( {\sqrt {1 - \frac{{2\alpha \mu_c L_c}}{\mu_c + L_c}} } \right)}}.    \end{align*}

\end{proof}

We note that if $w$ is a probability distribution, then the sum in the numerator becomes one. On the other hand, the term ${{\min }_{(s,a) \in {\cal S} \times {\cal A}}}\rho (s,a)$ depends on the initial state distribution $\rho$, and this term can be made arbitrarily small. For instance, when the initial state distribution is uniform, i.e., $\rho (s,a) = \frac{1}{|{\cal S}||{\cal A}|}$ and $w$ is a probability distribution, then
\[O\left( {\ln \left( {\frac{\eta |{\cal S} \times {\cal A}|^{3/2}}{\varepsilon } + \frac{{|{\cal S} \times {\cal A}|^{1/2}}}{\varepsilon }\frac{r_{\max}}{1 - \gamma }} \right)} \right)\]

For computational cose of the gradient computation, in tabular log-barrier method, each gradient update scales with the number of state–action pairs (i.e., linear in $|{\cal S}\times {\cal A}|$) because the objective and its gradient must be evaluated over the relevant domain.

\section{Appendix: Gradient descent example}\label{sec:app:gradient2}

\begin{figure}[h!]
    \centering
    \includegraphics[width=14cm, height=7cm]{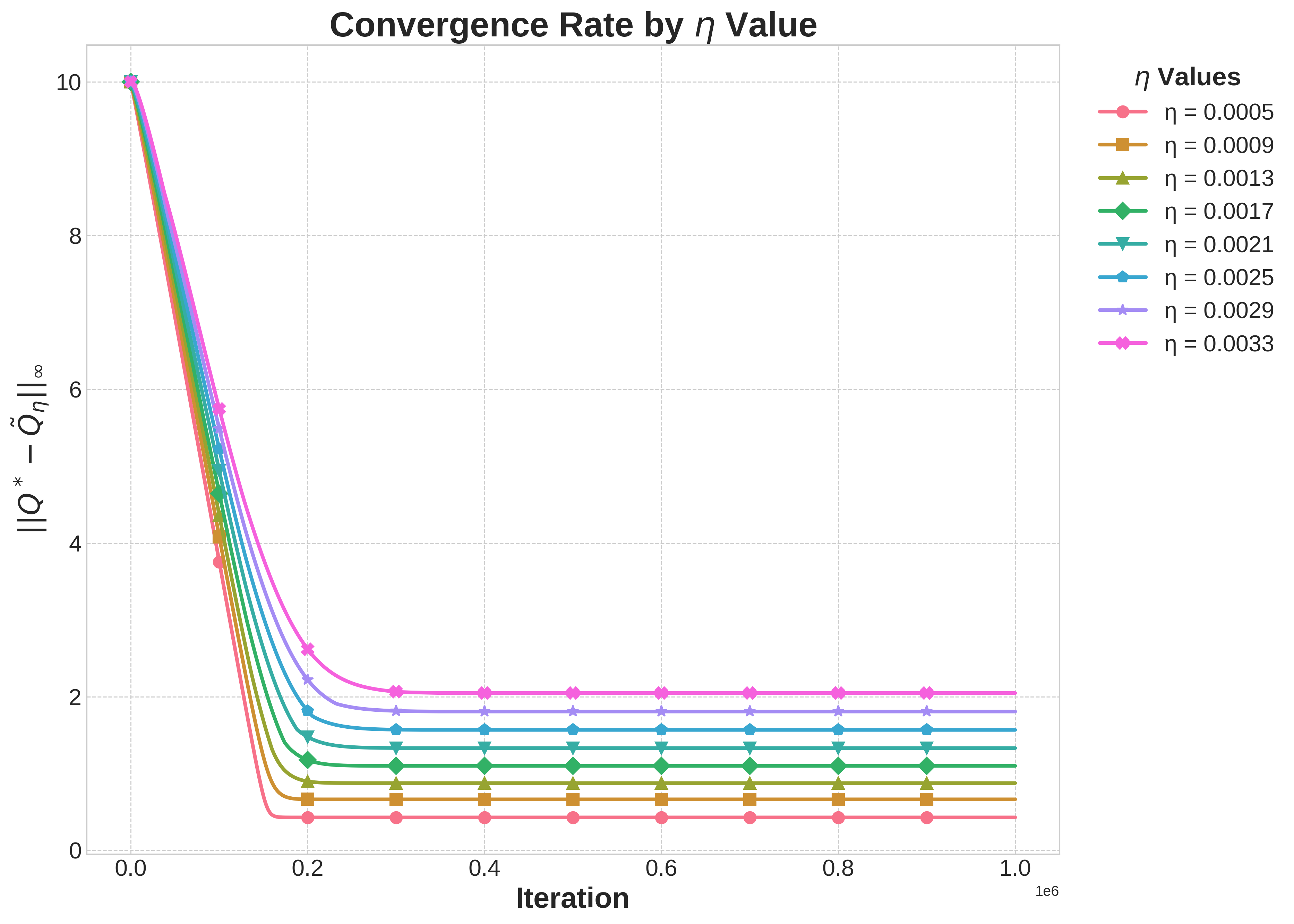}
    \caption{
        Comparison of the log-barrier coefficient $\eta$ on the convergence rate. The plot shows the max-norm error, $\|Q^* - \tilde{Q}_\eta\|_{\infty}$, between the learned and optimal Q-functions versus the number of training iterations. These results were obtained from experiments on a 6$\times$6 FrozenLake-v1 environment, where the ground-truth $Q^*$ was pre-computed using value iteration.
    }
    \label{fig:comparison1}
\end{figure}

We conducted an experiment with two primary objectives: first, to demonstrate that the gradient expression proposed in~\cref{lem:gradient} can serve as a valid gradient within a gradient descent framework, and second, to analyze how the log-barrier coefficient $\eta$ affects the performance. For this purpose, we used the FrozenLake-v1 environment from Gymnasium with a custom 6$\times$6 grid in a tabular setting. We represented the Q-function in a tabular form, for which the optimal Q-function, $Q^*$, can be pre-computed with guaranteed convergence using value iteration~\citep{puterman2014markov}. Then, we utilized the gradient descent method with the learning rate $\alpha =0.01$ for all experiments and measured the max-norm error between $Q^*$ and $ \tilde{Q}_\eta$. The overall convergence of the algorithm, as shown in~\cref{fig:comparison1}, validates our first objective by confirming that the expression from~\cref{lem:gradient} serves as a functional gradient.

Our second objective is to analyze the impact of the coefficient $\eta$, which serves as an approximation factor in the log-barrier method. A higher value of $\eta$ corresponds to a relaxed approximation of the feasible set's boundary, leading to a smoother optimization landscape. Conversely, a lower value of $\eta$ yields a more precise approximation but creates a steeper barrier near the boundary. This theoretical property is clearly demonstrated in~\cref{fig:comparison1}, where the different error floors for each curve confirm the role of $\eta$ as an approximation factor.

\begin{figure}[h!]
    \centering
    \includegraphics[width=14cm, height=7cm]{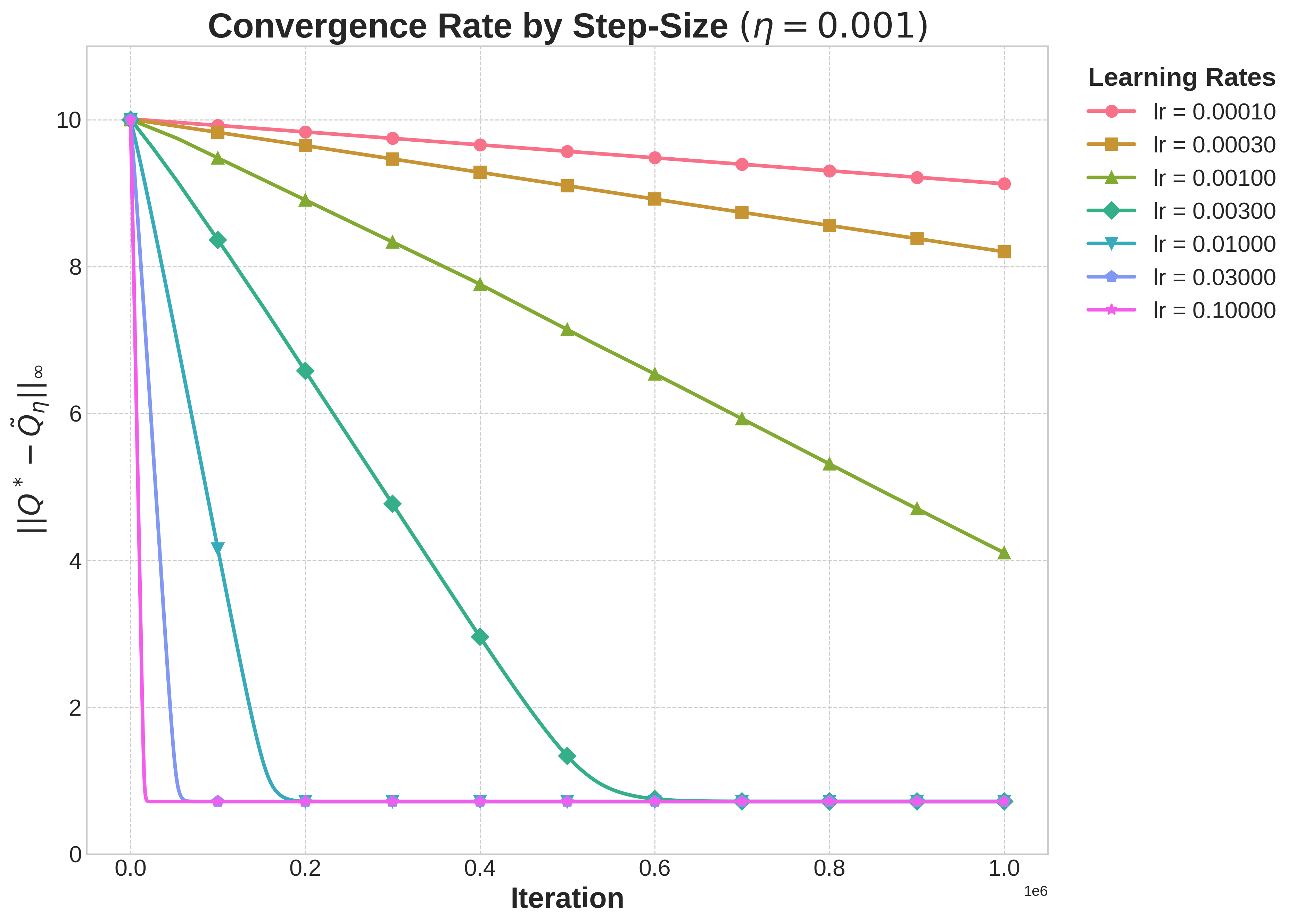}
    \caption{
        Comparison of the error evolutions with $\eta = 0.001$ for different learning rates. The plot shows the max-norm error, $\|Q^* - \tilde{Q}_\eta\|_{\infty}$, between the learned and optimal Q-functions versus the number of training iterations. These results were obtained from experiments on a 6$\times$6 FrozenLake-v1 environment, where the ground-truth $Q^*$ was pre-computed using value iteration.
    }
    \label{fig:comparison2}
\end{figure}

\begin{figure}[h!]
    \centering
    \includegraphics[width=14cm, height=7cm]{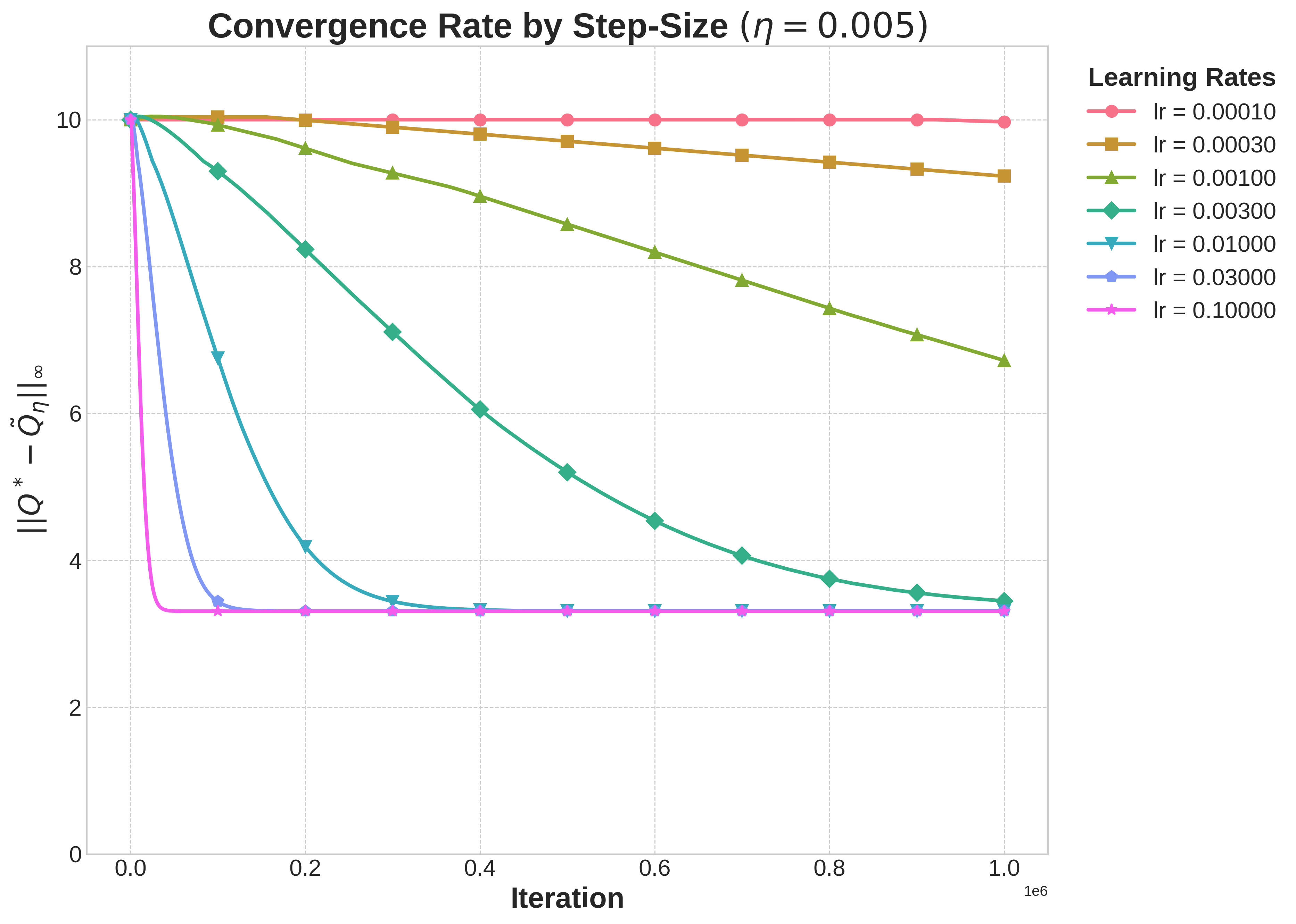}
    \caption{
        Comparison of the error evolutions with $\eta = 0.005$ for different learning rates. The plot shows the max-norm error, $\|Q^* - \tilde{Q}_\eta\|_{\infty}$, between the learned and optimal Q-functions versus the number of training iterations. These results were obtained from experiments on a 6$\times$6 FrozenLake-v1 environment, where the ground-truth $Q^*$ was pre-computed using value iteration.
    }
    \label{fig:comparison3}
\end{figure}
\cref{fig:comparison2} and \cref{fig:comparison3} show the error evolutions for different learning rates while keeping the barrier weight $\eta$ fixed. \cref{fig:comparison2} corresponds to $\eta = 0.001$ and \cref{fig:comparison3} to $\eta = 0.005$. As the plots show, larger learning rates lead to faster convergence in these experiments.

\begin{figure}[h!]
    \centering
    \includegraphics[width=13cm, height=7cm]{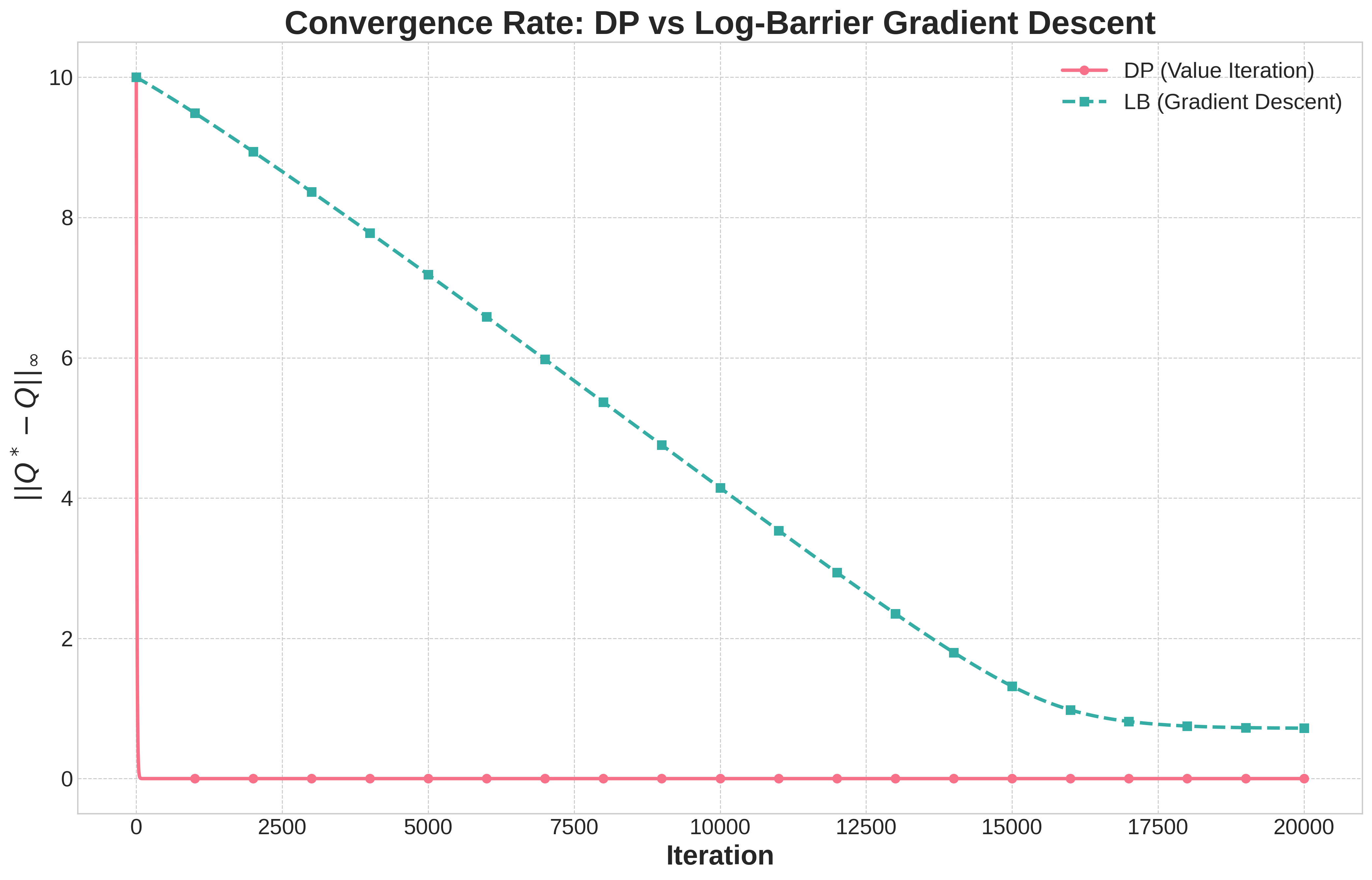}
    \caption{
        Comparison of the error evolutions of the classical dynamic programming and the gradient descent of the log-barrier objective with fixed $\eta$ and learning rate. The plot shows the max-norm error, $\|Q^* - \tilde{Q}_\eta\|_{\infty}$, between the learned and optimal Q-functions versus the number of training iterations. These results were obtained from experiments on a 6$\times$6 FrozenLake-v1 environment, where the ground-truth $Q^*$ was pre-computed using value iteration.
    }
    \label{fig:comparison4}
\end{figure}

\cref{fig:comparison4} compares the convergence of classical dynamic programming (specifically, value iteration) with gradient descent on the log-barrier objective. For the log-barrier method we used a fixed learning rate and fixed $\eta$; the learning rate was tuned to produce the best possible performance for that method. As~\cref{fig:comparison4} shows, value iteration converges faster than the gradient-descent log-barrier approach in this experiment.

Remark on the choice of $\eta$: Theoretically, as $\eta$ goes to zero, the log-barrier solution $\tilde Q_{\eta}$ approaches the true $Q^*$. This relationship is established in~\cref{thm:bounds2}. In practice, however, taking $\eta$ too small can cause numerical issues. For example, with very small $\eta$ the gradient-based estimate $\tilde Q_{\eta}$ may step outside the domain that satisfies the inequality constraints due to finite-precision and optimization error. The $\eta$ values reported in~\cref{tab:hyperparameters_simple} and~\cref{tab:hyperparameters_ddpg} were obtained by empirical tuning; in our experiments, we typically used moderate values of $\eta$ (see~\cref{tab:hyperparameters_simple} and~\cref{tab:hyperparameters_ddpg} for the exact settings). At present, a principled criterion for selecting $\eta$ has not been established, and we view the development of such a selection rule as important future work.

\section{Appendix: proof of~\cref{thm:bounds2}}\label{sec:app:proof:bound1}

For convenience of the reader, the statements of~\cref{thm:bounds2} are summarized as follows:
\begin{enumerate}
\item $\eta \min\limits_{(s,a,a') \in {\cal S} \times {\cal A} \times {\cal A}} w(s,a,a') < {\left\| {{{\tilde Q}_\eta } - {Q^*}} \right\|_\infty } \le \frac{{\eta \sum\limits_{(s,a,a') \in {\cal S} \times {\cal A} \times {\cal A}} {w(s,a,a')} }}{{{{\min }_{(s,a) \in {\cal S} \times {\cal A}}}\rho (s,a)}}$

\item $\eta (1 - \gamma )\min\limits_{(s,a,a') \in {\cal S} \times {\cal A} \times {\cal A}} w(s,a,a') < {\left\| {{{\tilde Q}_\eta } - T{{\tilde Q}_\eta }} \right\|_\infty } \le \frac{{(1 + \gamma )\eta \sum\limits_{(s,a,a') \in {\cal S} \times {\cal A} \times {\cal A}} {w(s,a,a')} }}{{\min_{(s,a) \in {\cal S} \times {\cal A}}}\rho (s,a)}$
\end{enumerate}

To prove~\cref{thm:bounds2}, we first need to prove the following lemma.
\begin{lemma}\label{lemma:bounds1}
Let us define
\begin{align*}
{{\tilde \lambda }_\eta }(s,a,a'): =  - \frac{{\eta w(s,a,a')}}{{(F{{\tilde Q}_\eta })(s,a,a') - {{\tilde Q}_\eta }(s,a)}},\quad (s,a,a') \in {\cal S} \times {\cal A} \times {\cal A}.
\end{align*}
Then, we have the following results:
\begin{enumerate}
\item $\eta (1 - \gamma ) \mathop{\min}\limits_{(s,a,a') \in {\cal S} \times {\cal A} \times {\cal A}} w(s,a,a') < {\left\| \tilde Q_\eta - T\tilde Q_\eta \right\|_\infty }$

\item $\eta \mathop{\min}\limits_{(s,a,a') \in {\cal S} \times {\cal A} \times {\cal A}} w(s,a,a') < {\tilde Q_\eta }(s,a) - Q^*(s,a),\quad \forall (s,a) \in {\cal S} \times {\cal A}$.

\item $\eta \mathop{\min}\limits_{(s,a,a') \in {\cal S} \times {\cal A} \times {\cal A}} w(s,a,a') < {\left\| {\tilde Q_\eta - {Q^*}} \right\|_\infty }$
\end{enumerate}

\end{lemma}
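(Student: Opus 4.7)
The plan is to exploit the first-order optimality condition of the unconstrained minimizer $\tilde Q_\eta$. Because $f_\eta$ is strictly convex on its open domain ${\cal D}$ (by \cref{lemma:properties1}), the condition $\nabla_Q f_\eta(\tilde Q_\eta)=0$ is both necessary and sufficient, and via \cref{lem:gradient} it rewrites as the dual-LP equality constraint of \cref{eq:dual-LP1} for the auxiliary vector $\tilde\lambda_\eta(s,a,a')=\eta w(s,a,a')/[\tilde Q_\eta(s,a)-(F\tilde Q_\eta)(s,a,a')]$. Since $\tilde Q_\eta\in{\cal D}$, this $\tilde\lambda_\eta$ is strictly positive and hence strictly dual feasible. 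Summing the dual-feasibility relation over $(s,a)$ and using $\sum_{(s,a)}\rho(s,a)=1$ yields $\sum_{(s,a,a')}\tilde\lambda_\eta=1/(1-\gamma)$, so every single $\tilde\lambda_\eta(s,a,a')\le 1/(1-\gamma)$; rearranging the defining identity then gives the uniform pointwise slack $\tilde Q_\eta(s,a)-(F\tilde Q_\eta)(s,a,a')\ge \eta(1-\gamma)w(s,a,a')$ for every triple, which is the raw material for the remaining bounds.

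The final (third) statement is an immediate corollary of statement~2: once $\tilde Q_\eta(s,a)-Q^*(s,a)>\eta\min w$ is known for every $(s,a)$, picking any single pair $(s_0,a_0)$ gives $\|\tilde Q_\eta-Q^*\|_\infty\ge \tilde Q_\eta(s_0,a_0)-Q^*(s_0,a_0)>\eta\min w$. The whole burden therefore reduces to statement~2, which I would establish by first upgrading the $F$-slack above into the pointwise Bellman-residual bound $\tilde Q_\eta - T\tilde Q_\eta\ge \eta(1-\gamma)\min w\cdot\mathbf{1}$, and then iterating the monotone $\gamma$-contraction $T$: each application multiplies the constant by $\gamma$, so $\tilde Q_\eta\ge T^k\tilde Q_\eta+(1-\gamma^k)\eta\min w\cdot\mathbf{1}$ for every $k$, and sending $k\to\infty$ against $T^k\tilde Q_\eta\to Q^*$ yields the desired pointwise lower bound $\tilde Q_\eta-Q^*\ge\eta\min w\cdot\mathbf{1}$, from which the $\|\cdot\|_\infty$ bound follows at once.

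The main obstacle is the step that converts the $F$-slack into a $T$-slack pointwise. The log-barrier feasibility fixes the action $a'$ across all successor states $s'$, while $T$ selects $\max_{a'}\tilde Q_\eta(s',a')$ state-wise inside the expectation, and the naive inequality $T\tilde Q_\eta(s,a)\ge\max_{a'}F\tilde Q_\eta(s,a,a')$ runs in the wrong direction. Closing this gap requires exploiting the full strength of the first-order condition and a judicious choice of $a'$ in the feasibility relation — for instance, selecting $a'$ to track the greedy action at the successor state and absorbing the state-dependent mismatch into the quantitative slack — so that the uniform constant $\eta(1-\gamma)\min w$ is preserved through the reduction and carried intact through the $T$-iteration into statement~2, and then trivially into the final statement.
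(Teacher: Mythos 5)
Your opening reduction coincides with the paper's: the first-order condition makes $\tilde\lambda_\eta$ dual feasible, \cref{prop:dual-property1} normalizes $(1-\gamma)\tilde\lambda_\eta$ to a probability distribution, and strict positivity of every component yields the componentwise slack $\tilde Q_\eta(s,a)-(F\tilde Q_\eta)(s,a,a')>\eta(1-\gamma)w(s,a,a')$; your one-line deduction of statement~3 from statement~2 also matches the paper. The problem is that you have not actually established statements~1 and~2. Both, in your plan as in the paper's, rest on upgrading the $F$-slack to the pointwise Bellman-residual bound $\tilde Q_\eta(s,a)-(T\tilde Q_\eta)(s,a)>\eta(1-\gamma)\min w$. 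You explicitly name this "the main obstacle," correctly observe that $(T\tilde Q_\eta)(s,a)\ge(F\tilde Q_\eta)(s,a,a')$ for every fixed $a'$ (so the $T$-slack is \emph{smaller} than every $F$-slack and the bound does not transfer), and then leave the gap open with a gesture at "a judicious choice of $a'$." No fixed $a'$ can close it: $T$ evaluates $\max_{a'}\tilde Q_\eta(s',a')$ with a maximizer that varies with the successor state $s'$, while the constraint indexed by $a'$ freezes that action across all $s'$, so $\tilde Q_\eta(s,a)-(T\tilde Q_\eta)(s,a)$ equals $\min_{a'}\{\tilde Q_\eta(s,a)-(F\tilde Q_\eta)(s,a,a')\}$ minus a nonnegative state-mixing correction. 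The proposal is therefore incomplete at precisely its load-bearing step; everything downstream is conditional on a bound you have not derived. (The paper's own proof performs this passage in a single "and hence" after the $F$-slack, so this is also where you should direct your scrutiny when reading it.)

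Conditional on that pointwise bound, your route to statement~2 is a genuinely different and arguably cleaner argument than the paper's: you iterate $T\tilde Q_\eta\le\tilde Q_\eta-c\mathbf{1}$ with $c=\eta(1-\gamma)\min w$ to get $T^k\tilde Q_\eta\le\tilde Q_\eta-c\tfrac{1-\gamma^k}{1-\gamma}\mathbf{1}$ and let $k\to\infty$, whereas the paper subtracts the fixed-point identity $Q^*=TQ^*$, lower-bounds $\max_{a'}\tilde Q_\eta(s',a')-\max_{a'}Q^*(s',a')$ by evaluating both at $b=\argmax_{a'}Q^*(s',a')$, and then takes a minimum over the finite set ${\cal S}\times{\cal A}$. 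Two caveats: your limit argument only delivers the non-strict inequality $\tilde Q_\eta-Q^*\ge\eta\min w\cdot\mathbf{1}$, while the lemma (and the paper's finite-minimum argument, which preserves strictness) asserts a strict one; and statement~1 still needs to be read off from the pointwise $T$-slack before you take norms, which your write-up only does implicitly.
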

\begin{proof}
By~\cref{prop:dual-property1}, we have
\begin{align*}
0 < \frac{{\eta w(s,a,a')(1 - \gamma )}}{{{\tilde Q}_\eta(s,a) - (F{{\tilde Q}_\eta })(s,a,a')}} < 1,\quad \forall (s,a,a') \in {\cal S} \times {\cal A} \times {\cal A},
\end{align*}
which implies
\begin{align*}
\eta \mathop{\min }\limits_{(s,a,a') \in {\cal S} \times {\cal A} \times {\cal A}} w(s,a,a')(1 - \gamma ) < {\tilde Q}_\eta (s,a) - (F{{\tilde Q}_\eta })(s,a,a'),\quad \forall (s,a,a') \in {\cal S} \times {\cal A} \times {\cal A},
\end{align*}
and hence,
\begin{align*}
\eta (1 - \gamma )\mathop{\min }\limits_{(s,a,a') \in {\cal S} \times {\cal A} \times {\cal A}} w(s,a,a') < {{\tilde Q}_\eta }(s,a) - (T{{\tilde Q}_\eta })(s,a),\quad \forall (s,a) \in {\cal S} \times {\cal A}.
\end{align*}
By taking the absolute value on the right-hand side and taking the maximum over $(s,a)\in {\cal S}\times {\cal A}$, we can completes the proof of the first statement.
Next, for the second statement, we first note the following bounds:
\begin{align*}
&\eta (1 - \gamma ) \mathop{\min }\limits_{(s,a,a') \in {\cal S} \times {\cal A} \times {\cal A}} w(s,a,a')\\
<& \tilde Q_\eta(s,a) - (T\tilde Q_\eta)(s,a)\\
=& \tilde Q_\eta(s,a) - (T\tilde Q_\eta)(s,a) - {Q^*}(s,a) + (T Q^*)(s,a)\\
=& \tilde Q_\eta(s,a) - Q^*(s,a) - \gamma \sum_{s' \in {\cal S}} {P(s'|s,a)\{ {\max_{a' \in {\cal A}}}\tilde Q_\eta(s',a') - \max_{a' \in {\cal A}}Q^*(s',a')\} } \\
\le& \tilde Q_\eta(s,a) - Q^*(s,a) - \gamma \sum\limits_{s' \in {\cal S}} {P(s'|s,a)\{ \tilde Q_\eta(s',b) - {Q^*}(s',b)\} }\\
\le& {\tilde Q_\eta }(s,a) - {Q^*}(s,a) - \gamma \sum\limits_{s' \in {\cal S}} {P(s'|s,a){{\min }_{(s',a') \in {\cal S} \times {\cal A}}}\{ {{\tilde Q}_\eta }(s',a')}  - {Q^*}(s',a')\} \\
\le& \tilde Q_\eta(s,a) - Q^*(s,a) - \gamma {\min _{(s',a') \in {\cal S} \times {\cal A}}}\{ \tilde Q_\eta(s',a') - {Q^*}(s',a')\},\quad \forall (s,a)\in {\cal S}\times {\cal A},
\end{align*}
where in the fourth line, $b = \argmax _{a' \in {\cal A}} Q^*(s',a')$.
Taking the minimum over $(s,a)\in {\cal S}\times {\cal A}$ on both sides yields
\begin{align*}
(1-\gamma)\eta \mathop{\min }\limits_{(s,a,a') \in {\cal S} \times {\cal A} \times {\cal A}} w(s,a,a') <(1-\gamma) {\min _{(s',a') \in {\cal S} \times {\cal A}}}\{ \tilde Q_\eta(s',a') - Q^*(s',a')\},
\end{align*}
which is the second statement. Moreover, it also implies
\begin{align*}
\eta \mathop{\min }\limits_{(s,a,a') \in {\cal S} \times {\cal A} \times {\cal A}} w(s,a,a') <& {\min _{(s',a') \in {\cal S} \times {\cal A}}}\{ \tilde Q_\eta(s',a') - {Q^*}(s',a')\} \\
\le& {\min _{(s',a') \in {\cal S} \times {\cal A}}}|\tilde Q_\eta(s',a') - {Q^*}(s',a')|\\
\le& {\max _{(s',a') \in {\cal S} \times {\cal A}}}|\tilde Q_\eta(s',a') - {Q^*}(s',a')|\\
=& {\left\| {\tilde Q_\eta - Q^*} \right\|_\infty },
\end{align*}
which proves the last statement.
\end{proof}

Based on~\cref{lemma:bounds1}, we will now prove~\cref{thm:bounds2}.
\begin{proof}[Proof of the first statement of~\cref{thm:bounds2}]
Because the optimal solution $\tilde Q_\eta$ should be in the domain $\cal D$ of the log-barrier function, it should be strictly feasible for the primal LP. Moreover, $\tilde \lambda_\eta$ is feasible for the dual LP. Using these results, we can obtain the following lower bounds:
\begin{align}
&\sum\limits_{(s,a) \in {\cal S} \times {\cal A}} {\rho (s,a)Q^*(s,a)}\nonumber\\
=& \sum\limits_{(s,a,a') \in {\cal S} \times {\cal A} \times {\cal A}} {{\lambda^*}(s,a,a')R(s,a)}\nonumber\\
\ge& \sum\limits_{(s,a,a') \in {\cal S} \times {\cal A} \times {\cal A}} {\tilde \lambda_\eta (s,a,a')R(s,a)}\nonumber\\
=& \sum\limits_{(s,a,a') \in {\cal S} \times {\cal A} \times {\cal A}} {\tilde \lambda_\eta (s,a,a')R(s,a)}\nonumber\\
&  + \sum\limits_{(s,a) \in {\cal S} \times {\cal A}} {\tilde Q_\eta(s,a)\left\{ {\rho (s,a) + \gamma \sum\limits_{(s',a') \in {\cal S} \times {\cal A}} {\tilde \lambda_\eta (s',a',a)P(s|s',a')}  - \sum\limits_{a' \in {\cal A}} {\tilde \lambda_\eta (s,a,a')} } \right\}}\nonumber\\
=& L(\tilde Q_\eta,\tilde \lambda_\eta )\nonumber\\
=&\sum\limits_{(s,a) \in {\cal S} \times {\cal A}} {\rho (s,a){{\tilde Q}_\eta }(s,a)}  + \sum\limits_{(s,a,a') \in {\cal S} \times {\cal A} \times {\cal A}} {{{\tilde \lambda }_\eta }(s,a,a')\{ (F{{\tilde Q}_\eta })(s,a,a') - {{\tilde Q}_\eta }(s,a)\} }\nonumber\\
=& \sum\limits_{(s,a) \in {\cal S} \times {\cal A}} {\rho (s,a){{\tilde Q}_\eta }(s,a)}  + \sum\limits_{(s,a,a') \in {\cal S} \times {\cal A} \times {\cal A}} { - \eta \frac{{w(s,a,a')\{ (F{{\tilde Q}_\eta })(s,a,a') - {{\tilde Q}_\eta }(s,a)\} }}{{(F{{\tilde Q}_\eta })(s,a,a') - {{\tilde Q}_\eta }(s,a)}}} \nonumber\\
=& \sum\limits_{(s,a) \in {\cal S} \times {\cal A}} {\rho (s,a){{\tilde Q}_\eta }(s,a)}  - \eta \sum\limits_{(s,a,a') \in {\cal S} \times {\cal A} \times {\cal A}} {w(s,a,a')},\label{eq:appendix:1}
\end{align}
where the second line is due to the strong duality of the LP problem, the third and fourth lines are due to the fact that $\tilde \lambda_\eta$ is dual feasible, and the sixth line is obtained by rearranging terms.
On the other hand, since $\tilde Q_\eta \in {\cal D}$ is primal feasible, we have
\begin{align}
\sum\limits_{(s,a)\in {\cal S}\times {\cal A}} {\rho(s,a){Q^*}(s,a)}  \le \sum_{(s,a)\in {\cal S}\times {\cal A}} {\rho (s,a)\tilde Q_\eta(s,a)}.\label{eq:appendix:2}
\end{align}
Combining~\cref{eq:appendix:1} and~\cref{eq:appendix:2} leads to
\begin{align*}
0 \le \sum\limits_{(s,a) \in {\cal S} \times {\cal A}} {\rho (s,a)\{ {{\tilde Q}_\eta }(s,a) - {Q^*}(s,a)\} }  \le \eta \sum\limits_{(s,a,a') \in {\cal S} \times {\cal A} \times {\cal A}} {w(s,a,a')} .
\end{align*}
Using~\cref{lemma:bounds1}, we further derive
\begin{align*}
\sum\limits_{(s,a)\in {\cal S} \times {\cal A}} {\rho (s,a)\{ \tilde Q_\eta(s,a) - {Q^*}(s,a)\} }  \ge& {\min _{(s,a) \in {\cal S} \times {\cal A}}}\rho (s,a)\sum\limits_{(s,a) \in {\cal S} \times {\cal A}} {(\tilde Q_\eta(s,a) - {Q^*}(s,a))}\\
=& {\min _{(s,a) \in {\cal S} \times {\cal A}}}\rho (s,a){\left\| {\tilde Q_\eta - {Q^*}} \right\|_1}\\
\ge& {\min _{(s,a) \in {\cal S} \times {\cal A}}}\rho (s,a){\left\| {\tilde Q_\eta - {Q^*}} \right\|_\infty },
\end{align*}
where the last line comes from ${\left\|  \cdot  \right\|_1} \ge {\left\|  \cdot  \right\|_\infty }$.
By dividing both sides by ${{{\min }_{(s,a) \in {\cal S} \times {\cal A}}}\rho (s,a)} >0$, one gets
\begin{align*}
{\left\| {{{\tilde Q}_\eta } - {Q^*}} \right\|_\infty } \le \frac{{\eta \sum\limits_{(s,a,a') \in {\cal S} \times {\cal A} \times {\cal A}} {w(s,a,a')} }}{{{{\min }_{(s,a) \in {\cal S} \times {\cal A}}}\rho (s,a)}}.
\end{align*}
The lower bounds come from the second statement of~\cref{lemma:bounds1}.
\end{proof}

\begin{proof}[Proof of the second statement of~\cref{thm:bounds2}]
Next, using the reverse triangle inequality, one gets
\begin{align*}
{\left\| {\tilde Q_\eta - {Q^*}} \right\|_\infty } =& {\left\| {\tilde Q_\eta - T\tilde Q_\eta + T\tilde Q_\eta - {Q^*}} \right\|_\infty }\\
\ge& {\left\| {\tilde Q_\eta - T\tilde Q_\eta} \right\|_\infty } - {\left\| {T\tilde Q_\eta - T{Q^*}} \right\|_\infty }\\
\ge& {\left\| {\tilde Q_\eta - T\tilde Q_\eta} \right\|_\infty } - \gamma {\left\| {\tilde Q_\eta - {Q^*}} \right\|_\infty },
\end{align*}
where the last inequality is due to the contraction property of the Bellman operator with respect to $\infty$-norm.

Rearranging terms on both sides results in
\begin{align*}
{\left\| {{{\tilde Q}_\eta } - T{{\tilde Q}_\eta }} \right\|_\infty } \le (1 + \gamma ){\left\| {{{\tilde Q}_\eta } - Q^*} \right\|_\infty } \le \frac{{(1 + \gamma )\eta \sum\limits_{(s,a,a') \in {\cal S} \times {\cal A} \times {\cal A}} {w(s,a,a')} }}{{{{\min }_{(s,a) \in {\cal S} \times {\cal A}}}\rho (s,a)}}.
\end{align*}
The lower bounds come from the first statement of~\cref{lemma:bounds1}, and this completes the proof.
\end{proof}

Based on~\cref{thm:bounds2}, we can derive the following bound on ${\tilde Q}_\eta$.
\begin{lemma}[Bound on ${\tilde Q}_\eta$]\label{lemma:bound-on-Q-eta}
${\tilde Q}_\eta$ satisfies
\begin{align*}
\left\| \tilde Q_\eta \right\|_\infty  \le \frac{\eta \mathop{\sum}\limits_{(s,a,a') \in {\cal S} \times {\cal A} \times {\cal A}} w(s,a,a') }{\min_{(s,a) \in {\cal S} \times {\cal A}}\rho (s,a)} + \frac{r_{\max}}{1 - \gamma }.
\end{align*}
\end{lemma}
\begin{proof}
Using~\cref{thm:bounds2}, one has
\begin{align*}
\left\| \tilde Q_\eta  \right\|_\infty =& \left\| \tilde Q_\eta  - Q^* + Q^* \right\|_\infty \\
\le& \left\| \tilde Q_\eta - Q^* \right\|_\infty  + \left\| Q^* \right\|_\infty\\
\le& \frac{\eta \mathop{\sum}\limits_{(s,a,a') \in {\cal S} \times {\cal A} \times {\cal A}} w(s,a,a') }{\min_{(s,a) \in {\cal S} \times {\cal A}}\rho (s,a)} + \frac{r_{\max}}{1 - \gamma},
\end{align*}
where the last line is due to~\cref{thm:bounds2} and ${\left\| Q^* \right\|_\infty } \le \frac{r_{\max }}{1 - \gamma}$. This completes the proof.

\end{proof}

\section{Appendix: proof of~\cref{thm:bounds3}}\label{sec:app:proof:bound2}

For convenience of the reader, the statements of~\cref{thm:bounds3} are summarized as follows:
\begin{enumerate}
\item $J^{\pi ^*} - \eta \sum\limits_{(s,a,a') \in {\cal S} \times {\cal A} \times {\cal A}} {w(s,a,a')}  \le {J^{{\tilde \pi }_\eta }} \le J^{\pi ^*}$

\item ${J^{{\pi ^*}}} - \frac{{\eta (1 + \gamma )\sum\limits_{(s,a,a') \in {\cal S} \times {\cal A} \times {\cal A}} {w(s,a,a')} }}{{(1 - \gamma ){\min_{(s,a) \in {\cal S} \times {\cal A}}}\rho (s,a)}} \le {J^{{{\tilde \beta }_\eta }}} \le J^{\pi ^*}$

\item $-\frac{{\eta (1 + \gamma )\sum\limits_{(s,a,a') \in {\cal S} \times {\cal A} \times {\cal A}} {w(s,a,a')} }}{{(1 - \gamma ){{\min }_{(s,a) \in {\cal S} \times {\cal A}}}\rho (s,a)}} \le {J^{{\tilde \beta }_\eta }} - {J^{{\tilde \pi }_\eta}} \le \eta \sum\limits_{(s,a,a') \in {\cal S} \times {\cal A} \times {\cal A}} {w(s,a,a')}$
\end{enumerate}

\begin{proof}[Proof of the first statement]
Because the optimal solution $\tilde Q_\eta \in {\cal D}$ should be in the domain of the log-barrier function, it should be strictly feasible for the primal LP. Moreover, $\tilde \lambda_\eta$ is feasible for the dual LP. Using these results, we can derive the following inequalities:
\begin{align*}
&\sum\limits_{(s,a,a') \in {\cal S} \times {\cal A} \times {\cal A}} {{\lambda ^*}(s,a,a')R(s,a)}\\
\ge& \sum\limits_{(s,a,a') \in {\cal S} \times {\cal A} \times {\cal A}} {\tilde \lambda (s,a,a')R(s,a)}\\
=& \sum\limits_{(s,a,a') \in {\cal S} \times {\cal A} \times A} {\tilde \lambda (s,a,a')R(s,a)}\\
&  + \sum\limits_{(s,a) \in {\cal S} \times {\cal A}} {\tilde Q(s,a)\left\{ {\rho (s,a) + \gamma \sum\limits_{(s',a') \in {\cal S} \times {\cal A}} {\tilde \lambda (s',a',a)P(s|s',a')}  - \sum\limits_{a' \in {\cal A}} {\tilde \lambda (s,a,a')} } \right\}}\\
=& L(\tilde Q,\tilde \lambda )\\
=& \sum\limits_{(s,a) \in {\cal S} \times {\cal A}} {\rho (s,a)\tilde Q(s,a)}  - \eta \sum\limits_{(s,a,a') \in {\cal S} \times {\cal A} \times {\cal A}} {w(s,a,a')}\\
\ge& \sum\limits_{(s,a) \in {\cal S} \times {\cal A}} {\rho (s,a){Q^*}(s,a)}  - \eta \sum\limits_{(s,a,a') \in {\cal S} \times {\cal A} \times {\cal A}} {w(s,a,a')}\\
=& \sum\limits_{(s,a,a') \in {\cal S} \times {\cal A} \times {\cal A}} {{\lambda ^*}(s,a,a')R(s,a)}  - \eta \sum\limits_{(s,a,a') \in {\cal S} \times {\cal A} \times {\cal A}} {w(s,a,a')},
\end{align*}
where the first inequality is due to the fact that $\tilde \lambda_\eta$ is dual feasible, the second inequality is due to the fact that $\tilde Q_\eta$ is primal feasible, and the last line is due to the strong duality of the LP problem.
Summarizing the above results, one gets
\begin{align*}
\sum\limits_{(s,a,a') \in {\cal S} \times {\cal A} \times {\cal A}} {{\lambda ^*}(s,a,a')R(s,a)}  \ge& \sum\limits_{(s,a,a') \in {\cal S} \times {\cal A} \times {\cal A}} {\tilde \lambda (s,a,a')R(s,a)}\\
\ge& \sum\limits_{(s,a,a') \in {\cal S} \times {\cal A} \times {\cal A}} {{\lambda ^*}(s,a,a')R(s,a)}\\
&  - \eta \sum\limits_{(s,a,a') \in {\cal S} \times {\cal A} \times {\cal A}} {w(s,a,a')}.
\end{align*}
The proof of the first statement is then completed using~\cref{prop:dual-property1}.

\end{proof}

\begin{proof}[Proof of the second statement]
For the proof of the second statement, from~\cref{lemma:bounds1} and~\cref{thm:bounds2}, we have
\begin{align*}
&\eta (1 - \gamma )\mathop{\min }\limits_{(s,a,a') \in {\cal S} \times {\cal A} \times {\cal A}} w(s,a,a')\\
 \le& {{\tilde Q}_\eta }(s,a) - \left\{ {R(s,a) + \gamma {\max_{a'\in {\cal A}}}\sum\limits_{s' \in {\cal S}} {P(s'|s,a){{\tilde Q}_\eta }(s',a')} } \right\}\\
\le& \frac{\eta (1 + \gamma )\sum\limits_{(s,a,a') \in {\cal S} \times {\cal A} \times {\cal A}} w(s,a,a')}{{{\min_{(s,a) \in {\cal S} \times {\cal A}}}\rho (s,a)}},\quad \forall (s,a,a') \in {\cal S} \times {\cal A} \times {\cal A}.
\end{align*}
By plugging the primal $\eta$-policy ${\tilde \beta }_\eta$ into $a$ in the above inequality results in
\begin{align*}
&\eta (1 - \gamma )\mathop{\min }\limits_{(s,a,a') \in {\cal S} \times {\cal A} \times {\cal A}} w(s,a,a')\\
 \le& {{\tilde Q}_\eta }(s,{{\tilde \beta }_\eta }(s)) - R(s,{{\tilde \beta }_\eta }(s)) - \gamma \sum\limits_{s' \in {\cal S}} {P(s'|s,{{\tilde \beta }_\eta }(s)){{\tilde Q}_\eta }(s',{{\tilde \beta }_\eta }(s'))} \\
\le& \frac{{\eta (1 + \gamma )\sum\limits_{(s,a,a') \in {\cal S} \times {\cal A} \times {\cal A}} {w(s,a,a')} }}{{{{\min }_{(s,a) \in {\cal S} \times {\cal A}}}\rho (s,a)}},\forall (s,a) \in {\cal S} \times {\cal A}.
\end{align*}
Next, taking the expectation on both sides of the above inequality leads to
\begin{align*}
&\eta (1 - \gamma )\mathop{\min }\limits_{(s,a,a') \in {\cal S} \times {\cal A} \times {\cal A}} w(s,a,a')\\
 \le& {\mathbb E}[{{\tilde Q}_\eta }(s_k,{{\tilde \beta }_\eta }(s_k))|{{\tilde \beta }_\eta },\rho ] - {\mathbb E}[R(s_k,{{\tilde \beta }_\eta }(s_k))|{{\tilde \beta }_\eta },\rho ] - \gamma {\mathbb E}[{{\tilde Q}_\eta }(s_{k+1},{{\tilde \beta }_\eta }(s_{k+1}))|{{\tilde \beta }_\eta },\rho ]\\
\le& \frac{{\eta (1 + \gamma )\sum\limits_{(s,a,a') \in {\cal S} \times {\cal A} \times {\cal A}} {w(s,a,a')} }}{{{\min_{(s,a) \in {\cal S} \times {\cal A}}}\rho (s,a)}}.
\end{align*}
where the expectation is take with respect to the state $s_k$ and the next state $s_{k+1}$ generated under ${\tilde \beta }_\eta$ and the initial state distribution $\rho$. Multiplying both sides of the above inequality by $\gamma ^k$ yields
\begin{align*}
&\eta (1 - \gamma ){\gamma ^k}\mathop{\min }\limits_{(s,a,a') \in {\cal S} \times {\cal A} \times {\cal A}} w(s,a,a')\\
\le& {\gamma ^k} {\mathbb E}[{{\tilde Q}_\eta }({s_k},{{\tilde \beta }_\eta }({s_k}))|{{\tilde \beta }_\eta },\rho ] - {\gamma ^k} {\mathbb E}[R({s_k},{{\tilde \beta }_\eta }({s_k}))|{{\tilde \beta }_\eta },\rho ] - {\gamma ^{k + 1}} {\mathbb E}[{{\tilde Q}_\eta }(s_{k + 1},{{\tilde \beta }_\eta }({s_{k + 1}}))|{{\tilde \beta }_\eta },\rho ]\\
\le& \frac{{\eta (1 + \gamma )\sum\limits_{(s,a,a') \in {\cal S} \times {\cal A} \times {\cal A}} {w(s,a,a')} }}{{{\min_{(s,a) \in {\cal S} \times {\cal A}}}\rho (s,a)}}{\gamma ^k}.
\end{align*}

Now, summing them over all $k =0,1,2,\ldots$ leads to
\begin{align*}
\eta \mathop{\min }\limits_{(s,a,a') \in {\cal S} \times {\cal A} \times {\cal A}} w(s,a,a') \le& {\mathbb E}[{\tilde Q}_\eta(s_0,{{\tilde \beta }_\eta }({s_0}))|{{\tilde \beta }_\eta },\rho ] - {J^{{{\tilde \beta }_\eta }}}\\
 \le&s \frac{{\eta (1 + \gamma )\sum\limits_{(s,a,a') \in {\cal S} \times {\cal A} \times {\cal A}} {w(s,a,a')} }}{{(1 - \gamma ){\min_{(s,a) \in {\cal S} \times {\cal A}}}\rho (s,a)}},
\end{align*}
where we use
\[{J^{{\tilde \beta }_\eta } = {\mathbb E} \left[ {\left. {\sum\limits_{k = 0}^\infty  {{\gamma ^k}R({s_k},{{\tilde \beta }_\eta }(s_k))} } \right|{{\tilde \beta }_\eta },\rho } \right] = \sum\limits_{k = 0}^\infty  {\gamma ^k} {\mathbb E}[R(s_k,{\tilde \beta }_\eta (s_k))|{{\tilde \beta }_\eta },\rho ]}. \]

Rearranging terms leads to
\begin{align}
{\mathbb E}[{\tilde Q}_\eta(s_0,{{\tilde \beta }_\eta }({s_0}))] - \frac{{\eta (1 + \gamma )\sum\limits_{(s,a,a') \in {\cal S} \times {\cal A} \times {\cal A}} {w(s,a,a')} }}{{(1 - \gamma ){{\min }_{(s,a) \in {\cal S} \times {\cal A}}}\rho (s,a)}} \le {J^{{{\tilde \beta }_\eta }}} \le J^{\pi ^*}.\label{eq:101}
\end{align}

Moreover, since ${{\tilde Q}_\eta }(s,a) \ge {Q^*}(s,a)$, we have
\begin{align}
\sum\limits_{s \in {\cal S}} {\rho (s){{\tilde Q}_\eta }(s,{{\tilde \beta }_\eta }(s))}  \ge \sum\limits_{s \in {\cal S}} {\rho (s){{\tilde Q}_\eta }(s,{\pi ^*}(s))}  \ge \sum\limits_{s \in {\cal S}} {\rho (s){Q^*}(s,{\pi ^*}(s))}  = {J^{\pi ^*}}.\label{eq:102}
\end{align}
Combining~\cref{eq:101} and~\cref{eq:102} leads to the second statement.
\end{proof}

\begin{proof}[Proof of the third statement]
The last statement can be easily obtained by combining the first and second statements. This completes the overall proof.
\end{proof}

\section{Appendix: policy evaluation problem}\label{sec:app:policy-evaluation}

In this section, we present a summary of several properties of the LP formulation associated with policy evaluation.
Similar to the policy design problem discussed in the main text, a number of corresponding properties can be established through analogous arguments.
For convenience of the reader, let us write the LP form of the policy evaluation problem again as follows:
\begin{align}
&\mathop{\min }\limits_{Q \in {\mathbb R}^{|{\cal S}||{\cal A}|}} \sum\limits_{(s,a) \in {\cal S} \times {\cal A}} {\rho (s,a)Q(s,a)}\label{eq:primal-LP2}\\
&{\rm subject\,\, to}\nonumber\\
&R(s,a) + \gamma \sum\limits_{(s',a') \in {\cal S} \times {\cal A}} {P(s'|s,a)\pi (a'|s')Q(s',a')}  \le Q(s,a),\quad (s,a) \in {\cal S} \times {\cal A},\nonumber
\end{align}
where $R(s,a)$ is the expected reward conditioned on $(s,a)\in {\cal S}\times {\cal A}$, $\rho$ denotes any probability distribution over ${\cal S} \times {\cal A}$ with strictly positive support.
For convenience, let us define the Bellman operator
\begin{align*}
({T^\pi }Q)(s,a): = R(s,a) + \gamma \sum\limits_{(s',a') \in {\cal S} \times {\cal A}} {P(s'|s,a)\pi (a'|s')Q(s',a')} ,\quad (s,a) \in {\cal S} \times {\cal A}.
\end{align*}

Analogous to the policy design problem, it can be established that the optimal solution of the above LP corresponds to the Q-function, $Q^\pi$.
\begin{lemma}\label{lemma:optimality2}
The optimal solution of the LP in~\cref{eq:primal-LP2} is unique and given by $Q^\pi$
\end{lemma}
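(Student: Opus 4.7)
The plan is to mirror the proof of~\cref{lemma:optimality} almost verbatim, with $T^\pi$ in place of $T$ and $Q^\pi$ in place of $Q^*$. First, I would let $\hat Q$ denote any optimal solution of~\cref{eq:primal-LP2}. Since $\hat Q$ is feasible for the LP, the constraint $R(s,a) + \gamma \sum_{(s',a')}P(s'|s,a)\pi(a'|s')\hat Q(s',a') \le \hat Q(s,a)$ is exactly the componentwise inequality $T^\pi \hat Q \le \hat Q$.

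Next, I would invoke the well-known monotonicity of the policy Bellman operator $T^\pi$: if $Q' \le Q''$ componentwise then $T^\pi Q' \le T^\pi Q''$. Applying $T^\pi$ repeatedly to $T^\pi \hat Q \le \hat Q$ yields $(T^\pi)^k \hat Q \le \hat Q$ for every $k\ge 0$, and because $T^\pi$ is a $\gamma$-contraction with respect to the $\infty$-norm whose unique fixed point is $Q^\pi$, we have $\lim_{k\to\infty}(T^\pi)^k \hat Q = Q^\pi$. Passing to the limit gives $Q^\pi \le \hat Q$ componentwise. Conversely, $T^\pi Q^\pi = Q^\pi$ shows that $Q^\pi$ is itself feasible for the LP.

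To conclude, I would combine optimality with the fact that $\rho$ has strictly positive support. Since $Q^\pi$ is feasible and $\hat Q$ is a minimizer, $\sum_{(s,a)}\rho(s,a)\hat Q(s,a) \le \sum_{(s,a)}\rho(s,a)Q^\pi(s,a)$, i.e., $\sum_{(s,a)}\rho(s,a)\{\hat Q(s,a)-Q^\pi(s,a)\} \le 0$. Together with $\hat Q - Q^\pi \ge 0$ componentwise and $\rho(s,a)>0$ for every $(s,a)$, each summand $\rho(s,a)\{\hat Q(s,a)-Q^\pi(s,a)\}$ is nonnegative yet their sum is nonpositive, so each term vanishes. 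Hence $\hat Q(s,a) = Q^\pi(s,a)$ for all $(s,a)\in {\cal S}\times {\cal A}$, which simultaneously establishes uniqueness and identifies the unique minimizer as $Q^\pi$.

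I do not anticipate any real obstacle: the monotonicity and contraction of $T^\pi$ are standard, and the only mild subtlety, already present in the proof of~\cref{lemma:optimality}, is that LP optimality is a scalar condition whereas $Q^\pi \le \hat Q$ is componentwise — the strict positivity of $\rho$ is precisely what bridges these two notions and produces uniqueness.
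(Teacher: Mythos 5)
Your proposal is correct and follows essentially the same route as the paper's proof: feasibility gives $T^\pi \hat Q \le \hat Q$, monotonicity and the contraction property of $T^\pi$ yield $Q^\pi \le \hat Q$, and feasibility of $Q^\pi$ closes the sandwich. If anything, your final step is slightly more careful than the paper's, which asserts the componentwise bound $\hat Q \le Q^\pi$ directly from optimality, whereas you correctly note that optimality only gives the scalar inequality $\sum_{(s,a)}\rho(s,a)\hat Q(s,a) \le \sum_{(s,a)}\rho(s,a)Q^\pi(s,a)$ and that the strict positivity of $\rho$ is what upgrades this to componentwise equality.
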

\begin{proof}
Let us assume that the optimal solution of the above LP in~\cref{eq:primal-LP2} is $\hat Q$.
Since $\hat Q$ is feasible, it satisfies
\begin{align*}
(T^\pi \hat Q)(s,a) \le \hat Q(s,a),\quad (s,a) \in {\cal S} \times {\cal A}.
\end{align*}
Equivalently, it can be written as $T^\pi \hat Q \le \hat Q$.
Because $T$ is a monotone operator, we have
\begin{align*}
\mathop{\lim }\limits_{k \to \infty } (T^\pi)^k\hat Q \le \hat Q \Rightarrow {Q^\pi} \le \hat Q.
\end{align*}
Moreover, since $T^\pi{Q^\pi} = Q^\pi$, $Q^\pi$ is a feasible solution, and hence, $\hat Q \le Q^\pi$.
Therefore, $Q^\pi \le \hat Q \le Q^\pi$, which implies that $\hat Q = Q^\pi$.
\end{proof}

We next present the dual LP~\citep[Chapter~5]{Boyd2004} corresponding to the policy evaluation LP.
\begin{lemma}\label{lemma:dual-LP3}
The dual problem of the primal LP in~\cref{eq:primal-LP2} is given by
\begin{align*}
&{\max _{\lambda  \ge 0}}\sum\limits_{(s,a) \in {\cal S} \times {\cal A}} {\lambda (s,a)R(s,a)}\\
&{\rm subject\,\, to}\\
&\lambda (s,a) - \gamma \sum\limits_{(i,j) \in S \times A} {P(s|i,j)\pi (a|s)\lambda (i,j)}  - \rho (s,a) = 0,\quad \forall (s,a) \in {\cal S} \times {\cal A}.
\end{align*}
\end{lemma}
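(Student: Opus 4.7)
The plan is to mirror the derivation of \cref{lemma:dual-LP} in \cref{sec:app:dual-LP1}, but adapted to the policy evaluation LP~\cref{eq:primal-LP2} where each inequality is indexed by $(s,a)\in{\cal S}\times{\cal A}$ (rather than $(s,a,a')$). The starting point is to introduce a nonnegative dual variable $\lambda(s,a)\ge 0$ for each inequality constraint and form the Lagrangian
\begin{align*}
L(Q,\lambda) =&\ \sum_{(s,a)\in{\cal S}\times{\cal A}} \rho(s,a)Q(s,a)\\
&+ \sum_{(s,a)\in{\cal S}\times{\cal A}} \lambda(s,a)\left[R(s,a) + \gamma\sum_{(s',a')\in{\cal S}\times{\cal A}} P(s'|s,a)\pi(a'|s')Q(s',a') - Q(s,a)\right].
\end{align*}
Then I would pursue the standard recipe: the dual function $g(\lambda):=\inf_{Q\in{\mathbb R}^{|{\cal S}||{\cal A}|}} L(Q,\lambda)$ is finite only when the coefficient of each $Q(s,a)$ vanishes, and in that case the dual problem reduces to maximizing the remaining affine functional of $\lambda$ over $\lambda\ge 0$.

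The key algebraic step is to rewrite $L(Q,\lambda)$ as an affine function of $Q$ by grouping the terms that multiply $Q(s,a)$. Reindexing the double sum $\sum_{(s,a)}\lambda(s,a)\gamma\sum_{(s',a')}P(s'|s,a)\pi(a'|s')Q(s',a')$ by swapping the roles of the dummy pair $(s,a)\leftrightarrow(i,j)$ and $(s',a')\leftrightarrow(s,a)$ yields
\begin{align*}
\gamma\sum_{(s,a)\in{\cal S}\times{\cal A}}Q(s,a)\sum_{(i,j)\in{\cal S}\times{\cal A}}P(s|i,j)\pi(a|s)\lambda(i,j).
\end{align*}
Combining everything gives
\begin{align*}
L(Q,\lambda)=\sum_{(s,a)\in{\cal S}\times{\cal A}}\lambda(s,a)R(s,a)+\sum_{(s,a)\in{\cal S}\times{\cal A}}Q(s,a)\left\{\gamma\sum_{(i,j)\in{\cal S}\times{\cal A}}P(s|i,j)\pi(a|s)\lambda(i,j)-\lambda(s,a)+\rho(s,a)\right\}.
\end{align*}

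Because $Q\in{\mathbb R}^{|{\cal S}||{\cal A}|}$ is unconstrained in the inner minimization, $\inf_Q L(Q,\lambda)$ is $-\infty$ whenever any coefficient in the braces is nonzero; finiteness of $g(\lambda)$ therefore forces the equality constraint
\begin{align*}
\lambda(s,a)-\gamma\sum_{(i,j)\in{\cal S}\times{\cal A}}P(s|i,j)\pi(a|s)\lambda(i,j)-\rho(s,a)=0,\quad \forall(s,a)\in{\cal S}\times{\cal A},
\end{align*}
and on this feasible set $g(\lambda)=\sum_{(s,a)}\lambda(s,a)R(s,a)$. Maximizing $g(\lambda)$ over $\lambda\ge 0$ subject to this equality constraint produces the stated dual. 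The only genuinely delicate step is the reindexing of the transition-weighted term; once the placement of $\pi(a|s)$ (rather than $\pi(a'|s')$) is tracked correctly through the swap, the remainder is routine Lagrangian manipulation identical in structure to \cref{sec:app:dual-LP1}.
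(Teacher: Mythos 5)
Your proposal is correct and follows essentially the same route as the paper's proof: form the Lagrangian with multipliers $\lambda(s,a)\ge 0$, reindex the transition-weighted sum to collect the coefficient of each $Q(s,a)$, and observe that finiteness of $\inf_Q L(Q,\lambda)$ forces that coefficient to vanish, leaving the stated dual. (Your coefficient in braces is the negative of the one printed in the paper's derivation, which is immaterial since it is set to zero; your sign is in fact the correct one.)
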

\begin{proof}
Let us consider the Lagrangian function~\citep[Chapter~5]{Boyd2004}
\begin{align*}
L(Q,\lambda) =& \sum\limits_{(s,a) \in {\cal S} \times {\cal A}} {\rho (s,a)Q(s,a)}\\
&+ \sum\limits_{(s,a) \in {\cal S} \times {\cal A}} {\lambda (s,a)(R(s,a) + \gamma \sum\limits_{(s',a') \in {\cal S} \times {\cal A}} {P(s'|s,a)\pi (a'|s')Q(s',a') - Q(s,a)} )},
\end{align*}
where $\lambda$ is the Lagrangian multiplier vector (or dual variable). Next, the function can be written by
\begin{align*}
L(Q,\lambda ) =& \sum\limits_{(s,a) \in {\cal S} \times {\cal A}} {\lambda (s,a)R(s,a)} \\
&+ \sum\limits_{(s,a) \in {\cal S} \times {\cal A}} {Q(s,a)\left\{ {\lambda (s,a) - \gamma \sum\limits_{(i,j) \in {\cal S} \times {\cal A}} {P(s|i,j)\pi (a|s)\lambda (i,j)}  - \rho (s,a)} \right\}}.
\end{align*}

One can observe that the dual problem ${\max _{\lambda  \ge 0}}{\min _{Q \in {\mathbb R}^{|{\cal S}||{\cal A}|}}}L(Q,\lambda )$ is finite only when
\begin{align*}
\lambda (s,a) - \gamma \sum\limits_{(i,j) \in {\cal S} \times {\cal A}} {P(s|i,j)\pi (a|s)\lambda (i,j)}  - \rho (s,a) = 0,\quad \forall (s,a) \in {\cal S} \times {\cal A}.
\end{align*}
Therefore, one gets the dual problem.
\end{proof}

In what follows, we investigate the properties of feasible solutions of the dual LP introduced above.
\begin{theorem}\label{prop:dual-property2}
Suppose that the marginal
\begin{align*}
\rho (s) = \sum\limits_{a \in {\cal A}} {\rho (s,a)}
\end{align*}
represents the initial state distribution and $\lambda (s,a),\forall (s,a) \in {\cal S} \times {\cal A}$ is any dual feasible solution (all solutions satisfying the dual constraints). Then, we obtain the following results:
\begin{enumerate}
\item $(1-\gamma)\lambda$ is a probability distribution, i.e.,
\[(1-\gamma)\lambda (s,a) \ge 0,\quad \forall (s,a) \in {\cal S} \times {\cal A},\quad \sum\limits_{(s,a) \in {\cal S} \times {\cal A}} {(1-\gamma)\lambda (s,a)}  = 1.\]

\item Moreover, let us define the marginal distributions
\begin{align*}
\lambda (s): = \sum\limits_{a \in {\cal A}} {\lambda (s,a)} ,\quad \rho (s): = \sum\limits_{a \in {\cal A}} {\rho (s,a)}.
\end{align*}
Then, we have
\begin{align*}
\lambda (s') - \gamma \sum\limits_{s \in {\cal S}} {P^\xi(s'|s)\lambda (s)}  = \rho (s'),
\end{align*}
where
\[\xi ( \cdot |s): = \left[ {\begin{array}{*{20}{c}}
{\frac{\lambda (s,1)}{\sum\limits_{a \in {\cal A}} {\lambda (s,a)} }}&{\frac{{\lambda (s,2)}}{{\sum\limits_{a \in {\cal A}} {\lambda (s,a)} }}}& \cdots &{\frac{\lambda (s,|{\cal A}|)}{\sum\limits_{a \in {\cal A}} {\lambda (s,a)} }}
\end{array}} \right]\]
and
\[P^\xi (s'|s): = \sum_{a \in {\cal A}} {P(s'|s,a)\xi (a|s)} \]
is the probability of the transition from $s$ to $s'$ under the policy $\xi$.

\item $(1-\gamma)\lambda ( \cdot )$ is the discounted state occupation probability distribution defined as
\[(1 - \gamma )\lambda (s) = {\mu ^\xi }(s) = (1 - \gamma )\sum\limits_{k = 0}^\infty  {\gamma ^k {\mathbb P}[s_k = s|\xi ,\rho ]}. \]

\item The dual objective function satisfies
\[\sum\limits_{(s,a) \in {\cal S} \times {\cal A}} {\lambda (s,a)R(s,a)}  = \frac{1}{{1 - \gamma }}\sum\limits_{s \in {\cal S}} {{V^\xi }(s)\rho (s)}  = \frac{1}{{1 - \gamma }}{J^\xi }\]
\end{enumerate}
\end{theorem}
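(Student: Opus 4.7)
The plan is to follow the four-step structure used in the proof of \cref{prop:dual-property1}, adapted to the simpler index structure of the policy-evaluation dual (where $\lambda(s,a)$ replaces $\lambda(s,a,a')$). Throughout, I would exploit the dual-feasibility equation
\[
\lambda(s,a) - \gamma \sum_{(i,j)\in {\cal S}\times {\cal A}} P(s|i,j)\pi(a|s)\lambda(i,j) = \rho(s,a),
\]
which couples to the primal-LP policy $\pi$, while the dual-induced policy $\xi$ enters only through the identity $\lambda(i,j) = \xi(j|i)\lambda(i)$ built into the definition of $\xi$.

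For statement~1, I would sum the dual constraint over $(s,a)\in {\cal S}\times {\cal A}$. The cross term collapses via $\sum_{s} P(s|i,j)=1$ and $\sum_{a}\pi(a|s)=1$, leaving $(1-\gamma)\sum_{(s,a)}\lambda(s,a) = \sum_{(s,a)}\rho(s,a) = 1$; nonnegativity is inherited from the dual feasibility constraint $\lambda\ge 0$. For statement~2, I would sum the constraint only over $a\in {\cal A}$, obtaining $\lambda(s)-\gamma\sum_{i,j}P(s|i,j)\lambda(i,j)=\rho(s)$, and then substitute $\lambda(i,j)=\xi(j|i)\lambda(i)$ to rewrite the summed term as $\gamma\sum_i P^\xi(s|i)\lambda(i)$, yielding the claimed identity. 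For statement~3, I would vectorize statement~2 as $\lambda=\rho+\gamma(P^\xi)^T\lambda$ and expand the Neumann series $(I-\gamma (P^\xi)^T)^{-1}\rho=\sum_{k\ge 0}\gamma^k ((P^\xi)^T)^k\rho$, which converges because $\gamma<1$ and $P^\xi$ is row-stochastic; reading off coordinates gives $\lambda(s)=\sum_{k\ge 0}\gamma^k{\mathbb P}[s_k=s|\xi,\rho]$, and multiplying by $(1-\gamma)$ identifies $(1-\gamma)\lambda(s)=\mu^\xi(s)$.

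For statement~4, I would first rewrite $\sum_{(s,a)}\lambda(s,a)R(s,a)=\sum_s\lambda(s)R^\xi(s)$ with $R^\xi(s):=\sum_a\xi(a|s)R(s,a)$, then multiply the flow equation from statement~2 by $V^\xi(s')$ and sum over $s'$, and finally invoke the Bellman equation $V^\xi=R^\xi+\gamma P^\xi V^\xi$ to telescope the cross term into $R^\xi$ and collapse the right-hand side to an inner product against $\rho$. The main point of care — and the hardest piece of bookkeeping — is the $(1-\gamma)$ normalization: because $\lambda$ sums to $1/(1-\gamma)$ rather than one, one has to track carefully whether each intermediate identity is stated in terms of $\lambda$ or of the occupation distribution $\mu^\xi=(1-\gamma)\lambda$ in order to land on the precise form claimed by statement~4. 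The cleanest route is to avoid expanding $\lambda$ as a second Neumann series at this stage, and instead derive the identity algebraically from the Bellman fixed point for $V^\xi$, which immediately produces $\sum_s V^\xi(s)\rho(s)$ up to the stated scaling factor.
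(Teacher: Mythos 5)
Your plan for statements~1--3 is exactly the paper's argument: sum the dual constraint over $(s,a)$ (respectively over $a$ only), use $\sum_a \pi(a|s)=1$ and $\sum_s P(s|i,j)=1$ to collapse the cross term, substitute $\lambda(i,j)=\xi(j|i)\lambda(i)$ to obtain the flow equation, and expand the Neumann series of $(I-\gamma (P^\xi)^T)^{-1}\rho$. For statement~4 you take a genuinely different (and more self-contained) route: the paper simply asserts the chain $\sum_{(s,a)}\lambda(s,a)R(s,a)=\sum_s R^\xi(s)\lambda(s)=\sum_s V^\xi(s)\rho(s)$, implicitly relying on the occupation-measure expansion from statement~3, whereas you pair the flow equation against $V^\xi$ and telescope via the Bellman fixed point $V^\xi=R^\xi+\gamma P^\xi V^\xi$; your version makes the glossed-over step explicit at no extra cost. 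One thing to resolve rather than hedge with ``up to the stated scaling factor'': carried through, your computation gives $\sum_s\lambda(s)R^\xi(s)=\sum_{s'}\rho(s')V^\xi(s')=J^\xi$ exactly, with \emph{no} factor of $\tfrac{1}{1-\gamma}$. This matches the paper's own proof of statement~4 and the analogous statement~4 of \cref{prop:dual-property1}, and a one-state sanity check ($\lambda=\tfrac{1}{1-\gamma}$, $J^\xi=\tfrac{R}{1-\gamma}$) confirms it; the $\tfrac{1}{1-\gamma}$ prefactor in the statement of \cref{prop:dual-property2} appears to be a typo, so you should state and prove the identity without it rather than try to manufacture the extra factor.
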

\begin{proof}

For the first statement, summing the dual constraints over $(s,a)\in {\cal S}\times {\cal A}$, we get
\begin{align*}
\Rightarrow \sum\limits_{(s,a) \in {\cal S} \times {\cal A}} {\lambda (s,a)}  - \gamma \sum\limits_{(s,a) \in {\cal S} \times {\cal A}} {\lambda (s,a)}  = 1,
\end{align*}
which leads to
\begin{align*}
\sum\limits_{(s,a) \in {\cal S} \times {\cal A}} {\lambda (s,a)}=1.
\end{align*}
It proves the first statement.
For the second statement, summing the dual constraints in~\cref{eq:dual-constraint1} over $a\in {\cal A}$ leads to
\begin{align*}
\lambda (s,a) - \gamma \sum\limits_{(i,j) \in {\cal S} \times {\cal A}} {P(s|i,j)\lambda (i,j,a)}  = \rho (s,a).
\end{align*}
Moreover, summing the above equation over $a\in {\cal A}$ leads to
\begin{align*}
\lambda (s) - \gamma \sum\limits_{(i,j) \in {\cal S} \times {\cal A}} {P(s|i,j)\lambda (i,j)}  = \rho (s),
\end{align*}
which can be further written as
\begin{align}
&\lambda (s) - \gamma \sum\limits_{(i,j) \in {\cal S} \times {\cal A}} {P(s|i,j)\lambda (i,j)}\nonumber\\
=& \lambda (s) - \gamma \sum\limits_{(i,j) \in {\cal S} \times {\cal A}} {P(s|i,j)\frac{\lambda (i,j)}{\sum\limits_{a \in {\cal A}} {\lambda (i,a)} }\sum\limits_{a \in {\cal A}} {\lambda (i,a)} }\nonumber \\
 =& \lambda (s) - \gamma \sum\limits_{(i,j) \in {\cal S} \times {\cal A}} {P(s|i,j)\xi (j|i)\lambda (i)}\nonumber \\
=& \lambda (s) - \gamma \sum\limits_{i \in {\cal S}} {P^\xi(s|i)\lambda (i)} \nonumber\\
=& \rho (s),\label{eq:412}
\end{align}
and this proves the second statement.

To prove the third statement, we can first consider the vectorized form of~\cref{eq:412}
\begin{align*}
\lambda  - \gamma {(P^\xi)^T}\lambda  = \rho,
\end{align*}
where
\begin{align*}
\lambda  =& \left[ {\begin{array}{*{20}{c}}
{\lambda (1)}\\
 \vdots \\
{\lambda (|{\cal S}|)}
\end{array}} \right] \in {\mathbb R}^{|{\cal S}|},\quad \rho  = \left[ {\begin{array}{*{20}{c}}
{\rho (1)}\\
 \vdots \\
{\rho (|S|)}
\end{array}} \right] \in {\mathbb R}^{|{\cal S}|},\\
{P^\xi } =& \left[ {\begin{array}{*{20}{c}}
{{P^\xi }(1|1)}& \cdots &{{P^\pi }(|{\cal S}||1)}\\
 \vdots & \ddots & \vdots \\
{{P^\xi }(1||S|)}& \cdots &{{P^\xi }(|{\cal S}|||{\cal S}|)}
\end{array}} \right] \in {\mathbb R}^{|{\cal S}| \times |{\cal S}|},
\end{align*}
which leads to
\begin{align*}
{\lambda ^T} = {\rho ^T} + \gamma {\rho ^T}{P_\xi } + {\gamma ^2}{\rho ^T}P_\xi ^2 +  \cdots.
\end{align*}
The expression can be written as
\[\lambda (s) = \sum\limits_{k = 0}^\infty  {{\gamma ^k}{\mathbb P}[s_k = s|\pi ,\xi ]}  = \frac{1}{1 - \gamma}{\mu ^\xi}(s).\]
This completes the proof of the third statement.

The last statement can be proved through the following identities:
\begin{align*}
\sum\limits_{(s,a) \in {\cal S} \times {\cal A}} {\lambda (s,a)R(s,a)}  =& \sum\limits_{(s,a) \in {\cal S} \times {\cal A}} {\xi (a|s)R(s,a)} \lambda (s)\\
=& \sum\limits_{s \in {\cal S}} {{V^\xi }(s)\rho (s)} \\
=& J^\xi,
\end{align*}
where $R^\xi$ denotes the expected reward under the policy $\xi$, and $V^\xi$ denotes the value function under $\xi$.
\end{proof}

Based on~\cref{prop:dual-property2}, we can readily obtain the following corollary.
\begin{corollary}
Suppose that the marginal $\rho (s): = \sum_{a \in {\cal A}} {\rho (s,a)} ,s \in {\cal S}$ represents the initial state distribution,
\begin{align*}
\pi(a|s): = \frac{{\rho (s,a)}}{{\sum\limits_{a \in {\cal A}} {\rho (s,a)} }}
\end{align*}
represents $\pi$, $Q^\pi$ is the primal optimal solution, and $\lambda^\pi$ is the dual optimal solution.
Then, the policy $\xi$ defined in the following is identical to $\pi$ in the sense that their corresponding MDP objective values are identical:
\begin{align*}
\xi( \cdot |s): = \left[ {\begin{array}{*{20}{c}}
{\frac{\lambda^\pi (s,1)}{\sum_{a \in {\cal A}} {\lambda^\pi (s,a)} }}&{\frac{\lambda^\pi (s,2)}{\sum_{a \in {\cal A}} {\lambda^\pi (s,a)} }}& \cdots &{\frac{\lambda^\pi (s,|{\cal A}|)}{\sum_{a \in {\cal A}} \lambda^\pi (s,a) }}
\end{array}} \right]
\end{align*}
which is identical to the MDP objective function $J^\pi$.
\end{corollary}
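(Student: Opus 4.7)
The plan is to chain together the earlier results via LP strong duality. Concretely, I would combine \cref{lemma:optimality2}, the fourth statement of \cref{prop:dual-property2}, and the strong duality between~\cref{eq:primal-LP2} and its dual from~\cref{lemma:dual-LP3}. The overall claim reduces to showing that both $J^\pi$ and $J^\xi$ equal the common optimal value of the primal--dual LP pair.

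First, I would rewrite the primal optimal value in terms of $J^\pi$. By \cref{lemma:optimality2}, the unique primal optimal solution is $Q^\pi$, so the optimal primal value is $\sum_{(s,a) \in {\cal S} \times {\cal A}} \rho(s,a) Q^\pi(s,a)$. Using the hypothesis $\rho(s,a) = \rho(s)\pi(a|s)$ with $\rho(s)$ the initial-state distribution, I would factor the sum as $\sum_{s \in {\cal S}} \rho(s) \sum_{a \in {\cal A}} \pi(a|s) Q^\pi(s,a) = \sum_{s \in {\cal S}} \rho(s) V^\pi(s) = J^\pi$.

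Next, I would identify the dual optimal value with $J^\xi$. Since $\lambda^\pi$ is dual feasible, the fourth statement of \cref{prop:dual-property2} applied to $\lambda^\pi$ gives $\sum_{(s,a) \in {\cal S} \times {\cal A}} \lambda^\pi(s,a) R(s,a) = J^\xi$, where $\xi$ is exactly the policy constructed from $\lambda^\pi$ in the statement of the corollary. Finally, LP strong duality for \cref{eq:primal-LP2}--the primal has a strictly feasible point (e.g., the constant $Q = (r_{\max}+\varepsilon)/(1-\gamma)$ used earlier, which satisfies the $T^\pi$ version of the constraint by the same calculation) and thus Slater's condition holds--implies that the primal and dual optimal values coincide. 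Combining the three identities yields $J^\xi = J^\pi$.

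The steps here are all straightforward applications of prior results, so the only mild obstacle is making sure that strong duality is justified (via Slater's condition or, equivalently, via the existence of a strictly feasible primal point); this is essentially the same observation made below~\cref{eq:primal-LP1} and transfers immediately to the policy-evaluation LP. Once that is in place, the identification $J^\xi = J^\pi$ is immediate from the chain of equalities above.
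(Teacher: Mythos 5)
Your proposal is correct and follows essentially the same route as the paper: compute the primal optimal value as $J^\pi$ via the factorization $\rho(s,a)=\rho(s)\pi(a|s)$, compute the dual optimal value as $J^\xi$ from the policy induced by $\lambda^\pi$, and equate the two by LP strong duality. One small caveat: the fourth statement of \cref{prop:dual-property2} as printed carries a $1/(1-\gamma)$ factor (its own proof actually derives $\sum_{(s,a)}\lambda(s,a)R(s,a)=J^\xi$ directly, which is what the corollary needs), so you should invoke that computation rather than the displayed statement; your explicit Slater's-condition justification of strong duality is a harmless refinement that the paper leaves implicit.
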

\begin{proof}
We first note that the optimal primal solution $Q^\pi$ does not depend on $\rho$, while the optimal dual solution $\lambda^\pi$ depends on $\rho$.
Keeping this in mind, let us suppose that the marginal
\begin{align*}
\rho (s) = \sum\limits_{a \in {\cal A}} {\rho (s,a)}
\end{align*}
represents the initial state distribution and
\begin{align*}
\pi(a|s): = \frac{{\rho (s,a)}}{{\sum\limits_{a \in {\cal A}} {\rho (s,a)} }}
\end{align*}
represents the policy $\pi$. The optimal primal objective function value is written as
\begin{align*}
\sum\limits_{(s,a) \in {\cal S} \times {\cal A}} {\rho (s,a){Q^\pi }(s,a)}  =& \sum\limits_{(s,a) \in {\cal S} \times {\cal A}} {\rho (s)\pi (a|s){Q^\pi }(s,a)}\\
 =& \sum\limits_{s \in {\cal S}} {\rho (s){V^\pi }(s)} \\
=& {J^\pi }
\end{align*}
On the other hand, the optimal dual objective function value is
\begin{align*}
\sum\limits_{(s,a) \in {\cal S} \times {\cal A}} {\lambda ^\pi(s,a)R(s,a)}=& \sum\limits_{(s,a) \in {\cal S} \times {\cal A}} {\xi (a|s)R(s,a)} {\lambda ^\pi }(s)\\
=& \sum\limits_{s \in {\cal S}} {R^\xi(s)} {\lambda ^\pi }(s)\\
=& \sum\limits_{s \in {\cal S}} {V^\xi(s)\rho (s)}\\
=& J^\xi.
\end{align*}
By the strong dualty, we have $J^\xi = J^\pi$, which means that the policy $\xi$ is identical to the policy $\pi$.
\end{proof}

Let us consider the objective function with log-barrier function
\[f_\eta^\pi (Q): = \sum\limits_{(s,a) \in {\cal S} \times {\cal A}} {Q(s,a)\rho (s,a)}  + \eta \sum\limits_{(s,a) \in {\cal S} \times {\cal A}} {w(s,a)\varphi \left( {({T^\pi }Q)(s,a) - Q(s,a)} \right)} \]
where $\eta >0$ is a weight parameter and $w(s,a)>0,(s,a)\in {\cal S} \times {\cal A}$ are weight parameters of the inequality constraints.
Then, we can prove that the corresponding optimal solution $\tilde Q_\eta^\pi: = \argmin_{Q \in {\cal D}}f_\eta^\pi(Q)$ approximates $Q^\pi$. The following lemma establishes such an error bound between $\tilde Q_\eta$ and $Q^*$, and, in addition, presents a bound on the Bellman error corresponding to
$\tilde Q_\eta$.
\begin{theorem}\label{thm:bounds4}
We have
\begin{enumerate}
\item $\eta \mathop{\min }\limits_{(s,a) \in {\cal S} \times {\cal A}} w(s,a) < {\left\| {{\tilde Q}^\pi_\eta } - Q^\pi \right\|_\infty } \le \frac{{\eta \sum\limits_{(s,a) \in {\cal S} \times {\cal A} } {w(s,a)} }}{{{\min_{(s,a) \in {\cal S} \times {\cal A}}}\rho (s,a)}}$

\item $\eta (1 - \gamma )\mathop{\min }\limits_{(s,a) \in {\cal S} \times {\cal A}} w(s,a) < {\left\| {{\tilde Q}^\pi_\eta - T^\pi {\tilde Q}^\pi_\eta } \right\|_\infty } \le \frac{{(1 + \gamma )\eta \sum\limits_{(s,a) \in {\cal S} \times {\cal A} } {w(s,a)} }}{{\min_{(s,a) \in {\cal S} \times {\cal A}}}\rho (s,a)}$
\end{enumerate}
\end{theorem}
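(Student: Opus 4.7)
The plan is to mirror the proof of \cref{thm:bounds2} using the policy-evaluation LP \cref{eq:primal-LP2} and its dual (\cref{lemma:dual-LP3}), with the Bellman operator $T$ replaced by the linear operator $T^\pi$. First I would define the auxiliary multipliers
\[\tilde\lambda^\pi_\eta(s,a):=\frac{\eta w(s,a)}{\tilde Q^\pi_\eta(s,a)-(T^\pi\tilde Q^\pi_\eta)(s,a)},\]
and, by writing out the first-order optimality condition $\nabla_Q f^\pi_\eta(Q)|_{Q=\tilde Q^\pi_\eta}=0$ exactly as in the derivation preceding \cref{thm:bounds2}, verify that $\tilde\lambda^\pi_\eta$ satisfies the equality constraints in \cref{lemma:dual-LP3}; hence it is dual feasible. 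Applying \cref{prop:dual-property2} to $\tilde\lambda^\pi_\eta$ then gives $(1-\gamma)\tilde\lambda^\pi_\eta(s,a)\in(0,1)$, which immediately yields
\[\tilde Q^\pi_\eta(s,a)-(T^\pi\tilde Q^\pi_\eta)(s,a)>\eta(1-\gamma)w(s,a)\ge\eta(1-\gamma)\min_{(s,a)}w(s,a),\]
establishing the lower bound in part~2.

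Next, exploiting the linearity of $T^\pi$ (this is the step that is simpler than in \cref{thm:bounds2}), I would subtract $Q^\pi=T^\pi Q^\pi$ on both sides to get $(I-\gamma P^\pi)(\tilde Q^\pi_\eta-Q^\pi)\ge\eta(1-\gamma)\min w\cdot\mathbf{1}$ entry-wise, and then left-multiply by the componentwise nonnegative Neumann series $(I-\gamma P^\pi)^{-1}=\sum_{k\ge 0}(\gamma P^\pi)^k$ to conclude $\tilde Q^\pi_\eta(s,a)-Q^\pi(s,a)\ge\eta\min w$ entry-wise. This delivers the lower bound in part~1 and, at the same time, the one-sided inequality $\tilde Q^\pi_\eta\ge Q^\pi$ needed below.

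For the upper bounds, I would invoke strong LP duality. Primal feasibility of $\tilde Q^\pi_\eta$ yields $\sum_{(s,a)}\rho(s,a)Q^\pi(s,a)\le\sum_{(s,a)}\rho(s,a)\tilde Q^\pi_\eta(s,a)$. On the other hand, plugging the dual-feasible $\tilde\lambda^\pi_\eta$ into the Lagrangian and simplifying just as in \cref{eq:appendix:1} (now with a single sum over $(s,a)$) gives
\[L(\tilde Q^\pi_\eta,\tilde\lambda^\pi_\eta)=\sum_{(s,a)}\rho(s,a)\tilde Q^\pi_\eta(s,a)-\eta\sum_{(s,a)}w(s,a)\le\sum_{(s,a)}\rho(s,a)Q^\pi(s,a).\]
Combining the two sandwich inequalities and using the entry-wise lower bound $\tilde Q^\pi_\eta\ge Q^\pi$ together with $\|\cdot\|_1\ge\|\cdot\|_\infty$ produces
\[\min_{(s,a)}\rho(s,a)\,\|\tilde Q^\pi_\eta-Q^\pi\|_\infty\le\sum_{(s,a)}\rho(s,a)(\tilde Q^\pi_\eta(s,a)-Q^\pi(s,a))\le\eta\sum_{(s,a)}w(s,a),\]
which is the upper bound in part~1. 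Finally, the upper bound in part~2 follows from the reverse triangle inequality together with the $\gamma$-contraction of $T^\pi$ in $\ell_\infty$: $\|\tilde Q^\pi_\eta-T^\pi\tilde Q^\pi_\eta\|_\infty\le\|\tilde Q^\pi_\eta-Q^\pi\|_\infty+\|T^\pi\tilde Q^\pi_\eta-T^\pi Q^\pi\|_\infty\le(1+\gamma)\|\tilde Q^\pi_\eta-Q^\pi\|_\infty$.

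The main obstacle I anticipate is bookkeeping rather than conceptual: the dual in \cref{lemma:dual-LP3} is indexed only by $(s,a)$, not by triples $(s,a,a')$ as in the optimal-control case, so I must re-derive the Lagrangian-cancellation identity and the boundedness of $\tilde\lambda^\pi_\eta$ cleanly for this single-index dual. Once that is done the rest of the argument is a faithful transcription of the proof of \cref{thm:bounds2}, with the additional simplification that the linearity of $T^\pi$ replaces the monotone-operator step used in the $\max$-based operator $T$.
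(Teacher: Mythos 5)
Your proposal is correct and follows essentially the same route as the paper's proof: dual feasibility of $\tilde\lambda^\pi_\eta$ via the first-order optimality condition, the probability-distribution property from \cref{prop:dual-property2} to lower-bound the residual $\tilde Q^\pi_\eta - T^\pi\tilde Q^\pi_\eta$, strong duality plus the Lagrangian cancellation for the upper bound, and the triangle inequality with the $\gamma$-contraction of $T^\pi$ for part~2. Your Neumann-series step $(I-\gamma P^\pi)^{-1}=\sum_{k\ge 0}(\gamma P^\pi)^k$ is just a cleaner packaging of the paper's argument (which takes the minimum over $(s,a)$ and rearranges, as in \cref{lemma:bounds4}); the only cosmetic slip is that you write the resulting entrywise bound with $\ge$ where the theorem asserts strict inequality, which is preserved since $(I-\gamma P^\pi)^{-1}\ge I$ componentwise.
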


To prove~\cref{thm:bounds4}, we first need to prove the following lemma.
\begin{lemma}\label{lemma:bounds4}
Let us define
\begin{align*}
{\tilde \lambda }_\eta(s,a): =  - \frac{\eta w(s,a)}{{(T^\pi{\tilde Q}^\pi_\eta)(s,a) - {\tilde Q}^\pi_\eta (s,a)}},\quad (s,a) \in {\cal S} \times {\cal A}.
\end{align*}
Then, we have the following results:
\begin{enumerate}
\item $\eta (1 - \gamma ) \mathop{\min }\limits_{(s,a) \in {\cal S} \times {\cal A}} w(s,a) < {\left\| \tilde Q^\pi_\eta - T^\pi\tilde Q^\pi_\eta \right\|_\infty }$

\item $\tilde Q^\pi_\eta (s,a) - Q^\pi(s,a) > \eta \mathop{\min }\limits_{(s,a) \in {\cal S} \times {\cal A}} w(s,a),\quad \forall (s,a)\in {\cal S} \times {\cal A}$

\item $\eta \mathop{\min }\limits_{(s,a) \in {\cal S} \times {\cal A}} w(s,a) < {\left\| {\tilde Q^\pi_\eta - Q^\pi} \right\|_\infty }$
\end{enumerate}

\end{lemma}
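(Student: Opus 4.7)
The plan is to mirror the three-step argument of Lemma~\ref{lemma:bounds1}, exploiting the fact that the policy-evaluation Bellman operator $T^\pi$ is linear (hence easier to handle than the $\max$ in $T$). I would first establish that the multiplier $\tilde\lambda_\eta$ defined in the lemma is dual feasible. This comes from the first-order optimality condition $\nabla_Q f_\eta^\pi(\tilde Q_\eta^\pi)=0$. A direct calculation of the gradient of the log-barrier term, using $\partial[Q(s,a)-(T^\pi Q)(s,a)]/\partial Q(s'',a'') = \delta_{(s,a),(s'',a'')} - \gamma P(s''|s,a)\pi(a''|s'')$ and summing over $(s,a)$, yields
\[
(\nabla_Q f_\eta^\pi(Q))(s,a) = \rho(s,a) - \tilde\lambda_\eta(s,a) + \gamma\pi(a|s)\sum_{(i,j)\in{\cal S}\times{\cal A}} P(s|i,j)\tilde\lambda_\eta(i,j),
\]
which when set to zero is precisely the dual constraint of Lemma~\ref{lemma:dual-LP3}.

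With $\tilde\lambda_\eta$ dual feasible, I would invoke Theorem~\ref{prop:dual-property2} to conclude that $(1-\gamma)\tilde\lambda_\eta$ is a probability distribution, so $(1-\gamma)\tilde\lambda_\eta(s,a) < 1$ for every $(s,a)$ (strict because all components are strictly positive while they sum to $1$). Substituting the definition of $\tilde\lambda_\eta$ and rearranging gives
\[
\eta(1-\gamma) w(s,a) < \tilde Q_\eta^\pi(s,a) - (T^\pi\tilde Q_\eta^\pi)(s,a),\quad \forall (s,a)\in{\cal S}\times{\cal A}.
\]
Taking $\min_{(s,a)} w(s,a)$ on the left and the $\ell_\infty$-norm on the right proves the first statement.

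For the second statement, starting from the pointwise inequality above and using $Q^\pi = T^\pi Q^\pi$, I would write
\begin{align*}
\eta(1-\gamma)\min_{(s,a)} w(s,a) &< \tilde Q_\eta^\pi(s,a) - (T^\pi\tilde Q_\eta^\pi)(s,a) - Q^\pi(s,a) + (T^\pi Q^\pi)(s,a)\\
&= \tilde Q_\eta^\pi(s,a) - Q^\pi(s,a) - \gamma\!\!\sum_{(s',a')\in{\cal S}\times{\cal A}}\!\! P(s'|s,a)\pi(a'|s')\{\tilde Q_\eta^\pi(s',a') - Q^\pi(s',a')\}.
\end{align*}
Because $T^\pi$ is linear with a convex-combination kernel, the sum is lower bounded by $\gamma\min_{(s',a')}\{\tilde Q_\eta^\pi(s',a') - Q^\pi(s',a')\}$. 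Denoting this minimum by $m$ and minimizing the inequality over $(s,a)$ gives $\eta(1-\gamma)\min w < (1-\gamma) m$, hence $m > \eta\min w$, which is exactly the second statement. The third statement then follows immediately since $\|\tilde Q_\eta^\pi - Q^\pi\|_\infty \ge |\tilde Q_\eta^\pi(s,a) - Q^\pi(s,a)| \ge m > \eta\min w$.

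The only step with any subtlety is the dual-feasibility derivation, because one must carefully track how the weight $\pi(a'|s')$ inside $T^\pi$ moves outside the sum after differentiation so that the resulting identity matches the dual constraint stated in Lemma~\ref{lemma:dual-LP3}. Once that is in place, the rest is a clean transcription of the policy-design argument, in fact slightly simpler since no $\argmax$ workaround (with the auxiliary action $b=\argmax_{a'} Q^*(s',a')$ used in Lemma~\ref{lemma:bounds1}) is needed here.
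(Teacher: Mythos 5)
Your proposal is correct and follows essentially the same route as the paper's proof: establish that $\tilde\lambda_\eta$ is dual feasible so that $(1-\gamma)\tilde\lambda_\eta$ is a probability distribution with every component strictly below one, read off the pointwise lower bound $\eta(1-\gamma)w(s,a) < \tilde Q_\eta^\pi(s,a) - (T^\pi\tilde Q_\eta^\pi)(s,a)$, then subtract the Bellman identity $Q^\pi = T^\pi Q^\pi$, bound the convex-combination term by the minimum, and minimize over $(s,a)$ to get the remaining two claims. The only difference is that you spell out the first-order-condition derivation of dual feasibility, which the paper leaves implicit by deferring to the analogue of \cref{prop:dual-property1}; this is a welcome clarification rather than a different argument.
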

\begin{proof}
Following similar lines as in the proof of~\cref{prop:dual-property1}, we can derive
\begin{align*}
0 < \frac{{\eta w(s,a)(1 - \gamma )}}{{\tilde Q}^\pi_\eta(s,a) - (T^\pi{\tilde Q}^\pi_\eta)(s,a)} < 1,\quad \forall (s,a) \in {\cal S} \times {\cal A},
\end{align*}
which implies
\begin{align*}
\eta \mathop{\min }\limits_{(s,a) \in {\cal S} \times {\cal A}} w(s,a)(1 - \gamma ) < {\tilde Q}^\pi_\eta (s,a) - (T^\pi{\tilde Q}^\pi_\eta)(s,a),\quad \forall (s,a) \in {\cal S} \times {\cal A},
\end{align*}
By taking the absolute value on the right-hand side and taking the maximum over $(s,a)\in {\cal S}\times {\cal A}$, we can obtain the first statement.

Next, for the second statement, we first note the following bounds:
\begin{align*}
&\eta (1 - \gamma ) \mathop{\min }\limits_{(s,a) \in {\cal S} \times {\cal A} } w(s,a)\\
<& \tilde Q^\pi_\eta(s,a) - (T^\pi\tilde Q^\pi_\eta)(s,a)\\
=& \tilde Q_\eta(s,a) - (T^\pi\tilde Q_\eta)(s,a) - Q^\pi(s,a) + (T^\pi Q^\pi)(s,a)\\
=& \tilde Q_\eta ^\pi (s,a) - {Q^\pi }(s,a) - \gamma \sum\limits_{s' \in {\cal S}} {P(s'|s,a)\sum\limits_{a' \in {\cal A}} {\pi (a'|s')\{ } \tilde Q_\eta ^\pi (s',a') - {Q^\pi }(s',a')\} }\\
\le& {\tilde Q_\eta }(s,a) - {Q^*}(s,a) - \gamma \sum\limits_{s' \in {\cal S}} {P(s'|s,a){{\min }_{(s',a') \in {\cal S} \times {\cal A}}}\{ \tilde Q_\eta ^\pi (s',a')}  - {Q^\pi }(s',a')\}\\
\le& \tilde Q^\pi_\eta(s,a) - Q^\pi(s,a) - \gamma {\min _{(s',a') \in {\cal S} \times {\cal A}}}\{ \tilde Q^\pi_\eta(s',a') - Q^\pi(s',a')\},\quad \forall (s,a)\in {\cal S}\times {\cal A},
\end{align*}
where we use the Bellman equation ${Q^\pi } = {T^\pi }{Q^\pi }$ in the first equality.

Taking the minimum over $(s,a)\in {\cal S}\times {\cal A}$ on both sides yields
\begin{align*}
(1-\gamma) \eta \mathop{\min }\limits_{(s,a) \in {\cal S} \times {\cal A}} w(s,a) <(1-\gamma) {\min _{(s',a') \in {\cal S} \times {\cal A}}}\{ \tilde Q_\eta^\pi(s',a') - Q^\pi(s',a')\},
\end{align*}
which is the second statement. Moreover, it also implies
\begin{align*}
\eta \mathop{\min }\limits_{(s,a) \in {\cal S} \times {\cal A}} w(s,a) <& {\min _{(s',a') \in {\cal S} \times {\cal A}}}\{ \tilde Q_\eta^\pi(s',a') - Q^\pi(s',a')\} \\
\le& {\min _{(s',a') \in {\cal S} \times {\cal A}}}|\tilde Q_\eta^\pi(s',a') - Q^\pi(s',a')|\\
\le& {\max _{(s',a') \in {\cal S} \times {\cal A}}}|\tilde Q_\eta^\pi(s',a') - Q^\pi(s',a')|\\
=& {\left\| \tilde Q_\eta^\pi - Q^\pi \right\|_\infty },
\end{align*}
which proves the last statement.
\end{proof}

Based on~\cref{lemma:bounds4}, we will now prove~\cref{thm:bounds4}.
\begin{proof}[Proof of the first statement of~\cref{thm:bounds4}]
Because the optimal solution $\tilde Q_\eta^\pi$ should be in the domain of the log-barrier function, it should be strictly feasible for the primal LP. Moreover, $\tilde \lambda_\eta^\pi$ is feasible for the dual LP. Using these results, we can obtain the following lower bounds:
\begin{align}
&\sum\limits_{(s,a) \in {\cal S} \times {\cal A}} {\rho (s,a)Q^\pi(s,a)}\nonumber\\
=& \sum\limits_{(s,a) \in {\cal S} \times {\cal A}} {\lambda^\pi(s,a)R(s,a)}\nonumber\\
\ge& \sum\limits_{(s,a) \in {\cal S} \times {\cal A}} {\tilde \lambda_\eta^\pi (s,a)R(s,a)}\nonumber\\
=& \sum\limits_{(s,a) \in {\cal S} \times {\cal A}} {\tilde \lambda_\eta^\pi (s,a)R(s,a)}\nonumber\\
& + \sum\limits_{(s,a) \in {\cal S} \times {\cal A}} {{{\tilde Q}_\eta^\pi }(s,a)\left\{ {\rho (s,a) + \gamma \sum\limits_{(s',a') \in {\cal S} \times {\cal A}} {{{\tilde \lambda }_\eta^\pi }(s',a')P(s|s',a')}  - {{\tilde \lambda }_\eta^\pi }(s,a)} \right\}}\nonumber\\
=& L(\tilde Q_\eta^\pi,\tilde \lambda_\eta^\pi )\nonumber\\
=&\sum\limits_{(s,a) \in {\cal S} \times {\cal A}} {\rho (s,a){{\tilde Q}_\eta^\pi }(s,a)}  + \sum\limits_{(s,a) \in {\cal S} \times {\cal A}} {{\tilde \lambda }_\eta^\pi (s,a)\{ (T^\pi \tilde Q_\eta ^\pi )(s,a) - \tilde Q_\eta ^\pi (s,a)\} }\nonumber\\
=&\sum\limits_{(s,a) \in {\cal S} \times {\cal A}} {\rho (s,a){\tilde Q_\eta^\pi }(s,a)}  + \sum\limits_{(s,a) \in {\cal S} \times {\cal A}} { - \eta \frac{w(s,a)\{ (T^\pi \tilde Q_\eta ^\pi )(s,a) - \tilde Q_\eta ^\pi (s,a)\} }{(T^\pi \tilde Q_\eta ^\pi )(s,a,a') - \tilde Q_\eta ^\pi (s,a)}} \nonumber\\
=& \sum\limits_{(s,a) \in {\cal S} \times {\cal A}} {\rho (s,a){\tilde Q}_\eta^\pi (s,a)}  - \eta \sum\limits_{(s,a) \in {\cal S} \times {\cal A}}w(s,a),\label{eq:appendix:11}
\end{align}
where the second line is due to the strong duality of the LP problem, the third and fourth lines are due to the fact that $\tilde \lambda_\eta^\pi$ is dual feasible, and the sixth line is obtained by rearranging terms.
On the other hand, since $\tilde Q_\eta^\pi \in {\cal D}$ is primal feasible, we have
\begin{align}
\sum\limits_{(s,a)\in {\cal S}\times {\cal A}} {\rho(s,a){Q^*}(s,a)}  \le \sum_{(s,a)\in {\cal S}\times {\cal A}} {\rho (s,a)\tilde Q_\eta^\pi(s,a)}.\label{eq:appendix:12}
\end{align}
Combining~\cref{eq:appendix:11} and~\cref{eq:appendix:12} leads to
\begin{align*}
0 \le \sum\limits_{(s,a) \in {\cal S} \times {\cal A}} {\rho (s,a)\{ {{\tilde Q}_\eta^\pi }(s,a) - Q^\pi(s,a)\} }  \le \eta \sum\limits_{(s,a) \in {\cal S} \times {\cal A}} w(s,a).
\end{align*}
Using~\cref{lemma:bounds4} and ${\left\|  \cdot  \right\|_1} \ge {\left\|  \cdot  \right\|_\infty }$, we further derive
\begin{align*}
\sum\limits_{(s,a)\in {\cal S} \times {\cal A}} {\rho (s,a)\{ \tilde Q_\eta^\pi(s,a) - Q^\pi(s,a)\} }  \ge& {\min _{(s,a) \in {\cal S} \times {\cal A}}}\rho (s,a)\sum\limits_{(s,a) \in {\cal S} \times {\cal A}} {(\tilde Q_\eta^\pi(s,a) - Q^\pi(s,a))}\\
=& {\min _{(s,a) \in {\cal S} \times {\cal A}}}\rho (s,a){\left\| {\tilde Q_\eta^\pi - Q^\pi} \right\|_1}\\
\ge& {\min _{(s,a) \in {\cal S} \times {\cal A}}}\rho (s,a){\left\| {\tilde Q_\eta^\pi - Q^\pi} \right\|_\infty }.
\end{align*}
where the first inequality follows from $\tilde Q_\eta^\pi(s,a) - Q^\pi(s,a)>0$.
By dividing both sides by ${{{\min }_{(s,a) \in {\cal S} \times {\cal A}}}\rho (s,a)} >0$ leads to
\begin{align*}
{\left\| {{\tilde Q}_\eta^\pi } - Q^\pi \right\|_\infty } \le \frac{\eta \sum\limits_{(s,a) \in {\cal S} \times {\cal A}} w(s,a) }{{\min_{(s,a) \in {\cal S} \times {\cal A}}}\rho (s,a)}.
\end{align*}
The lower bounds come from the second statement of~\cref{lemma:bounds1}.
\end{proof}

\begin{proof}[Proof of the second statement of~\cref{thm:bounds4}]
Next, using the reverse triangle inequality, one gets
\begin{align*}
{\left\| {\tilde Q_\eta^\pi - Q^\pi} \right\|_\infty } =& {\left\| {\tilde Q^\pi_\eta - T^\pi\tilde Q_\eta^\pi + T^\pi \tilde Q_\eta^\pi - Q^\pi} \right\|_\infty }\\
\ge& {\left\| {\tilde Q_\eta^\pi - T^\pi \tilde Q_\eta^\pi} \right\|_\infty } - {\left\| {T^\pi\tilde Q_\eta^\pi - T^\pi Q^\pi} \right\|_\infty }\\
\ge& {\left\| {\tilde Q_\eta^\pi - T^\pi\tilde Q_\eta^\pi} \right\|_\infty } - \gamma {\left\| {\tilde Q_\eta^\pi - Q^\pi} \right\|_\infty },
\end{align*}
where the last inequality comes from the contraction of the Bellman operator.

Rearranging terms on both sides results in
\begin{align*}
{\left\| {\tilde Q^\pi_\eta } - T^\pi {\tilde Q}_\eta^\pi  \right\|_\infty } \le (1 + \gamma ){\left\| {\tilde Q_\eta^\pi  - Q^\pi} \right\|_\infty } \le \frac{{(1 + \gamma )\eta \sum\limits_{(s,a) \in {\cal S} \times {\cal A}} w(s,a) }}{{\min_{(s,a) \in {\cal S} \times {\cal A}}}\rho (s,a)}.
\end{align*}
The lower bounds come from the first statement of~\cref{lemma:bounds4}, and this completes the proof.
\end{proof}

\section{Appendix:~\cref{prop:upper-surrogate}}\label{sec:app:upper-surrogate}

In~\cref{sec:DeepRL}, we introduce a novel DQN algorithm inspired by the idea of standard DQN~\citep{mnih2015human} and consider the following loss function:
\begin{align*}
L(\theta ): = \frac{1}{{|B|}}\sum\limits_{(s,a,r,s') \in B,a' \in {\cal A}} {\left[ {{Q_\theta }(s,a) + \eta \varphi (r + \gamma {Q_\theta }(s',a') - {Q_\theta }(s,a))} \right]} ,
\end{align*}
where $B$ is a mini-batch uniformly sampled from the experience replay buffer $D$, $|B|$ is the size of the mini-batch, $Q_\theta$ is a deep neural network approximation of Q-function, $\theta\in {\mathbb R}^m$ is the parameter to be determined, and $(s,a,r,s')$ is the transition sample of the state-action-reward-next state. The loss function can be seen as a stochastic approximation of $f_\eta$, where $\rho$ and $w$ can be set to probability distributions corresponding to the replay buffer. However, this stochastic approximation is generally biased because it approximates a function in which the state transition probabilities appear outside the log-barrier function. In fact, by applying Jensen’s inequality, we can show that the above loss function is essentially an unbiased stochastic approximation of an upper bounding surrogate function of $f_\eta$. The related result is first introduced below.
\begin{proposition}\label{prop:upper-surrogate}
We have
\begin{align*}
{f_\eta }(Q) \le& \sum\limits_{(s,a) \in {\cal S} \times {\cal A}} {\rho (s,a)Q(s,a)} \\
&+ \eta \sum\limits_{(s,a,s',a') \in {\cal S} \times {\cal A} \times {\cal S} \times {\cal A}} {P(s'|s,a)w(s,a,a')\varphi \left( {r(s,a,s') + \gamma Q(s',a') - Q(s,a)} \right)}\\
=:& g_\eta(Q)
\end{align*}
\end{proposition}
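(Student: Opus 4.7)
The plan is to derive the inequality by a single application of Jensen's inequality to the log-barrier function $\varphi(x)=-\ln(-x)$, which is strictly convex on $\{x\in\mathbb{R}:x<0\}$ (as already noted in the paper when discussing the properties of $\varphi$). The crucial structural observation is that the argument $(FQ)(s,a,a')-Q(s,a)$ inside the log-barrier is itself an expectation over the next state $s'$ under the transition kernel $P(\cdot|s,a)$. Indeed, since $R(s,a)=\sum_{s'\in\mathcal{S}}P(s'|s,a)r(s,a,s')$, one can rewrite
\begin{align*}
(FQ)(s,a,a')-Q(s,a) = \sum_{s'\in\mathcal{S}} P(s'|s,a)\bigl[r(s,a,s')+\gamma Q(s',a')-Q(s,a)\bigr].
\end{align*}

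The first step is to fix $(s,a,a')\in\mathcal{S}\times\mathcal{A}\times\mathcal{A}$ and introduce the random variable $X_{s,a,a'}(s'):=r(s,a,s')+\gamma Q(s',a')-Q(s,a)$ with $s'$ distributed according to $P(\cdot|s,a)$. Then the identity above reads $\mathbb{E}[X_{s,a,a'}]=(FQ)(s,a,a')-Q(s,a)$. Because $Q\in\mathcal{D}$ (the domain on which $f_\eta$ is defined), this expectation is strictly negative; however, for Jensen's inequality we only need the convex function $\varphi$ to be well defined at each individual $X_{s,a,a'}(s')$. This is where the argument must be read as an inequality between extended-real functions: either the right-hand side $g_\eta(Q)$ is finite (meaning every sample lies in the domain of $\varphi$) and Jensen's inequality yields the bound, or some $X_{s,a,a'}(s')\ge 0$ and then $g_\eta(Q)=+\infty$, so the inequality holds trivially.

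The second step is to apply Jensen's inequality $\varphi(\mathbb{E}[X])\le \mathbb{E}[\varphi(X)]$ pointwise in $(s,a,a')$, obtaining
\begin{align*}
\varphi\bigl((FQ)(s,a,a')-Q(s,a)\bigr)\le \sum_{s'\in\mathcal{S}}P(s'|s,a)\,\varphi\bigl(r(s,a,s')+\gamma Q(s',a')-Q(s,a)\bigr).
\end{align*}
Multiplying both sides by the positive weight $\eta\,w(s,a,a')$, summing over $(s,a,a')\in\mathcal{S}\times\mathcal{A}\times\mathcal{A}$, and adding the common linear term $\sum_{(s,a)}\rho(s,a)Q(s,a)$ to both sides immediately yields $f_\eta(Q)\le g_\eta(Q)$.

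The calculation is essentially mechanical once the key reformulation is in place, so there is no serious obstacle. The only delicate point, which I would flag but not belabor, is the domain issue: the surrogate $g_\eta$ is defined via $\varphi$ evaluated at per-sample Bellman residuals $r(s,a,s')+\gamma Q(s',a')-Q(s,a)$, which are not individually guaranteed to be negative even when $Q\in\mathcal{D}$. Under the convention that $\varphi(x)=+\infty$ for $x\ge 0$, the inequality is valid everywhere on $\mathcal{D}$, and the Jensen gap collapses to zero precisely when the transition is deterministic (so the quoted equality $g_\eta=f_\eta$ used later in the DQN discussion is recovered as a special case).
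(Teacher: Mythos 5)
Your proof is correct and follows exactly the route the paper intends: the paper's own proof is the single line ``It can be proved directly using Jensen's inequality,'' and your argument is precisely the expansion of that remark, rewriting $(FQ)(s,a,a')-Q(s,a)$ as an expectation of per-sample Bellman residuals under $P(\cdot|s,a)$ and applying convexity of $\varphi$ pointwise in $(s,a,a')$. Your explicit handling of the domain issue via the extended-real convention $\varphi(x)=+\infty$ for $x\ge 0$ is a detail the paper silently omits, and it is the right way to make the statement unconditionally valid on $\mathcal{D}$.
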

\begin{proof}
It can be proved directly using Jensen's inequality.
\end{proof}

The above result employed Jensen’s inequality to derive the upper-bounding surrogate function $g_\eta$. Relating this to the loss function $L$, we observe that the surrogate function can be obtained by taking the expectation of $L$. In other words, under the assumption that the mini-batch samples are i.i.d., the loss function $L$ can be shown to be an unbiased stochastic approximation of the upper surrogate function.
Of course, as noted in the main text, if the environment is deterministic, then the original function $f_\eta$ and the upper surrogate function $g_\eta$ are identical. Therefore, $L$ is no longer a stochastic approximation of the upper surrogate function; it is now an exact representation.

\section{Appendix: log-barrier DQN algorithm}
\label{appendix:DQN algo}
Here, we discuss the implementation details of the deep RL variant of the proposed method. Since the loss function $L$
\begin{align*}
L(\theta ): = \frac{1}{{|B|}}\sum\limits_{(s,a,r,s') \in B,a' \in {\cal A}} {\left[ {Q_\theta(s,a) + \eta \varphi (r + \gamma {Q_\theta }(s',a') - {Q_\theta }(s,a))} \right]} ,
\end{align*}
relies on random samples of state–action pairs, applying gradient descent does not update the parameters for all state–action pairs simultaneously. As a result, there is no guarantee that $Q_\theta$ strictly satisfies the inequality constraints of the LP during the update process. In fact, $Q_\theta$ corresponding to state–action pairs that are not sampled may move outside the feasible region defined by the inequality constraints. Such behavior can significantly undermine the stability of learning. To address this issue, in practical implementations, we replace $L$ with the following alternative loss function $\tilde L$ as follows:
\[\tilde  L(\theta ): = \frac{1}{{|B|}}\sum\limits_{(s,a,r,s') \in B} {\left[ {{Q_\theta }(s,a) + \eta \sum\limits_{a' \in A} {h (r + \gamma {Q_\theta }(s',a') - {Q_\theta }(s,a))} } \right]} \]
where
\[h(x): = \left\{ {\begin{array}{*{20}{c}}
{\varphi (x - \varepsilon )\quad {\rm{if}}\quad x < 0}\\
{\nu x\quad {\rm{if}}\quad x \ge 0}
\end{array}} \right.\]
$\varepsilon>0$ is a small number, e.g. $\varepsilon = 10^{-6}$, added for numerical stability and $\nu >0$ is a large number, e.g. $\nu = 10^3$, in order to enforce $Q_\theta$ to be feasible when the TD error $r + \gamma {Q_\theta }(s',a') - {Q_\theta }(s,a)$ is nonnegative. Based on these details, the overall deep RL version is given in~\cref{algo:log-barrier-q}.
\begin{algorithm}[ht!]
\caption{Log-barrier deep Q-learning}
\begin{algorithmic}[1]
\State Initialize replay memory $D$ to capacity $|D|$
\State Initialize the parameter $\theta \in {\mathbb R}^m$ such that ${Q_\theta } \cong \kappa {{\bf{1}}_{|{\cal S}||{\cal A}|}}$ with large enough $\kappa>0 $
\State Set $k=0$

\For{Episode $i \in \{1,2,\ldots\}$}

\State Observe $s_0$

\For{$t \in\{0,1,\ldots, \tau-1\}$}

\State Take an action $a_t$ according to
\begin{align*}
a_t = \left\{ {\begin{array}{*{20}c}
   {\arg \max _{a \in {\cal A}} Q_{\theta} (s_t,a)\quad {\rm{with}}\,\,{\rm{probability}}\,\,1 - \varepsilon }  \\
   {a \sim {\rm{uniform}}({\cal A})\quad {\rm{with}}\,\,{\rm{probability}}\,\,\varepsilon }  \\
\end{array}} \right.
\end{align*}

\State Observe $r_{t+1},s_{t+1}$
\State Store the transition $(s_t ,a_t ,r_{t+1} ,s_{t + 1} )$ in $D$
\State Sample uniformly a random mini-batch $B$ of transitions $(s,a,r,s')$ from $D$

\State Perform few gradient descent steps
\[\theta  \leftarrow \theta  - \alpha {\nabla _\theta }\tilde L(\theta )\]

\State Set $k \leftarrow k+1$

\EndFor

\EndFor
\end{algorithmic}\label{algo:log-barrier-q}
\end{algorithm}
In~\cref{algo:log-barrier-q}, ${{\bf{1}}_{|{\cal S}||{\cal A}|}} \in {\mathbb R}^{|{\cal S}||{\cal A}|}$ is the vector with all entries one. The initialization, $Q_\theta \cong \kappa {{\bf{1}}_{|{\cal S}||{\cal A}|}}$, is motivated by the observation that a candidate strictly feasible solution to the LP formulation in~\cref{eq:primal-LP1} takes the form $\kappa {{\bf{1}}_{|{\cal S}||{\cal A}|}}$ with a large $\kappa>0$. To implement this initialization within a deep neural network, one may set all weights to zero while retaining only a constant bias term.

\section{Appendix: log-barrier DDPG algorithm}
\label{appendix:DDPG algo}
Here, we discuss the implementation details of the deep RL variant of the proposed method for policy evaluation. Since the loss function $L_{\text{critic}}(\theta ;\pi_w)$
\begin{align*}
L_{\text{critic}}(\theta ;\pi_w): = \frac{1}{{|B|}}\sum\limits_{(s,a,r,s') \in B} {\left[ {{Q_\theta }(s,a) + \eta \varphi (r + \gamma {Q_\theta }(s',{\pi _w}(s')) - {Q_\theta }(s,a))} \right]} ,
\end{align*}
for policy evaluation relies on random samples of state–action pairs, applying gradient descent does not update the parameters for all state–action pairs simultaneously. Similar to the previous section, in practical implementations, we replace $L$ with the following alternative loss function $\tilde L$ as follows:
\[\tilde L_{\text{critic}}(\theta;\pi_w ): = \frac{1}{{|B|}}\sum\limits_{(s,a,r,s') \in B} {\left[ {{Q_\theta }(s,a) + \eta h(r + \gamma {Q_\theta }(s',{\pi _w}(s')) - {Q_\theta }(s,a))} \right]} \]
where
\[h(x): = \left\{ {\begin{array}{*{20}{c}}
{\varphi (x - \varepsilon )\quad {\rm{if}}\quad x < 0}\\
{\nu x\quad {\rm{if}}\quad x \ge 0}
\end{array}} \right.\]
$\varepsilon>0$ is a small number, e.g. $\varepsilon = 10^{-6}$, added for numerical stability and $\nu >0$ is a large number, e.g. $\nu = 10^3$, in order to enforce $Q_\theta$ to be feasible when the TD error $r + \gamma {Q_\theta }(s',\pi_w(s')) - {Q_\theta }(s,a)$ is nonnegative. As usual, the actor loss function~\citep{lillicrap2015continuous} is given by
\begin{align*}
{L_{\rm actor}}(w): = \frac{1}{|B|}\sum\limits_{(s,a,r,s') \in B} {(-{Q_\theta }(s,{\pi _w}(s))}.
\end{align*}
The overall detailed algorithm is shown in~\cref{algo:log-barrier-ddpg}.
\begin{algorithm}[ht!]
\caption{Log-barrier DDPG}
\begin{algorithmic}[1]
\State Initialize replay memory $D$ to capacity $|D|$
\State Initialize the critic parameter $\theta \in {\mathbb R}^{m_1}$ such that ${Q_\theta } \cong \kappa {{\bf{1}}_{|S||A|}}$ with large enough $\kappa>0 $ and the target critic parameter $\theta' = \theta \in {\mathbb R}^{m_1}$
\State Initialize the online actor parameter $w \in {\mathbb R}^{m_2}$ and the target actor parameter $w'=w \in {\mathbb R}^{m_2}$

\State Set $k=0$

\For{Episode $i \in \{1,2,\ldots\}$}

\State Observe $s_0$

\For{$t \in\{0,1,\ldots, T-1\}$}

\State Take an action $a_t = \pi_w(s_t) + e_t$ where $e_t\sim {\cal N}(0,\sigma)$ is the exploration noise and ${\cal N}(0,\sigma)$ is the normal distribution.

\State Observe $r_{t+1},s_{t+1}$
\State Store the transition $(s_t ,a_t ,r_{t+1} ,s_{t + 1} )$ in $D$
\State Sample uniformly a random mini-batch $B$ of transitions $(s,a,r,s')$ from $D$

\State Critic update: perform few gradient descent steps
\[\theta  \leftarrow \theta  - \alpha_{\text{critic}} {\nabla _\theta } \tilde L_{\text{critic}}(\theta;\pi_w )\]

\State Actor update: perform a gradient descent step
\[w \leftarrow w + \alpha_{\text{actor}} {\nabla _w}L_{\text{actor}}(w)\]

\State Soft target network update
\[w' \leftarrow \tau w + (1 - \tau )w'\]
\[\theta' \leftarrow \tau \theta+ (1 - \tau )\theta'\]

\State Set $k \leftarrow k+1$

\EndFor

\EndFor
\end{algorithmic}\label{algo:log-barrier-ddpg}
\end{algorithm}

In~\cref{algo:log-barrier-ddpg}, we adopt settings that are consistent with the original DDPG paper~\citep{lillicrap2015continuous}. In particular, we employ target networks for both actor and critic and apply the soft target updates. Moreover,~\cref{algo:log-barrier-ddpg} incorporates exploration through Gaussian noise. Our main deviations lie in the structure of the critic loss and the initialization of the critic network. As presented in~\cref{appendix:hyperparameters}, we initialize the critic network bias with $\kappa = 100$, set the enforcement parameter to $\nu = 100$, and use a barrier parameter of $\eta = 0.015$ in the critic loss.

\section{Appendix: hyperparameters}\label{appendix:hyperparameters}
We summarize in the following tables the hyperparameter configurations employed in the experiments for the proposed log-barrier deep Q-learning and log-barrier DDPG. Specifically,~\cref{tab:hyperparameters_simple} lists the hyperparameters for the log-barrier deep Q-learning experiments, and~\cref{tab:hyperparameters_ddpg} lists those for the log-barrier DDPG experiments.
In the comparative experiments, the parameters of standard DQN and standard DDPG were tuned to achieve optimal performance. For fairness, common hyperparameters were controlled to be identical under the same experimental conditions.

\begin{table}[h!]
\centering
\caption{Hyperparameter settings of log-barrier deep Q-learning for the discrete control}
\label{tab:hyperparameters_simple}
\begin{tabular}{@{}ll@{}}
\toprule
\textbf{Hyperparameter} & \textbf{Value} \\ \midrule
Initial network bias ($\kappa$)  & 100 \\
Enforce parameter ($\nu$)        & 1000\\
Discount factor ($\gamma$)       & 0.99 \\
Batch size                       & 64 \\
Epsilon ($\epsilon$)             & 0.1 \\
Episodes                         & 200 (CartPole-v1, Acrobot-v1) \\
                                 & 500 (LunarLander-v3, \textcolor{red}{MountainCar-v0, Pendulum-v1}) \\
Barrier parameter ($\eta$)       & 7 (CartPole-v1), 0.25 (Acrobot-v1) \\
                                 & 2 (LunarLander-v3), \textcolor{red}{0.4, (MountainCar-v0), 1.5 (Pendulum-v1)} \\
Learning rate                    & 0.0002 (LunarLander-v3), 0.0005 (otherwise) \\
Replay buffer size               & 100,000 (CartPole-v1, Acrobot-v1) \\
                                 & 1,000,000 (LunarLander-v3, \textcolor{red}{MountainCar-v0, Pendulum-v1}) \\ \bottomrule
\end{tabular}
\end{table}

\begin{table}[h!]
\centering
\caption{Hyperparameter settings for log-barrier DDPG on MuJoCo tasks}
\label{tab:hyperparameters_ddpg}
\begin{tabular}{@{}ll@{}}
\toprule
\textbf{Hyperparameter} & \textbf{Value} \\ \midrule
Initial network bias ($\kappa$)   & 100 \\
Enforce parameter ($\nu$)        & 100\\

Discount factor ($\gamma$)      & 0.99 \\
Batch size                      & 256 \\
Epsilon ($\epsilon$)             & 0.01 \\
Barrier parameter ($\eta$)       & 0.015 \\

Exploration noise ($\sigma$)    & 0.1 (Gaussian) \\
Total timesteps                        & 2M steps equivalent (Walker2d, Ant, Hopper, HalfCheetah, Humanoid) \\
Target update rate ($\tau$)     & 0.005 \\
Actor learning rate             & 0.001 \\
Critic learning rate            & 0.001 \\
Replay buffer size              & 1,000,000 \\
Hidden units per layer          & 256 \\
Number of hidden layers          & 2 \\
\bottomrule
\end{tabular}
\end{table}

\newpage

\section{General Remarks}\label{app:remarks}
Several discussions that are not included in the main text due to page limits are summarized below.
\begin{enumerate}
\item {\bf Analysis under deep neural network function approximation}: When a deep neural network is used, the model becomes a nonlinear function of the parameters $\theta$, and the precise theoretical results derived for the tabular setting no longer apply directly. Nevertheless, the tabular analysis provides useful intuition: it clarifies the role of the log-barrier formulation, the effect of the barrier coefficient eta, and the qualitative behavior one should expect when using function approximation, and thus offers approximate predictions for the deep-RL case.

\item {\bf Sensitivity of learning performance on $\eta$}: We found the proposed methods to be generally highly sensitive to the barrier coefficient $\eta$. Accordingly, we performed careful hyperparameter tuning for the experiments, and we acknowledge that this sensitivity is an important issue that requires further investigation. We can try an annealing scheme that gradually reduces $\eta$ during training, and this often yields improved results. However, annealing itself introduces additional tuning choices: if $\eta$ is decreased via a schedule or learned online, one must still decide the initial value, the decay rate (how quickly to reduce $\eta$, and the final value, and experimental outcomes can vary substantially depending on these choices). Nonetheless, these auxiliary techniques frequently enhance empirical performance at the cost of increase tunning efforts. To free the method from excessive tuning, we plan to develop improved algorithms that are less dependent on hyperparameter selection. For example, mirror-descent–style updates that operate directly in the space satisfying the inequality constraints could maintain feasibility without heavy reliance on $\eta$. We view these directions as promising topics for future work.

\item {\bf Effect of small $\eta$}: Theoretically, as $\eta$ goes to zero, the log-barrier solution $\tilde Q_{\eta}$ approaches the true $Q^*$. This relationship is established in~\cref{thm:bounds2}. In practice, however, taking $\eta$ too small can cause numerical issues. For example, with very small $\eta$, the gradient-based estimate of $\tilde Q_{\eta}$ may easily step outside the domain that satisfies the inequality constraints with relevantly small step-sizes.

\item {\bf Potential applicability to constrainted RL}: The proposed approach can be integrated very naturally into constrained or safe RL by, for example, incorporating a log-barrier term. However, this introduces additional tuning issue, namely the need to carefully set additional additional barrier coefficients. Consequently, we argue that a more principled framework, with stronger theoretical guarantees and improved algorithmic mechanisms for handling such hyperparameters, is necessary.

\end{enumerate}

\section{Appendix: additional experiments of log-barrier Q-learning approach with linear function approximation}\label{sec:app_FAsimul}

In this appendix section, we present additional experiments comparing standard Q-learning and a modified version of our log-barrier Q-learning in linear function approximation settings.

\subsection{Environment}
\subsubsection{Markov Decision process setting}To make the comparison concrete, we consider a toy Markov decision process (MDP) with $|{\cal S}|=4$ states and $|{\cal A}|=2$ actions. The expected reward $R(s,a)$ used in our simulations is defined as below:
\begin{align*}
R(s,a) =
\begin{pmatrix}
0.5 & 1.0 \\
0.2 & 0.4 \\
0.8 & 0.3 \\
1.5 & 0.1
\end{pmatrix},
\end{align*}
where each row corresponds to a state $s \in \{0,1,2,3\}$ and each column corresponds to an action $a \in \{0,1\}$.
The transition probability $P(s' \mid s,a)$ is described by two state–transition matrices, one for each action.
For action $a = 0$, we have
\[
P(s' \mid s, a = 0) =
\begin{pmatrix}
0.7 & 0.3 & 0.0 & 0.0 \\
0.0 & 0.6 & 0.0 & 0.4 \\
0.5 & 0.0 & 0.5 & 0.0 \\
0.6 & 0.4 & 0.0 & 0.0
\end{pmatrix},
\]
and for action $a = 1$, the transition matrix is
\[
P(s' \mid s, a = 1) =
\begin{pmatrix}
0.0 & 0.4 & 0.6 & 0.0 \\
0.0 & 0.0 & 0.8 & 0.2 \\
0.0 & 0.0 & 0.0 & 1.0 \\
0.0 & 0.0 & 0.0 & 1.0
\end{pmatrix}.
\]
\cref{fig:toymdpfig} provides a clear visual illustration of the environment described above.
\begin{figure}[t]
    \centering
    \includegraphics[width=0.9\linewidth]{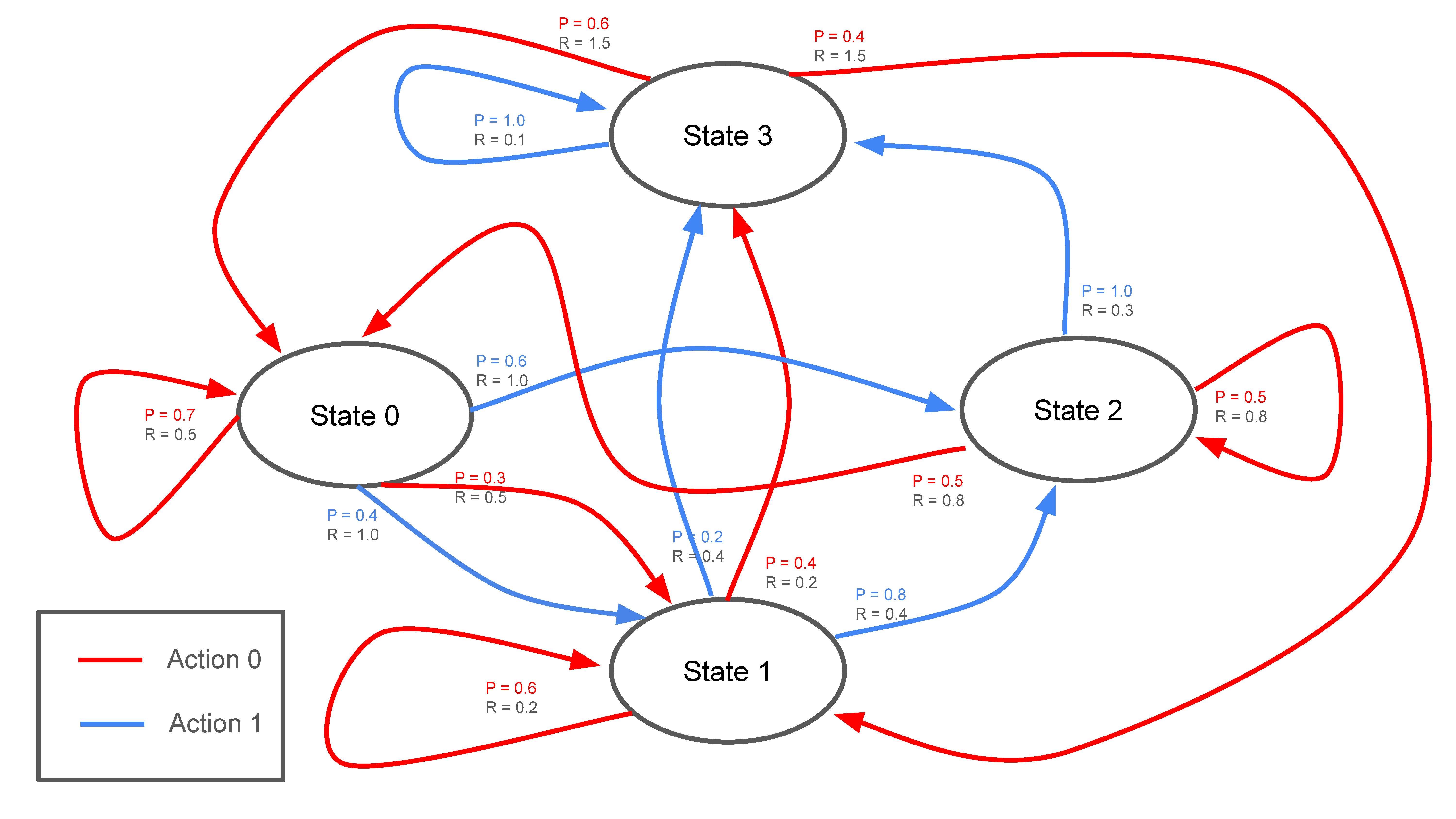}
    \caption{Visualization of the transition structure and reward layout of the toy MDP.
    Transitions under action 0 are shown as red arrows, while those under action 1 are shown as blue arrows.
    The corresponding reward values are indicated in black text.}
    \label{fig:toymdpfig}
\end{figure}

\subsubsection{Linear function approximation}
We next construct a linear function approximator with a fixed basis matrix of rank~6. This basis is deliberately designed so that several state–action pairs share identical or nearly collinear feature vectors, which allows us to highlight the well-known instability and under-performance of linear function approximation under off-policy Q-learning updates.
The basis matrix $\Phi \in \mathbb{R}^{8 \times 6}$, where each row corresponds to a state–action pair $(s,a)$, is defined as
\[
\Phi =
\begin{pmatrix}
1.0 & 0.0 & 0.0 & 0.1 & 0.0 & 0.0 \\   
0.1 & 0.0 & 0.0 & 1.0 & 0.0 & 0.0 \\   
1.0 & 0.0 & 0.0 & 0.1 & 0.0 & 0.0 \\   
0.1 & 0.0 & 0.0 & 1.0 & 0.0 & 0.0 \\   
0.0 & 1.0 & 0.0 & 0.0 & 0.1 & 0.0 \\   
0.0 & 0.1 & 0.0 & 0.0 & 1.0 & 0.0 \\   
0.0 & 0.0 & 1.0 & 0.0 & 0.0 & 0.1 \\   
0.0 & 0.0 & 0.1 & 0.0 & 0.0 & 1.0      
\end{pmatrix},
\]
where rows of $\Phi$ are ordered as $(s,a) = (0,0), (0,1), (1,0), (1,1), (2,0), (2,1), (3,0), (3,1)$.
Given this basis, the linear Q-function is represented as
\[
Q_w(s,a) = \phi(s,a)^\top w, \qquad w \in \mathbb{R}^{6},
\]
where $\phi(s,a)$ denotes the corresponding row of $\Phi$.
Because states $s=0$ and $s=1$ share identical feature patterns, the resulting approximation is intentionally limited in expressive power, making it a suitable scenario for examining divergence phenomena in both standard Q-learning and our log-barrier-adjusted updates.

To satisfy the theoretical precondition of the log-barrier approach, namely that the initial Q-values are sufficiently large, we initialize the parameter vector $w$ so that all state–action values start from a common high level. Concretely, let $q_{\mathrm{init}} \in \mathbb{R}$ be a prescribed initialization constant and define
\[
\mathbf{q}_0 \in \mathbb{R}^{|S||A|},
\qquad
\mathbf{q}_0 := q_{\mathrm{init}} \, \mathbf{1},
\]
where $\mathbf{1} \in {\mathbb R}^{|{\cal S}||{\cal A}|}$ denotes the all-ones vector of dimension $|{\cal S}||{\cal A}|$. We then choose the initial parameter $w_0$ by solving the least-squares problem
\[
w_0
= \arg\min_{w \in \mathbb{R}^{6}} \bigl\| \Phi w - \mathbf{q}_0 \bigr\|_2^2.
\]
In practice, this corresponds to computing the least-squares solution
\[
\Phi w_0 \approx \mathbf{q}_0,
\]
so that $Q_{w_0}(s,a) \approx q_{\mathrm{init}}$ holds for all $(s,a) \in {\cal S} \times {\cal A}$.

\subsubsection{Standard Q-learning} The standard Q-learning algorithm serves as a baseline method.
Following the classical formulation of semi-gradient Q-learning with linear function approximation~\citep{sutton1998reinforcement},
the update rules are given as follows:
\begin{align*}
&\text{Loss:} \quad
L(w) = \frac{1}{2}{\left( {{\rm{no\_grad}}[r(s,a,s') + \gamma {{\max }_{a'}}{Q_w}(s',a')] - {Q_w}(s,a)} \right)^2}\\[4pt]
&\text{Gradient:} \quad
\nabla_w L(w) = -\delta\, \phi(s,a),
\qquad
\delta = r(s,a,s') + \gamma \max_{a'\in {\cal A}} Q_{w}(s',a') - Q_{w}(s,a),  \\[4pt]
&\text{Update:} \quad
w \leftarrow w + \alpha\, \delta\, \phi(s,a).
\tag{R.1}\label{appeq:q_learning_update}
\end{align*}
where $(s,a,s',a')$ means the current state, current action, next state, next action, and $\rm{no\_grad}$ means that the corresponding term is set as a constant.

\subsubsection{Log-barrier Q-learning} Next, we consider our proposed method, the log-barrier Q-learning approach.
Although the implementation in Appendix~\ref{appendix:DQN algo} employs a deep neural network $Q_\theta$ to parameterize the Q-function, here we replace it with a linear function approximation of the form $Q_w = \Phi w$. Under this parametrization, we can derive a gradient-based update rule by directly computing the loss and its corresponding parameter update as:

Loss:
\begin{align*}
&L^{LB}(w) =
\begin{cases}
Q_w(s,a) - \eta \,\log\bigl(Q_w(s,a) - (TQ_w)(s,a,s') + \varepsilon\bigr),
& \text{if } Q_w(s,a) - (TQ_w)(s,a,s') > 0,\\[6pt]
Q_w(s,a) + \eta\, \nu\, \bigl((TQ_w)(s,a,s') - Q_w(s,a)\bigr),
& \text{if } Q_w(s,a) - (TQ_w)(s,a,s') \le 0,
\end{cases}
\end{align*}
where \((TQ_w)(s,a,s') = r(s,a,s') + \gamma \max_{u\in {\cal A}}Q_w(s',u) \).

Gradient:
\begin{align*}
&\nabla_w L^{LB}(w) =
\begin{cases}
\phi(s,a) - \displaystyle \eta\,\frac{\phi(s,a) - \gamma\, \phi(s',a^*)}{Q_w(s,a,s') - (TQ_w)(s,a,s') + \varepsilon},
& \text{if } Q_w(s,a) - (TQ_w)(s,a,s') > 0,\\[12pt]
\phi(s,a) -
\eta\,\nu\,
   \bigl(\phi(s,a) - \gamma\,\phi(s',a^*)\bigr),
& \text{if } Q_w(s,a) - (TQ_w)(s,a,s') \le 0,
\end{cases}
\end{align*}
where $a^* := \argmax_{u\in {\cal A}}Q_w(s',u)$.

Update:
\begin{align*}
&w \leftarrow w - \alpha\, \nabla_w L^{LB}(w),
\tag{R.2}\label{appeq:lb_q_learning_update}
\end{align*}

\subsubsection{Algorithm}
Both the standard Q-learning and the log-barrier Q-learning method share the same training loop  and differ only in how the parameter update is computed. The common algorithmic skeleton is summarized in~\cref{alg:base-loop}.

\begin{algorithm}[t]
\caption{Common training loop under linear function approximation with $\pi_b$}
\label{alg:base-loop}
\begin{algorithmic}[1]

\State \textbf{Input:} MDP environment, initial parameter vector $w_0$, basis matrix $\Phi$,
behavioral policy $\pi_b$, horizon $T$
\State Initialize state $s_0$ and set $w \leftarrow w_0$

\For{$t = 0,1,\dots,T-1$}

    \State Sample action $a_t \sim \pi_b(\cdot \mid s_t, w)$ \Comment{behavioral policy}
    \State Observe transition $(s_{t+1}, r_{t+1})$

    \State Compute gradient:
    \[
        \nabla L_t(w) =
        \begin{cases}
            \text{Standard Q-learning: } \nabla_w L(w), & \text{(Eq.~\ref{appeq:q_learning_update})} \\[4pt]
            \text{Log-barrier Q-learning: } \nabla_w L^{\text{LB}}(w), & \text{(Eq.~\ref{appeq:lb_q_learning_update})}
        \end{cases}
    \]

    \State Parameter update:
    \[
        w \leftarrow w - \alpha\, \nabla L_t(w)
    \]

\EndFor

\State \textbf{return} $w$, $Q = \Phi \cdot w$ .

\end{algorithmic}
\end{algorithm}


In this experiments, the behavioral policy $\pi_b$ is required only to have
\emph{full support} over the action set, i.e.,
\[
\pi_b(a \mid s) > 0 \qquad \text{for all } (s,a),
\]
so that every action remains reachable under the data-collection process. Any stochastic policy satisfying this condition can be used in the
common training loop (\cref{alg:base-loop}), and no particular structure
is assumed beyond full support.

With this flexibility in mind, and as we will later demonstrate,
we deliberately consider two contrasting choices of $\pi_b$:
the standard $\epsilon$-greedy policy and the
$\epsilon$-\emph{reverse}-greedy variant.
The former tends to favor the maximizer of the current value estimate,
whereas the latter intentionally favors the minimizer.
Both policies maintain full support (as long as $\epsilon>0$), but they induce
substantially different degrees of off-policy mismatch.
This design allows us to amplify and expose the classical instability
mechanisms associated with the deadly triad in linear Q-learning, while at the
same time enabling us to observe how the proposed log-barrier approach behaves
under such deliberately adverse learning conditions.

\subsection{Experiment Results}\label{app:FAsimul}
We consider two behavioral policies, each defined by a simple selection rule.
For any state \(s\), the standard \(\epsilon\)-greedy policy selects
\[
a =
\begin{cases}
\mathrm{Uniform}({\cal A}), & \text{with probability } \epsilon,\\[3pt]
\displaystyle \argmax_{a' \in {\cal A}} Q_w(s,a'), & \text{with probability } 1-\epsilon,
\end{cases}
\]
whereas the \(\epsilon\)-reverse-greedy policy selects
\[
a =
\begin{cases}
\mathrm{Uniform}({\cal A}), & \text{with probability } \epsilon,\\[3pt]
\displaystyle \argmin_{a' \in {\cal A}} Q_w(s,a'), & \text{with probability } 1-\epsilon.
\end{cases}
\]
Both policies maintain full support as long as \(\epsilon > 0\), but they differ significantly
in the extent to which they induce off-policy sampling.

As described above, we conducted two sets of experiments that differ only in the choice of the behavioral policy.
The first set employs the standard \(\epsilon\)-greedy strategy, while the second adopts the \(\epsilon\)-reverse-greedy variant.
These two policies induce notably different degrees of off-policy mismatch, enabling us to observe how each update rule behaves under progressively more adverse sampling conditions.
The corresponding results are presented in~\cref{fig:eps_greedy_result}
and~\cref{fig:eps_reverse_greedy_result}, respectively.

\begin{figure}[H]
    \centering
    \includegraphics[width=0.95\linewidth]{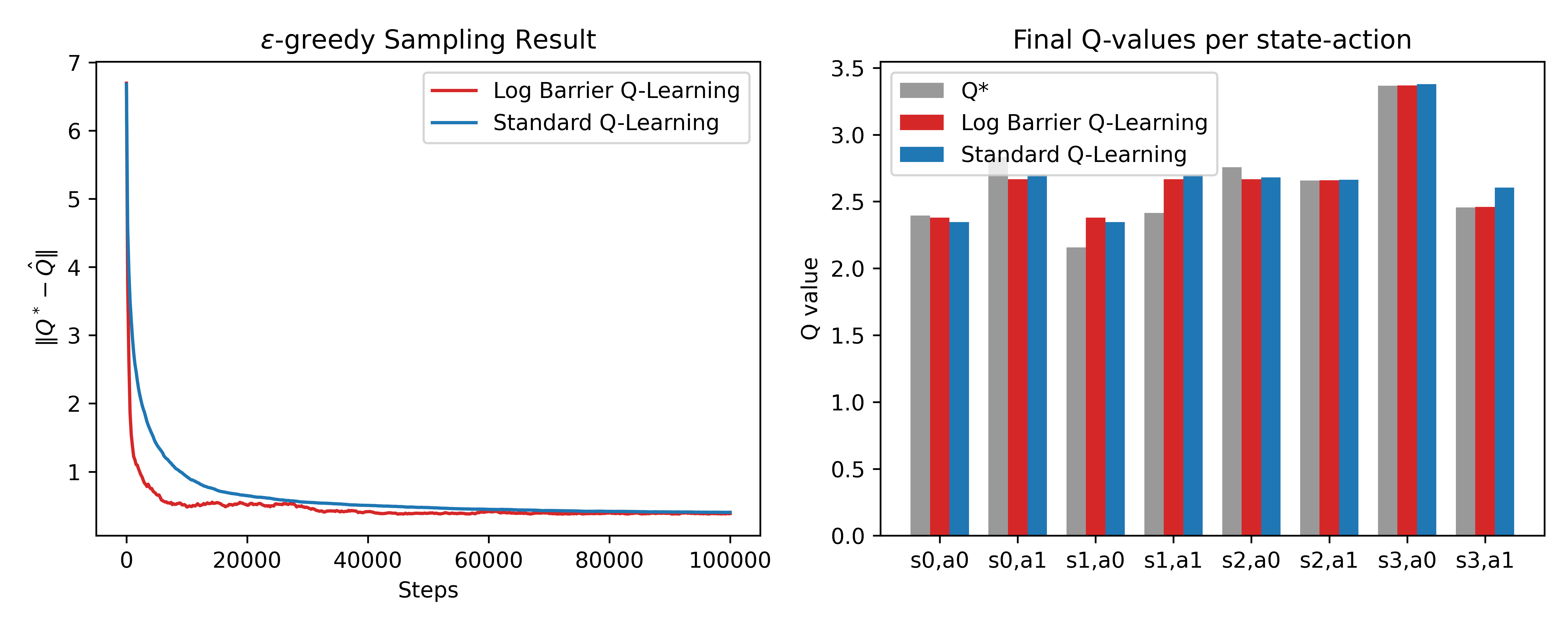}
    \caption{Experiment result under the $\epsilon$-greedy behavioral policy.}
    \label{fig:eps_greedy_result}
\end{figure}

\begin{figure}[H]
    \centering
    \includegraphics[width=0.95\linewidth]{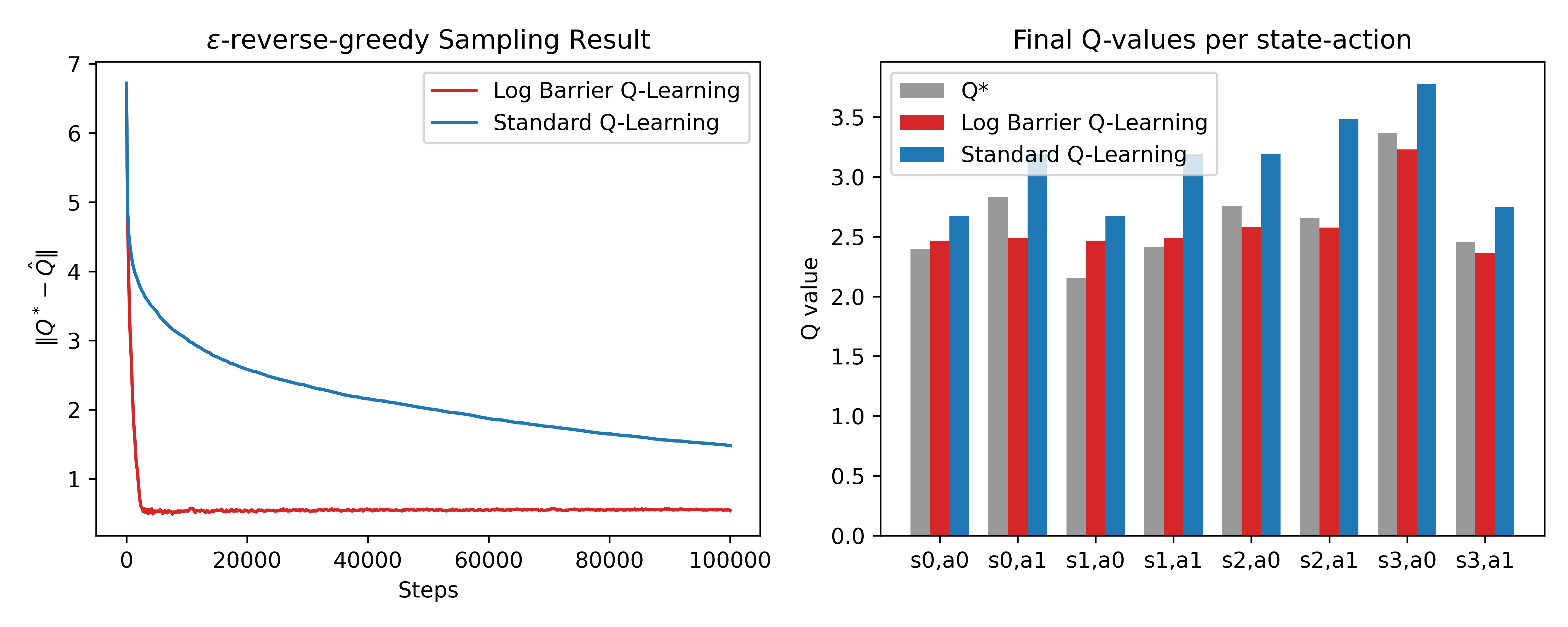}
    \caption{Experiment result under the $\epsilon$-reverse-greedy behavioral policy.}
    \label{fig:eps_reverse_greedy_result}
\end{figure}

From these results, by comparing~\cref{fig:eps_greedy_result} and~\cref{fig:eps_reverse_greedy_result},
we clearly see that standard Q-learning is highly sensitive to the sampling distribution induced by the behavioral policy.
When the distribution of the data deviates from that implied by the optimal policy, the resulting off-policy mismatch
amplifies error propagation—a phenomenon well documented in~\citep{munos2008finite}. In our experiments, deliberately modifying the behavioral policy (e.g., through the $\epsilon$-reverse-greedy strategy)
creates a more adverse off-policy regime. Under this setting, standard Q-learning (blue) exhibits slower convergence and larger approximation errors,
whereas the milder $\epsilon$-greedy regime leads to comparatively more stable learning outcomes, as expected.

In contrast, our log-barrier Q-learning method (red) exhibits a notably different behavior.
As long as the behavioral policy maintains full support, changes in its sampling distribution
do not substantially degrade performance. The algorithm consistently converges rapidly and produces more accurate value estimates. This can be clearly seen in~\cref{fig:eps_reverse_greedy_result}:
even under the adverse $\epsilon$-reverse-greedy policy, the final estimated $Q$-values remain
close to $Q^*$, and the corresponding error curve $\lVert Q^* - \hat{Q} \rVert$ exhibits
substantially reduced sensitivity to the off-policy sampling conditions.

However, we note that these results reflect idealized settings obtained through careful
hyperparameter tuning, and in practice each experiment required different choices of the barrier coefficient ($\eta$) and the infeasible enforce parameter ($\nu)$ to achieve stable performance. The precise coefficients used in each setting are summarized in~\cref{apptab:hyperparams}. In the next subsection, we discuss what phenomena arise during this tuning process and how
these parameters can be adjusted in a controlled and systematic manner to maintain stability
across different sampling conditions.

\begin{table}[t]
    \centering
    \caption{Hyperparameter settings for the toy MDP experiments under two behavioral policies.}
    \label{apptab:hyperparams}
    \begin{tabular}{lcc}
        \toprule
        \textbf{Parameter}
            & \textbf{$\epsilon$-greedy}
            & \textbf{$\epsilon$-reverse-greedy} \\
        \midrule
        \multicolumn{3}{c}{\textbf{General training settings}} \\
        \midrule
        $\gamma$ (discount factor)
            & $0.7$ & $0.7$ \\
        $\epsilon$ (exploration rate)
            & $0.3$ & $0.3$ \\
        $T$ (total training steps)
            & $100{,}000$ & $100{,}000$ \\
        $\alpha$ (learning rate)
            & $0.2$ & $0.2$ \\
        $Q_{\text{init}}$ (initial Q-value used for LS warm-start)
            & $5.0$ & $5.0$ \\
        \midrule
        \multicolumn{3}{c}{\textbf{Log-barrier coefficients}} \\
        \midrule
        $\eta$ (barrier parameter)
            & $5\times 10^{-5}$ & $5\times 10^{-5}$ \\
        $\varepsilon$ (feasibility margin)
            & $10^{-3}$ & $10^{-3}$ \\
        $\nu$ (infeasible-region enforce parameter)
            & $1.2\times 10^{5}$ & $3.2\times 10^{5}$ \\
        \bottomrule
    \end{tabular}
\end{table}

\subsection{Practical Considerations for Hyperparameter Selection}
\label{app:hyperparamselectguide}
\begin{figure}[htp!]
    \centering
    \includegraphics[width=0.95\linewidth]{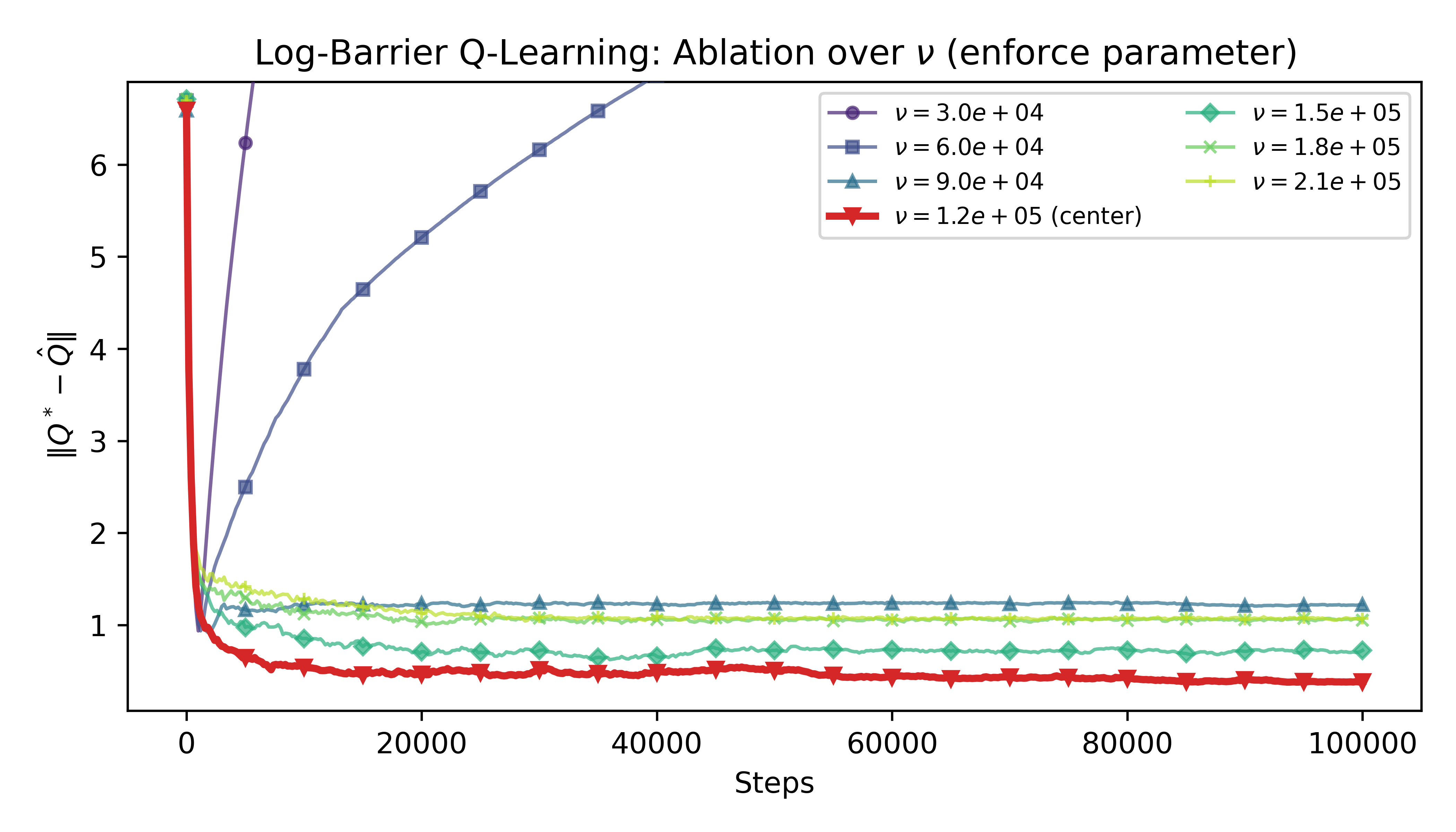}
    \caption{Ablation results of the log-barrier Q-learning method under the
$\epsilon$-greedy behavioral policy for various choices of the
enforce parameter $\nu$. The $\eta$ value is fixed as $5\times 10^{-5}$.}
    \label{appfig:nu_ablation}
\end{figure}

While tuning the hyperparameters, we observed several important patterns in how our approach behaves under different settings. The discussion in this subsection therefore provides practical guidance for selecting appropriate hyperparameters. In this subsection, we present an ablation study that varies the enforcement parameter $\nu$ while keeping the barrier coefficient $\eta$ fixed, evaluated under the $\epsilon$-greedy behavioral policy. The key observation is that, for a fixed value of $\eta$, the optimal choice of $\nu$ is tightly coupled with it. As shown clearly in~\cref{appfig:nu_ablation}, the best-performing hyperparameter pair appears at $\nu = 1.2 \times 10^5$ and $\eta = 5 \times 10^{-5}$ (highlighted by the red curve).

When the enforcement parameter $\nu$ is decreased below this optimal value, the error curve initially descends to a shallow minimum but then increases sharply, indicating a rapid degradation in stability. This phenomenon is not because the alrogithm suffers from overestimation. Rather, the pressure in the minimization step drives the estimated $\hat{Q}$-values down aggressively, leading to a sharp increase in the approximation error. This occurs because reducing $\nu$ weakens the penalty in the infeasible region, allowing the algorithm to accept the violation and push the estimated $\hat{Q}$-values further downward, ultimately destabilizing the error dynamics. This behavior is clearly illustrated in~\cref{appfig:nu_low}.

\begin{figure}[H]
    \centering
    \includegraphics[width=0.95\linewidth]{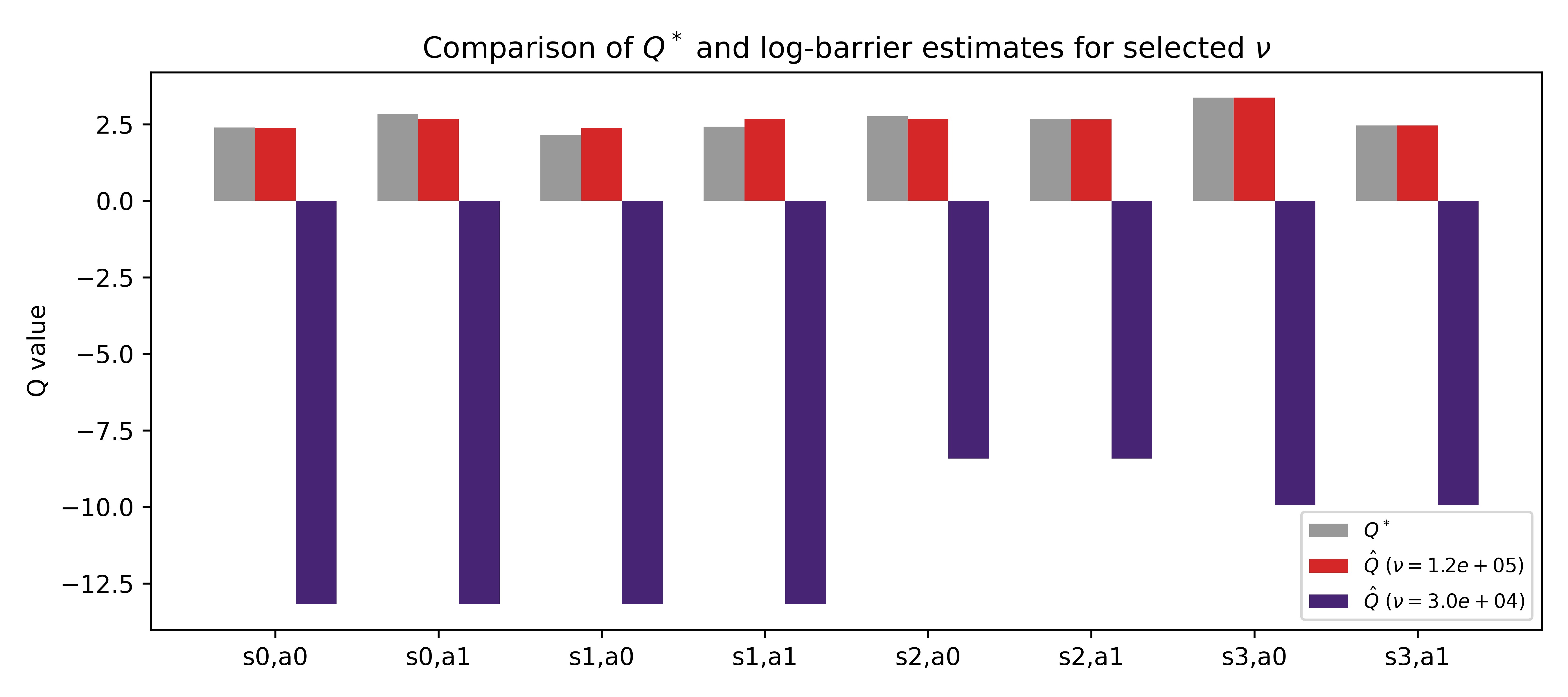}
    \caption{Comparison of $\hat{Q}$-values between the optimal and a smaller enforcement parameter $\nu$ setting.}
    \label{appfig:nu_low}
\end{figure}

In contrast, when $\nu$ is set larger than its optimal value, the framework still converges stably but exhibits a noticeable bias, which appears as an overestimation of the $Q$-values.
This overestimation does not stem from theoretical defect; rather, it arises because an excessively large enforcement parameter imposes strong penalties whenever the update violates the infeasible region.
The influence of this penalty persists even within the feasible region, effectively discouraging the algorithm from approaching the boundary closely and preventing it from accurately approximating the true $Q$-values. This behavior is clearly illustrated in~\cref{appfig:nu_high}.

\begin{figure}[H]
    \centering
    \includegraphics[width=0.95\linewidth]{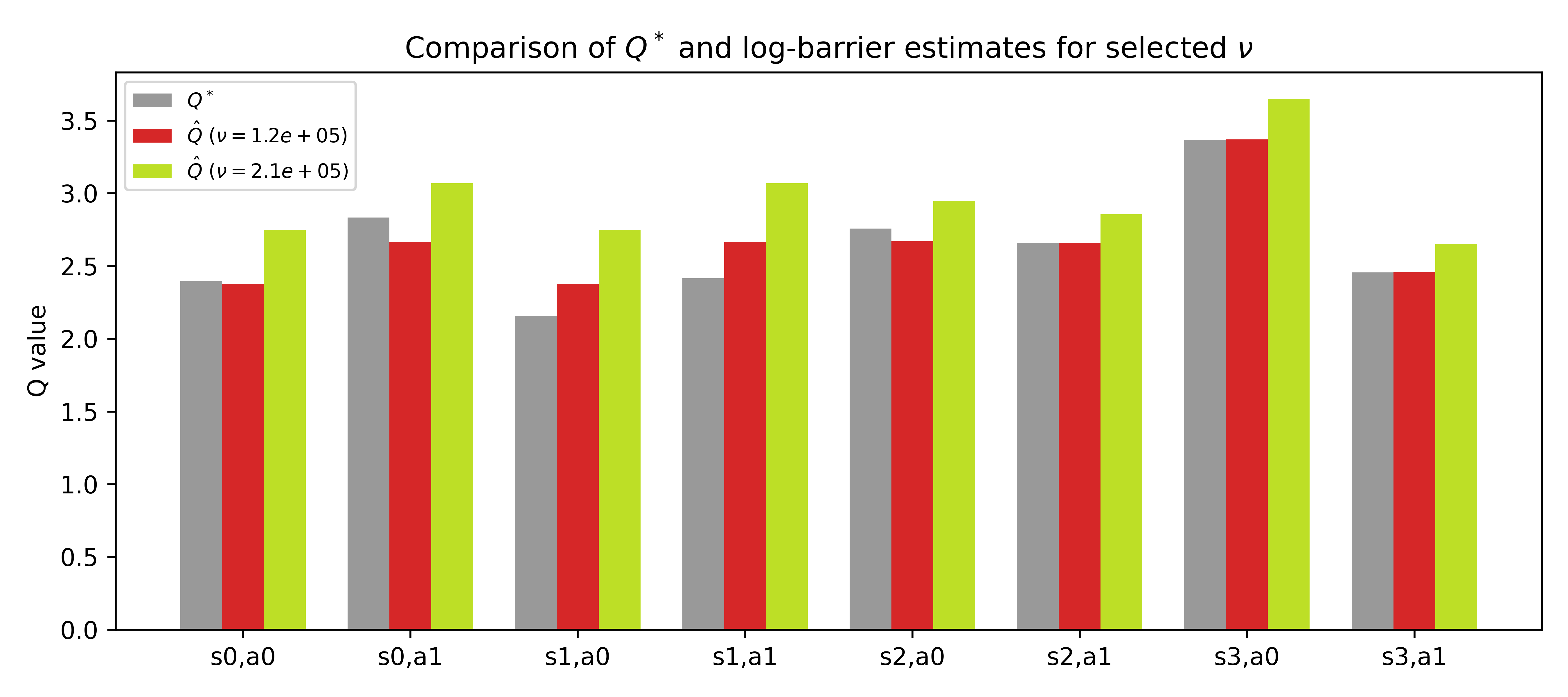}
    \caption{Comparison of $\hat{Q}$-values between the optimal enforcement parameter and a larger $\nu$ setting.}
    \label{appfig:nu_high}
\end{figure}

In practice, our empirical observations indicate that when the underlying MDP configuration is altered
(e.g., the discount factor $\gamma$, the sampling policy distribution, or the reward function $r$), the optimal values of the hyperparameters $\nu$ and $\eta$ also shift accordingly. Nevertheless, with appropriately tuned hyperparameter pairs, the algorithm consistently achieves estimates that remain close to the true optimal value function $Q^*$. According to our theoretical result in Theorem~\ref{thm:bounds2}, annealing the barrier coefficient $\eta$ during training may appear to be a reasonable strategy. However, such an approach inherently requires an adaptive adjustment of the enforcement parameter $\nu$,
which complicates the optimization process. For this reason, based on our empirical findings, it is generally more stable to fix one parameter (either $\eta$ or $\nu$) and tune the other accordingly.

\subsection{Tabular Setting under Stochastic sampling}
Using the experimental setup described in Appendix~\ref{sec:app_FAsimul},
our analysis can be directly extended to the tabular setting as a special case under the same stochastic sampling conditions.
In particular, when the identity feature map is adopted, the function approximation architecture becomes an exact tabular representation of the state–action value function.
Under this specialization, the tabular log-barrier Q-learning algorithm corresponds to the instance of~\cref{alg:base-loop} obtained by setting $\Phi = I$ and selecting log-barrier Q-learning as the learning framework, and its explicit form is presented in~\cref{alg:lb_ql_table}.

\begin{algorithm}[h!]
\caption{Log-barrier Q-learning (tabular case)}
\label{alg:lb_ql_table}
\begin{algorithmic}[1]
\State Initialize $Q(s,a)$ as a table.
\State Set step size $\alpha > 0$, discount $\gamma \in (0,1)$, log-barrier parameter $\eta > 0$, enforce parameter $\nu > 0$, and margin $\varepsilon > 0$
\For{each episode}
    \State $s \leftarrow s_{0}$
    \While{$s$ is not terminal}
        \State Choose $a$ at state $s$ by $\pi_b$  $w.r.t.\ Q(s,\cdot)$
        \State Take action $a$, observe reward $r$ and next state $s'$
        \If{$s'$ is terminal}
            \State $(TQ)(s,a,s') = r$
            \State $a^*$ undefined
        \Else
            \State $a^* = \argmax_{u\in {\cal A}} Q(s',u)$
            \State $(TQ)(s,a,s') = r + \gamma Q(s',a^*)$
        \EndIf
        \State Define the Bellman gap
        \Statex \[
            \Delta(s,a) := Q(s,a) - (TQ)(s,a,s').
        \]

        \State Compute the gradient at $(s,a)$:
        \Statex \[
        \nabla_{Q(s,a)} L(Q) \;=\;
        \begin{cases}
        1 - \displaystyle \eta\,\dfrac{1}{\Delta(s,a) + \varepsilon},
        & \text{if } \Delta(s,a) > 0, \\[10pt]
        1 - \eta\,\nu,
        & \text{else if } \Delta(s,a) \le 0.
        \end{cases}
        \]

        \State Compute the gradient at $(s',a^*)$ (full gradient into the target):
        \Statex \[
        \nabla_{Q(s',a^*)} L(Q) \;=\;
        \begin{cases}
            0,
            & \text{if } s' \text{ is terminal}, \\[10pt]
            \displaystyle \eta\,\gamma\,\frac{1}{\Delta(s,a) + \varepsilon},
            & \text{else if } \Delta(s,a) > 0, \\[12pt]
            \eta\,\nu\,\gamma,
            & \text{else if } \Delta(s,a) \le 0.
        \end{cases}
        \]

        \State Update current state--action value:
        \Statex \[
            Q(s,a) \;\leftarrow\; Q(s,a) - \alpha\,\nabla_{Q(s,a)} L(Q).
        \]
        \State Update next greedy state--action value:
        \Statex \[
            Q(s',a^*) \;\leftarrow\; Q(s',a^*) - \alpha\,\nabla_{Q(s',a^*)} L(Q).
        \]

        \State $s \leftarrow s'$
    \EndWhile
\EndFor
\end{algorithmic}
\end{algorithm}

From a theoretical perspective, standard Q-learning is known to converge in the tabular case, as established in classical results~\citep{jaakkola1993convergence, tsitsiklis1994asynchronous}. In contrast, our log-barrier framework, as shown in~\cref{thm:bounds2} and~\cref{thm:convergence-GD}, also converges but exhibits a systematic bias governed by the barrier coefficient $\eta$. Conducting experiments in the tabular setting therefore provides a clear and practical illustration of these theoretical properties.

\subsubsection{$\epsilon$-greedy Sampling}
Therefore, we conducted an experiment in the $\epsilon$-greedy setting under the tabular case to compare standard Q-learning with log-barrier Q-learning. The results of this comparison are presented in~\cref{appfig:tab-greedy}.

\begin{figure}[H]
    \centering
    \includegraphics[width=0.95\linewidth]{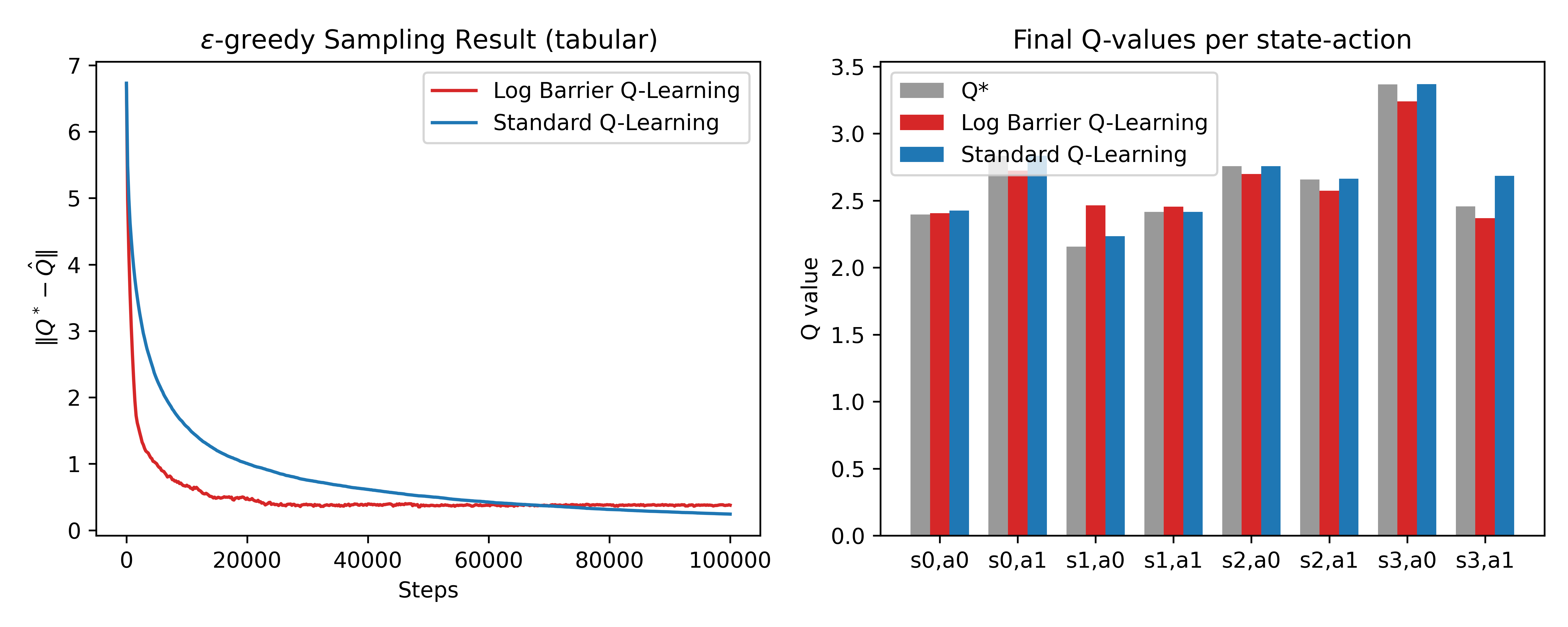}
    \caption{Experiment result under tabular setting using $\epsilon$-greedy sampling.}
    \label{appfig:tab-greedy}
\end{figure}

According to~\cref{appfig:tab-greedy}, the experimental results align well with our theoretical expectations.
While the standard Q-learning algorithm exhibits stable convergence in the tabular setting,
our log-barrier variant likewise converges stably but retains a small, systematic bias,
consistent with the analysis discussed above.
Nevertheless, the log-barrier method offers a clear advantage in terms of sampling efficiency:
it converges noticeably faster than standard Q-learning.
This highlights a natural trade-off between convergence speed and approximation bias,
which may motivate the choice of the log-barrier framework in scenarios where rapid learning is
preferable. In this experiment, the $\eta$ and $\nu$ are tuned as $\eta = 5 \times 10^{-5}$ and $\nu = 1.3 \times 10^{5}$, and any other hyperparameters are same as~\cref{apptab:hyperparams}.

\subsubsection{$\epsilon$-reverse-greedy sampling}

Similarly, we conducted an $\epsilon$-reverse-greedy sampling experiment to examine the behavior of both algorithms under tabular setting. The corresponding results are shown in~\cref{appfig:tab-reverse-greedy}.

\begin{figure}[H]
    \centering
    \includegraphics[width=0.95\linewidth]{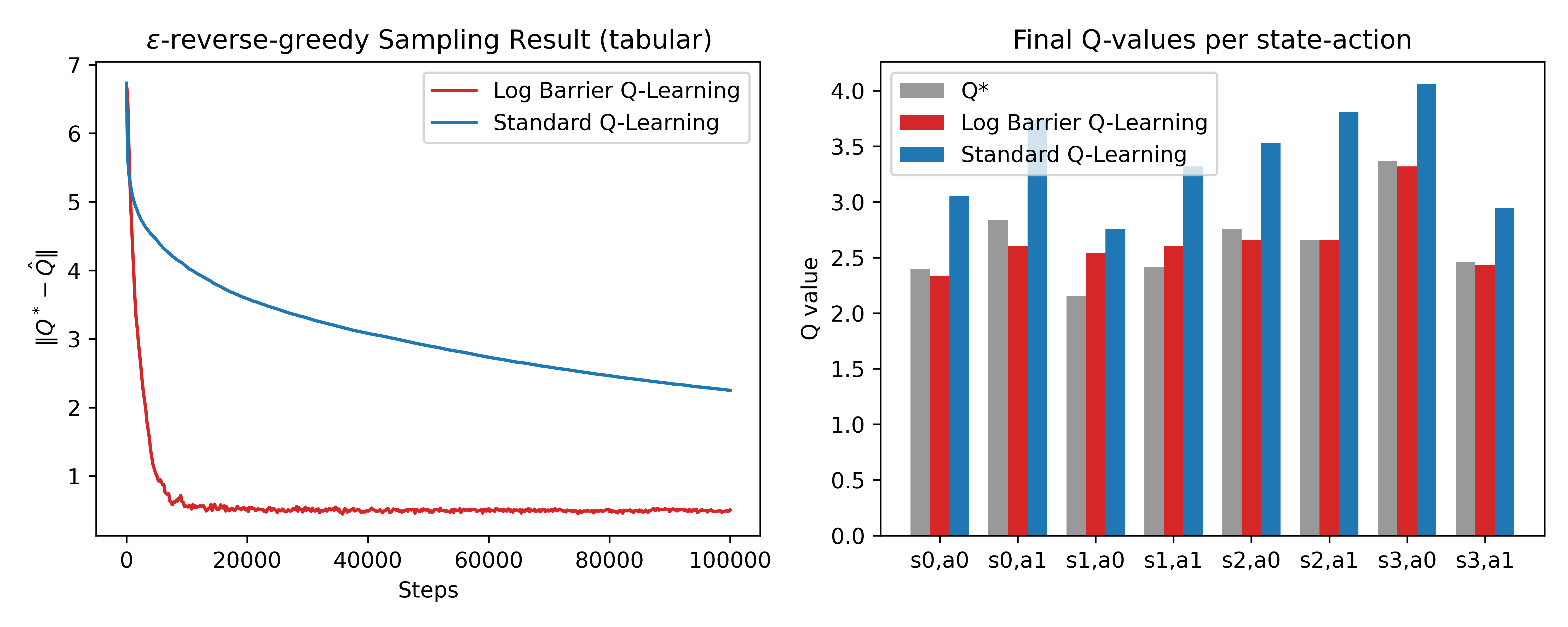}
    \caption{Experiment result under tabular setting using $\epsilon$-greedy sampling.}
    \label{appfig:tab-reverse-greedy}
\end{figure}

Under this setting, although standard Q-learning is guaranteed to converge, its convergence is noticeably slow due to the adverse sampling conditions. In contrast, our log-barrier approach exhibits coherent behavior, converging more rapidly while maintaining stable learning dynamics. In this experiment, we set the hyperparameters to $\eta = 5 \times 10^{-5}$ and $\nu = 6.0 \times 10^{5}$, and any other hyperparameters are same as~\cref{apptab:hyperparams}.

\section{Appendix: comparison of log-barrier Q-learning and primal-dual Q-learning}

In this short section, we briefly compare the performance of the proposed log-barrier Q-learning and a conventional stochastic primal–dual Q-learning method in a simple tabular setting. We use the OpenAI Gym \texttt{FrozenLake} environment with the \texttt{is\_slippery} option set to \texttt{True}, which induces stochasticity in the environment and makes it challenging to accurately compute the action–value function. The primal–dual Q-learning algorithm follows the approach of~\citet{lee2018stochastic}, which solves the MDP’s LP formulation using a first-order primal–dual method; this algorithm is applicable only in the tabular case. \cref{appfig:SPD-Q_vs_log-barrior-Q} below shows the return curves for the two algorithms.
\begin{figure}[H]
    \centering
    \includegraphics[width=0.95\linewidth]{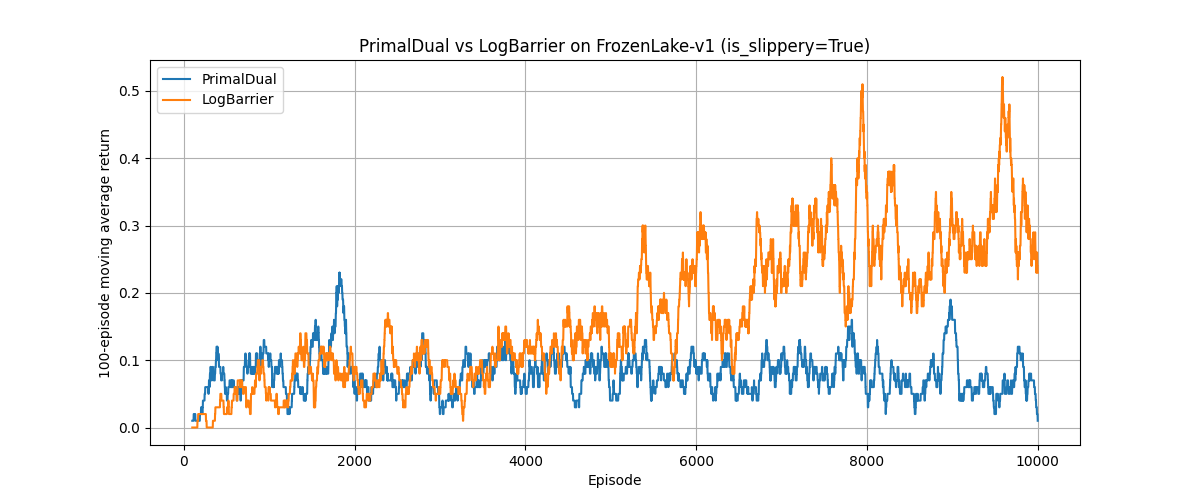}
    \caption{Return vs episodes plot of the proposed log-barrier Q-learning and stochastic primal-dual Q-learning~\citep{lee2018stochastic}}
    \label{appfig:SPD-Q_vs_log-barrior-Q}
\end{figure}
From the figure, we observe that log-barrier Q-learning achieves superior learning performance compared with the first-order primal–dual method. Note that the hyperparameters for each method were carefully tuned to give the best possible performance for that method.

\section{Appendix: ablation and additional experiments of deep learning variants}
As discussed in~\cref{app:remarks}, we found the proposed methods to be highly sensitive to the barrier coefficient~$\eta$. In this section, we present an ablation study over different choices of~$\eta$ and analyze the sensitivity of learning performance to this coefficient. In addition, we provide additional DDPG experiments that were omitted from the main text due to page limitations.

\subsection{Ablation study on deep learning variants}
To investigate the sensitivity of the barrier coefficient ($\eta$) in the log-barrier DQN and DDPG algorithms, we conducted a series of ablation studies by varying the value of $\eta$. For each Gymnasium environment used for log-barrier DQN, we performed a grid search over $\eta$ using task-specific ranges and step sizes rather than a single fixed grid across all tasks. For MuJoCo environments used for log-barrier DDPG, we test the coefficient $\eta$ ranged from 0.005 to 0.05 in increments of 0.005. As presented in~\cref{appfig:dqnab} and~\cref{appfig:ddpgab}, we found the barrier coefficient ($\eta$) to be a highly sensitive hyperparameter, particularly in MuJoCo environments.

\begin{figure}[H]
    \centering
    \includegraphics[width=0.95\linewidth]{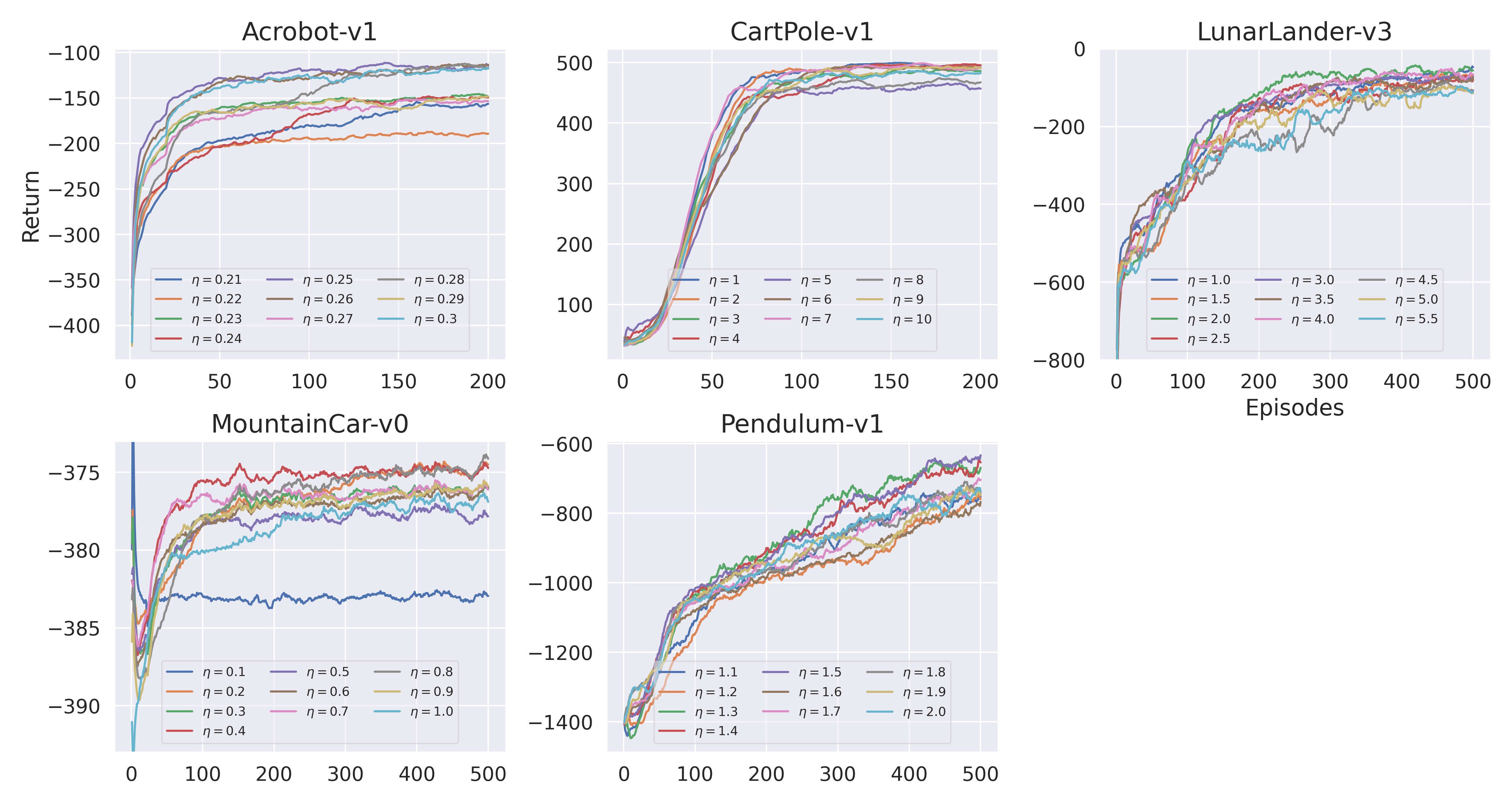}
    \caption{Ablation study of log-barrier DQN over the log-barrier coefficient $\eta$ across different environments.}
    \label{appfig:dqnab}
\end{figure}

\begin{figure}[H]
    \centering
    \includegraphics[width=0.95\linewidth]{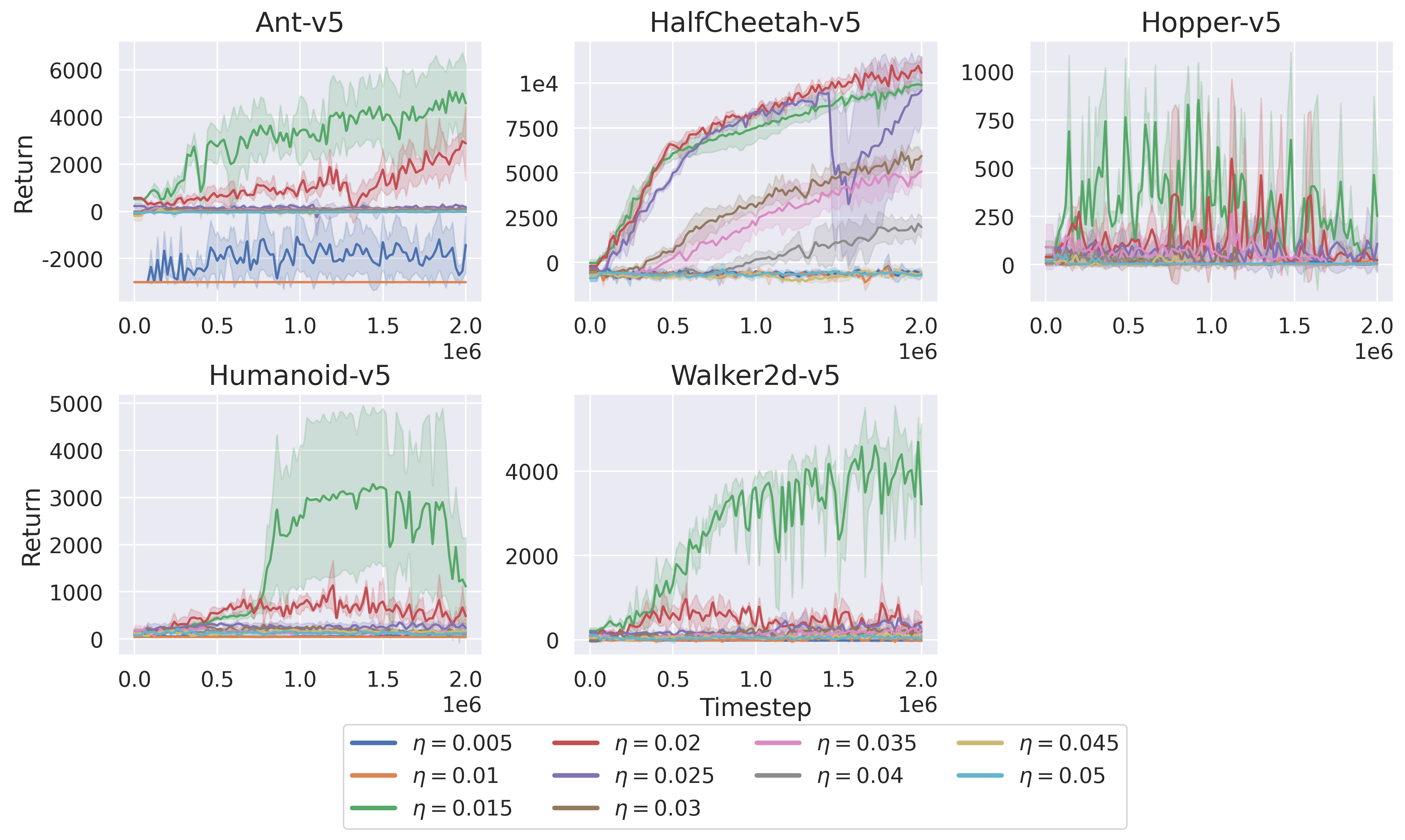}
    \caption{Ablation study of log-barrier DDPG over the log-barrier coefficient $\eta$ across different environments.}
    \label{appfig:ddpgab}
\end{figure}

\subsection{Additional experiments of log-barrier DDPG}
To further demonstrate the performance of log-barrier DDPG, we conducted additional experiments beyond those reported in Section~\ref{sec:experiments} and present the extended results below. We evaluate and compare the original DDPG with our proposed log-barrier DDPG across several Gymnasium-Robotics environments, using five different random seeds for each algorithm. The resulting performance curves are shown in~\cref{appfig:add_ddpg}, and the list of environments along with their final returns is summarized in~\cref{apptab:addDDPG}. Overall, log-barrier DDPG performs slightly better or at least comparably to the original DDPG in most environments.

From the results~\cref{appfig:add_ddpg} and~\cref{apptab:addDDPG}, the fact that \texttt{FetchPickAndPlaceDense}, \texttt{FetchPushDense}, and \texttt{FetchSlideDense} exhibit nearly identical returns for both algorithms may initially appear unusual. However, this behavior is expected once we consider that these environments compute rewards solely based on the distance between the movable object (“puck”) and the goal position. Since both algorithms completely fail to manipulate the object in these challenging tasks, the puck remains near its initial position, resulting in identical returns for both DDPG and log-barrier DDPG.

\begin{figure}[h]
    \centering
    \includegraphics[width=0.95\linewidth]{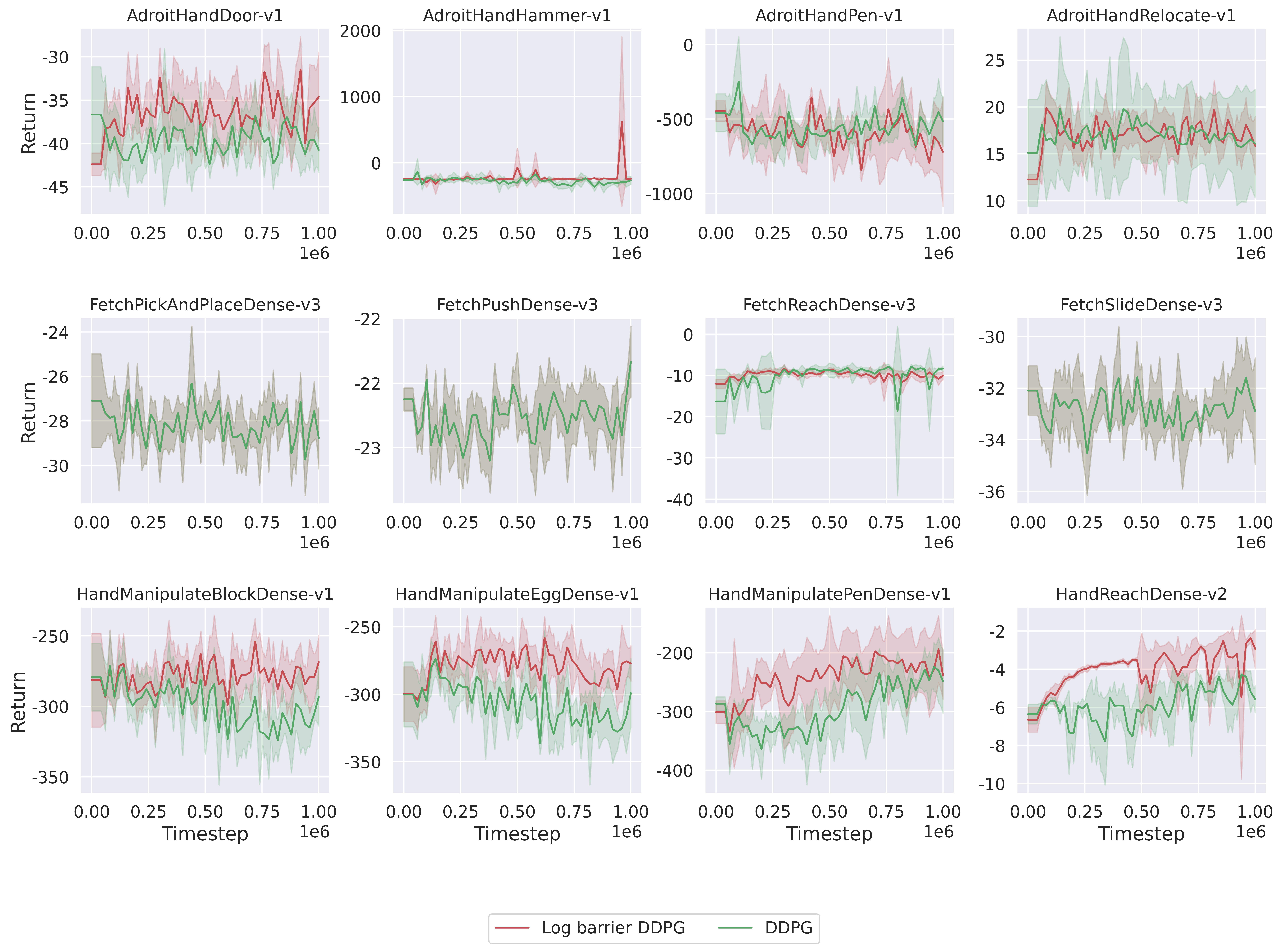}
    \caption{Additional experiments conducted in various Gymnasium-robotics environments.}
    \label{appfig:add_ddpg}
\end{figure}

\begin{table}[h]
    \centering
    \begin{tabular}{l c c}
        \toprule
        Environment & DDPG & Log-barrier DDPG \\
        \midrule
        AdroitHandDoor-v1 & -39.3 $\pm$ 1.3 & \textbf{-36.0 $\pm$ 2.0} \\
        AdroitHandHammer-v1 & -303.1 $\pm$ 22.0 & \textbf{-154.8 $\pm$ 129.4} \\
        AdroitHandPen-v1 & \textbf{-519.1 $\pm$ 56.1} & -635.9 $\pm$ 125.0 \\
        AdroitHandRelocate-v1 & 16.5 $\pm$ 5.0 & \textbf{17.3 $\pm$ 0.8} \\
        FetchPickAndPlaceDense-v3 & \textbf{-28.3 $\pm$ 0.4} & \textbf{-28.3 $\pm$ 0.4} \\
        FetchPushDense-v3 & \textbf{-22.7 $\pm$ 0.1} & \textbf{-22.7 $\pm$ 0.1} \\
        FetchReachDense-v3 & \textbf{-9.3 $\pm$ 1.0} & -10.2 $\pm$ 0.7 \\
        FetchSlideDense-v3 & \textbf{-32.5 $\pm$ 0.2} & \textbf{-32.5 $\pm$ 0.2} \\
        HandManipulateBlockDense-v1 & -308.7 $\pm$ 14.1 & \textbf{-278.8 $\pm$ 5.4} \\
        HandManipulateEggDense-v1 & -318.2 $\pm$ 14.4 & \textbf{-285.2 $\pm$ 5.5} \\
        HandManipulatePenDense-v1 & -248.8 $\pm$ 21.5 & \textbf{-223.1 $\pm$ 51.2} \\
        HandReachDense-v2 & -5.1 $\pm$ 0.5 & \textbf{-3.2 $\pm$ 0.6} \\
        \bottomrule
    \end{tabular}
    \caption{DDPG vs. Log-barrier DDPG Performance comparison across different environments.}
    \label{apptab:addDDPG}
\end{table}

\newpage
\section{Appendix: experiments of maximization bias}

In this section, we demonstrate that our log-barrier approach can effectively avoid overestimation, a phenomenon from which iteration-based algorithms such as Q-learning often suffer. To illustrate this, we employ the well-known maximization bias example introduced in~\citet{sutton1998reinforcement} where there exist two non-terminal states $\mathsf{A}$ and $\mathsf{B}$ at which two actions, $\mathsf{left}$ and $\mathsf{right}$, are admitted as illustrated in~\cref{fig:Sutton_example}. There also are two terminal states denoted with grey boxes and $\mathsf{A}$ is the initial state. Executing the action $\mathsf{right}$ leads to the termination of an episode with no rewards, while the action $\mathsf{left}$ leads to state $\mathsf{B}$ with no rewards.
There exist many actions at state $\mathsf{B}$, which will all immediately terminate the episode with a reward drawn from a normal distribution ${\cal N}(\mu,1)$. In our experiments, we consider $r\sim {\cal N}(\mu,1)$ with $\mu=-0.1$ .

\begin{figure}[h]
\centering
\includegraphics[width=0.6\linewidth]{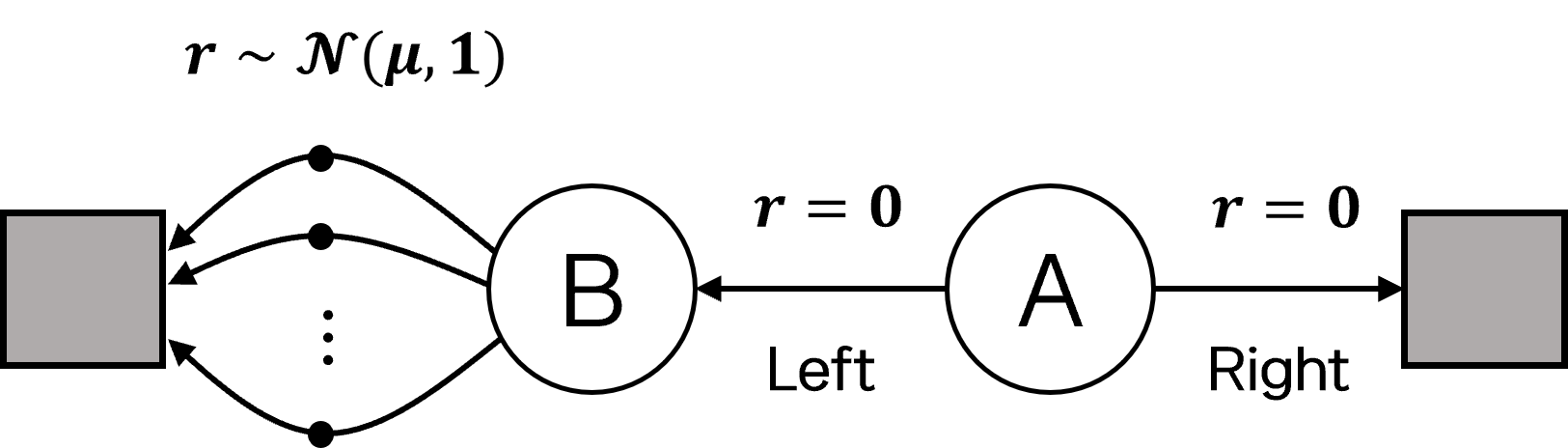}
\caption{Sutton's example for demonstrating overestimation bias}
\label{fig:Sutton_example}
\end{figure}

The implementations of Q-learning and double Q-learning follow the descriptions in~\citet{sutton1998reinforcement}, both using an $\epsilon$-greedy exploration strategy with $\epsilon = 0.1$. In this experiments, we adopt~\cref{alg:lb_ql_table} for the tabular log-barrier Q-learning method and configure its $\epsilon$-greedy behavior policy to match the same $\epsilon = 0.1$, ensuring a consistent exploration scheme across all methods.
For this experiment, we carefully tuned the hyperparameters of log-barrier Q-learning to $\eta = 0.0005$, $\nu = 2000$, and $\varepsilon = 0.01$. We emphasize that the parameter $\varepsilon$ follows the definition given in~\cref{appendix:DQN algo} and is strictly distinct from the exploration rate $\epsilon$ used in the behavior policy.

\begin{figure}[h]
    \centering
    \includegraphics[width=0.8\linewidth]{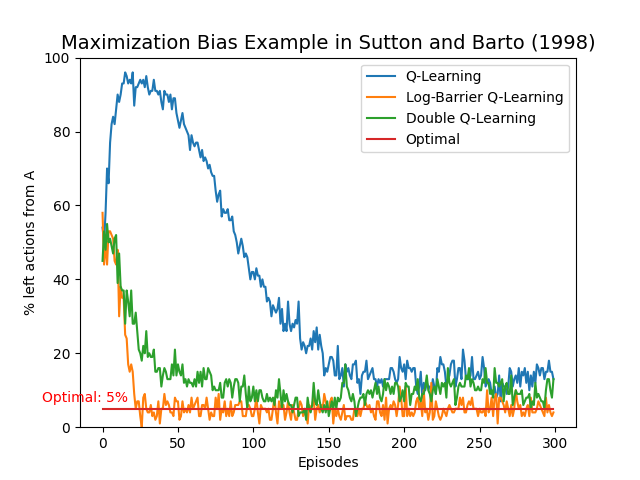}
    \caption{Comparison of maximization bias among Q-learning, log-barrier Q-learning, and double Q-learning. The implementations of Q-learning and double Q-learning strictly follow the descriptions in \citet{sutton1998reinforcement}.}

    \label{appfig:sutton}
\end{figure}

From~\cref{appfig:sutton}, we observe that our proposed log-barrier Q-learning effectively mitigates the overestimation issue. In contrast, original Q-learning exhibits substantial overestimation, as is well known, and double Q-learning, which was specifically developed to address this problem, nevertheless retains a small but non-negligible positive bias above the 5\% line even after convergence. By comparison, log-barrier Q-learning converges to the optimal line more rapidly and maintains unbiased action selection throughout training. Moreover, whereas double Q-learning requires two separate Q-tables, our approach is able to accurately estimate the optimal action using only a single Q-table.

\end{document}